\pdfoutput=1
\documentclass{article}

\PassOptionsToPackage{numbers}{natbib}
\usepackage[main, final]{neurips_2026}

\usepackage[utf8]{inputenc} %
\usepackage[T1]{fontenc}    %
\usepackage{hyperref}       %
\usepackage{url}            %
\usepackage{booktabs}       %
\usepackage{amsfonts}       %
\usepackage{nicefrac}       %
\usepackage{microtype}      %
\usepackage{xcolor}         %
\usepackage{amsmath}
\usepackage{amssymb}
\usepackage{mathtools}
\usepackage{amsthm}
\usepackage[capitalize,noabbrev]{cleveref}

\theoremstyle{plain}
\newtheorem{theorem}{Theorem}[section]
\newtheorem{proposition}[theorem]{Proposition}
\newtheorem{lemma}[theorem]{Lemma}
\newtheorem{corollary}[theorem]{Corollary}
\theoremstyle{definition}
\newtheorem{definition}[theorem]{Definition}

\theoremstyle{remark}
\newtheorem{remark}[theorem]{Remark}

\usepackage{xcolor, colortbl}
\colorlet{DistShade}{orange!8}
\colorlet{NStepShade}{cyan!8}
\colorlet{D4PGShade}{violet!8}
\colorlet{PureShade}{gray!6}
\colorlet{OursShade}{magenta!10}
\colorlet{GrayShade}{gray!6}
\colorlet{CyanShade}{cyan!8}
\colorlet{MagentaShade}{magenta!10}

\usepackage{graphicx}
\usepackage{algorithm}
\usepackage{algorithmic}
\usepackage{multirow}
\usepackage{caption}
\usepackage{subcaption}
\usepackage{makecell}
\usepackage{enumitem}
\usepackage{pifont}
\usepackage{wrapfig}
\usepackage{float}  
\usepackage{needspace}      %

\newcommand{\offon}[2]{\textcolor{gray}{#1}\,$\rightarrow$\,\textbf{#2}}
\newcommand{\pmv}[1]{\raisebox{0.2ex}{\scriptsize$\pm #1$}}
\newcommand{\cmark}{\ding{51}}
\newcommand{\xmark}{\ding{55}}

\title{Variance-Averse $n$-Step Offline Reinforcement Learning for Sparse Long-Horizon Environments}

\author{%
  \textbf{Guhyeon Kang}, \textbf{Minhae Kwon}\thanks{Corresponding author} \\
   Department of Electrical and Computer Engineering\\ Sungkyunkwan University, Republic of Korea \\
  \texttt{\{guhyeon.kang, minhae.kwon\}@skku.edu} \\
}

\begin{document}

\maketitle

\begin{abstract}
Generative actors are transforming offline reinforcement learning (RL) by enabling expressive policy classes that model complex action distributions. However, this expressiveness also exposes a key challenge in heterogeneous datasets: generative policies can reproduce \emph{unreliable action modes} whose return distributions exhibit high variance, occasionally yielding high returns by chance but lacking consistency. Consequently, maximizing the expected $Q$-value alone is insufficient for identifying \emph{reliable actions}.
We propose {\color{magenta}\textbf{VAN-Flow}} (\emph{Variance-Averse $n$-step Flow}), a framework that promotes reliable actions in generative offline RL. VAN-Flow combines (i) a categorical distributional critic, (ii) a \emph{variance-averse expectation} operator that smoothly reweights atom probabilities to favor actions with both high returns and low dispersion, and (iii) a flow-matching generative actor guided via rejection sampling. Unlike CVaR or mean--variance objectives, the operator redistributes probability mass over the categorical return distribution without hard truncation or auxiliary penalty terms.
Across more than $40$ tasks from D4RL and OGBench, VAN-Flow consistently outperforms strong baselines, with the largest gains in long-horizon and high-variance regimes where reliable action selection becomes critical.\footnote{Code: \url{https://github.com/9hyeon1225/van}}  %

\end{abstract}
\section{Introduction}\label{sec:introduction}
Generative actors, both diffusion-based~\cite{wang2023diffusion,chi2023diffusion} and flow-matching policies~\cite{park2025fql,li2025qchunking} are reshaping offline reinforcement learning (RL) and increasingly serve as the backbone of foundation models for robotics and physical AI. Their strength is representational: they capture multi-modal action distributions that Gaussian policies cannot. However, this very flexibility becomes a liability in offline RL. Offline datasets are typically collected from heterogeneous behavior policies of varying quality~\cite{10343100,10610308}, so the same state–action pair may yield drastically different outcomes, success in some trajectories and failure in others. A generative actor faithfully fits the dataset distribution and therefore equally represents both reliable and unreliable modes, including those that yield high returns by chance but lack consistency. Without explicit guidance, the actor may converge toward such unreliable modes~\cite{eom2024selective}, especially in sparse-reward, long-horizon tasks where successful trajectories are rare and even small amounts of unreliable behavior dominate value estimation. This raises a fundamental design question. 
\begin{center}
\textit{How can we guide a generative actor to selectively learn the reliable behaviors in an offline dataset?}
\end{center}

We argue that reliable behaviors are those with \emph{low return variance under the dataset}: a state–action pair whose returns are consistent across observed trajectories is more dependable than one that occasionally yields high returns by chance. A natural concern is whether this conflates reliability with mere safety, since a trivial action with consistently low returns is also low-variance. We address this by jointly favoring \emph{high} return and low variance, so the actor is steered toward modes that are simultaneously rewarding and dependable, not merely safe. Crucially, expected $Q$-value alone cannot capture reliability: two state–action pairs with identical $\mathbb{E}[Q]$ may differ drastically in dispersion, and a regression-based critic exposes only the mean. Distributional value estimation~\cite{bellemare2017distributional,dabney2018quantile,dabney2018iqn} addresses this gap by modeling the full return distribution.

To exploit this structure, we introduce a \textbf{\color{magenta}variance-averse expectation} $\mathcal{E}(\cdot)$, an aggregation operator over categorical return distributions that smoothly reweights atom probabilities to jointly favor high return and low variance. Rather than defining a new risk-sensitive objective, $\mathcal{E}(\cdot)$ is designed for action selection in generative offline RL, where a policy produces multiple candidate actions that must be evaluated using a distributional critic. 
While risk-sensitive RL has studied measures such as CVaR~\cite{dabney2018iqn}
and entropic risk to encode risk preferences in control problems,
our goal is different: improving \emph{reliability} when selecting
among candidate actions generated by a multimodal policy.
Unlike tail-truncation approaches such as CVaR, $\mathcal{E}(\cdot)$
smoothly redistributes probability mass across the categorical
support, favoring actions whose returns are both high and consistent.

\begin{wrapfigure}{t}{0.5\linewidth}
    \vspace{-1.5em}
    \centering
    \includegraphics[width=\linewidth]{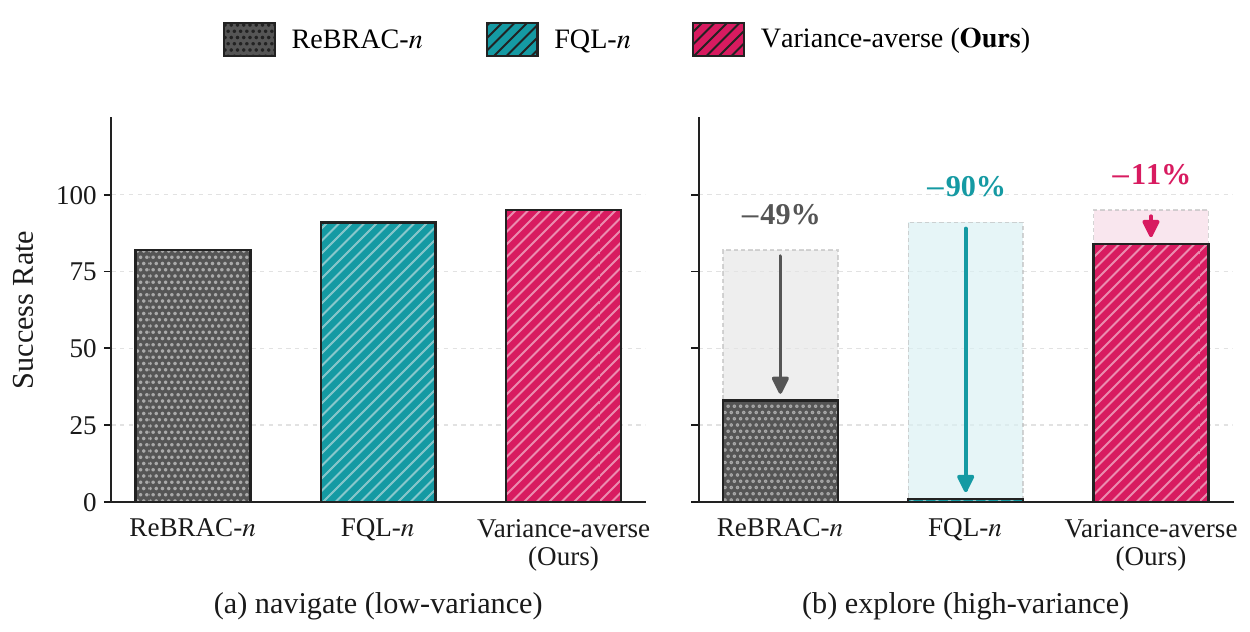}
    \caption{Success rate on \texttt{antmaze-large} under two dataset variance regimes: (a) \texttt{navigate} (low-variance) and (b) \texttt{explore} (high-variance). In (b), the shaded bar denotes the low-variance reference, and the downward arrow indicates the performance drop.}
    \label{fig:intro}
    \vspace{-1em}
\end{wrapfigure}

This framework is especially impactful in $n$-step learning. While $n$-step returns 
reduce bootstrapping bias~\cite{kearns2000bias,park2025horizon}, they amplify 
return variance under heterogeneous data~\cite{precup2000eligibility,sutton2018rl,ren2021nearly}, 
widening the gap between reliable and unreliable behaviors. Figure~\ref{fig:intro} 
confirms this: on the high-variance \texttt{explore} dataset, the flow-based method 
(FQL-n) suffers a markedly larger drop than its Gaussian counterpart (ReBRAC-n) under 
the same $n$-step setting, exposing the very vulnerability of generative actors 
discussed above. In contrast, our reliability-driven $n$-step training preserves 
performance with only marginal degradation.

We instantiate these ideas in \textbf{\color{magenta}VAN-Flow} (Variance-Averse $n$-step Flow), which integrates: (i) a categorical distributional critic that exposes return variance, (ii) the variance-averse expectation $\mathcal{E}(\cdot)$ for reliability scoring, and (iii) a flow-matching actor guided by $\mathcal{E}(\cdot)$ via rejection sampling, which re-ranks candidate actions so that the final selection aligns with the reliable subset of the dataset.

Our main contributions are as follows. 
1) We identify a fundamental vulnerability of generative actors in offline RL: under heterogeneous datasets, they tend to reproduce unreliable action modes whose return distributions exhibit high variance. We formalize \emph{reliable behavior} through the dispersion of dataset return distributions, showing that expected $Q$-values alone are insufficient for guiding generative policies. 
2) We propose a novel variance-averse expectation $\mathcal{E}(\cdot)$ for categorical return distributions, which smoothly reweights atom probabilities to jointly favor high-return and low-variance actions without hard truncation or auxiliary penalty terms. 
3) We develop \textbf{VAN-Flow}, a unified framework combining a flow-matching actor, a categorical distributional critic, and $\mathcal{E}(\cdot)$-guided rejection sampling. 
4) Across more than $40$ tasks from D4RL and OGBench, VAN-Flow consistently outperforms strong baselines, with the largest gains in high-variance and long-horizon regimes. Extensive ablations and sensitivity analyses further confirm the effectiveness of each component and highlight the importance of reliability-aware action selection in $n$-step offline RL. To the best of our knowledge, this is the first work to use variance-averse aggregation over categorical return distributions to select reliable actions for generative actors in offline RL, and to show empirically that this matters for stable $n$-step learning under heterogeneous data.

\section{Preliminaries}
\paragraph{Markov Decision Process.} 
We consider an MDP $\mathcal{M} = (\mathcal{S}, \mathcal{A}, P, r, \gamma)$ with state $\boldsymbol{s}_t \in \mathcal{S}$, action $\boldsymbol{a}_t \in \mathcal{A}$, transition $P$, reward $r$, and discount $\gamma \in [0,1)$. The objective is to learn a policy $\pi(\cdot \mid \boldsymbol{s}_t)$ that maximizes the expected return $\mathcal{J}(\pi) = \mathbb{E}_\pi[G_t^{(\infty)}] = \mathbb{E}_\pi[\sum_{k=t}^{\infty}\gamma^{k-t}r_k]$.

\paragraph{Offline RL with $n$-step Return.} 
In offline RL, learning relies on a fixed dataset~\cite{lee2023survey} $\mathcal{D} = \{(\boldsymbol{s}_t, \boldsymbol{a}_t, G_t^{(n)}, \boldsymbol{s}_{t+n})\}_{t=1}^{N}$, where the $n$-step return $G_t^{(n)}=\sum_{k=t}^{t+n-1}\gamma^{k-t}r_k$ approximates $G_t^{(\infty)}$. Recent work shows that $n$-step returns improve performance in sparse-reward settings~\cite{park2025leq,burns2023multifrequency}. In the actor--critic setup, the critic $Q_\psi$ is trained against the $n$-step Bellman target,
\begin{align}
\mathcal{T}^{(n)}Q_\psi(\boldsymbol{s}_{t},\boldsymbol{a}_t)=G_{t}^{(n)}+\gamma^n Q_\psi(\boldsymbol{s}_{t+n},\boldsymbol{a}_{t+n}^\pi),
\label{eq:nRL_Bellman}
\end{align}
with $\boldsymbol{a}_{t+n}^\pi \sim \pi_\theta(\cdot \mid \boldsymbol{s}_{t+n})$, and the actor maximizes $\mathbb{E}_{\boldsymbol{s}_t}[Q_\psi(\boldsymbol{s}_t, \boldsymbol{a}_t^\pi)]$.

\paragraph{Categorical Value Estimation.} 
A categorical critic $Z_\psi(\boldsymbol{s}_t, \boldsymbol{a}_t)$ models the full return distribution over $I$ fixed atoms $\{z_i\}_{i=1}^{I}$ with probabilities $\{p_i\}_{i=1}^{I}$, exposing variance structure that a regression-based critic cannot, a property we exploit in Section~\ref{sec:vae}. The expected return recovers as $\mathbb{E}[G_t^{(\infty)}] = \sum_i p_i z_i$. The critic is trained via cross-entropy against the $n$-step distributional Bellman target.
\begin{align}
\mathcal{T}_z^{(n)}Z_{\psi}(\boldsymbol{s}_t, \boldsymbol{a}_t)=\Phi\Big(\sum_{i=1}^{I} p_i(\boldsymbol{s}_{t+n}, \boldsymbol{a}_{t+n}^\pi)\,\delta_{G_t^{(n)}+\gamma^n z_i}\Big)
\label{eq:DnRL_BO}
\end{align}
Here, $p_i(\boldsymbol{s}, \boldsymbol{a})$ is the probability that the critic assigns to atom $z_i$ and $\delta_{y}$ is the Dirac measure at $y$ (not to be confused with the aversion coefficient $\delta$ introduced in Section~\ref{sec:vae}). In words, each atom $z_i$ is shifted to $G_t^{(n)}+\gamma^n z_i$ while keeping its probability mass, and the shifted distribution is projected back onto the fixed support by the projection operator $\Phi(\cdot)$. We employ the projection method introduced in~\cite{bellemare2017distributional} (see Appendix~\ref{app:C}).

\paragraph{Flow-matching Policy.} 
Flow matching parameterizes a stochastic policy through a state-conditioned velocity field $v^\pi_\theta(\tau, \boldsymbol{s}_t, \boldsymbol{x}_t(\tau))$ defining the ODE, 
\begin{equation}
\frac{d\boldsymbol{x}_t(\tau)}{d\tau} = v^\pi_\theta(\tau, \boldsymbol{s}_t, \boldsymbol{x}_t(\tau)), \quad \boldsymbol{x}_t(0)=\boldsymbol{\epsilon},
\label{eq:flow_ode}
\end{equation}
with $\boldsymbol{\epsilon} \sim \mathcal{N}(\mathbf{0},\mathbb{I})$. The trajectory linearly interpolates noise and action,
\begin{equation}
\boldsymbol{x}_t(\tau) = (1-\tau)\boldsymbol{\epsilon} + \tau \boldsymbol{a}_t,
\label{eq:flow_interpolation}
\end{equation}
and the action is taken as $\boldsymbol{a}_t = \boldsymbol{x}_t(1)$. The velocity field is trained by regressing onto $\boldsymbol{a}_t - \boldsymbol{\epsilon}$~\cite{park2025fql}.

\section{Variance-averse Expectation with Categorical Value Estimation}\label{sec:vae}
\begin{wrapfigure}{r}{0.55\linewidth}
    \vspace{-0.5cm}
    \centering
\includegraphics[width=\linewidth]{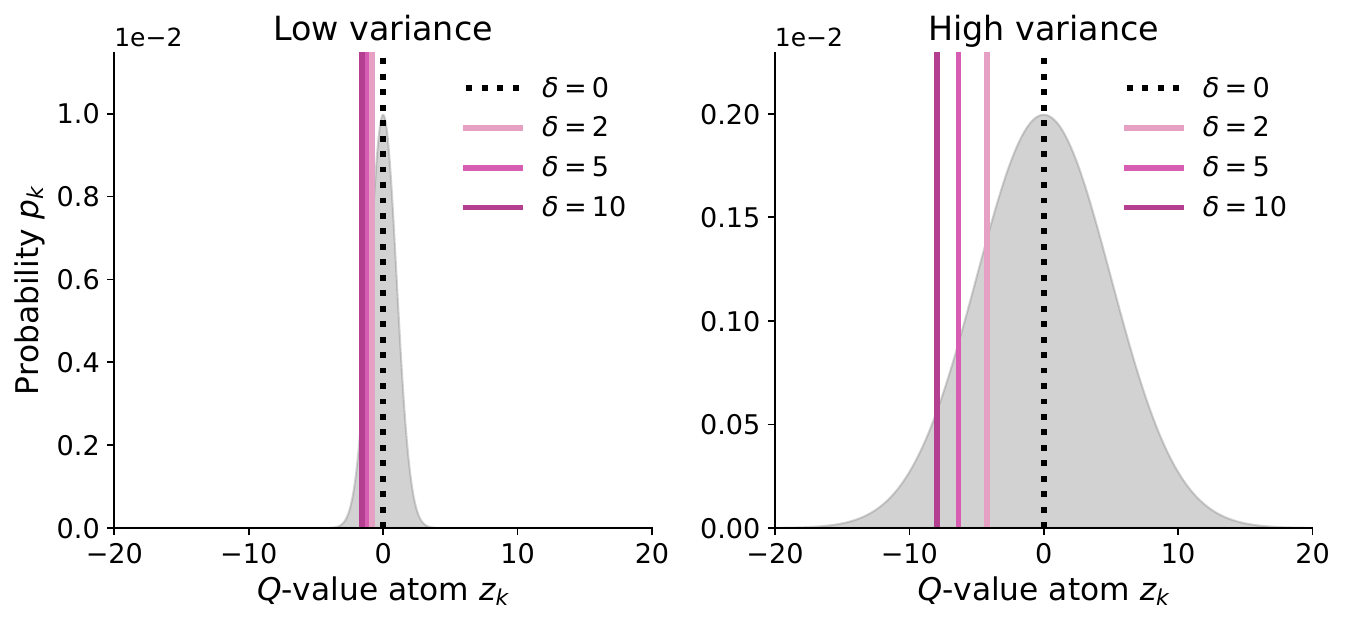}
    \caption{Example of variance-averse return with varying $\delta$ and return variance.}
    \label{fig:VaE}
    \vspace{-1.5em}
\end{wrapfigure}

The key contribution of this work is the proposal of a \textbf{\color{magenta}variance-averse expectation} that penalizes high-variance return distributions, where variance refers to the dispersion of the return distribution captured by the categorical critic. Such variance would otherwise undermine the effectiveness of the $n$-step method.

\begin{definition}[Variance-averse Expectation $\mathcal{E}(Z)$] \label{def:VaE}
    Let $Z$ be a categorical return distribution with $\mathbb{P}(Z=z_i)=p_i$ over ordered atoms $z_1<\cdots<z_I$. The {\color{magenta}variance-averse expectation $\mathcal{E}(Z)$} is defined as follows.
\begin{equation}
\mathcal{E}(Z)
= \sum_{i=1}^{I}\underbrace{\left(\frac{p_i\Big(1-\mathcal{C}\left(z_i\right)\Big)^{\delta}}
{\sum_{j=1}^{I}p_j\left(1-\mathcal{C}\left(z_j\right)\right)^\delta}\right)}_{\color{magenta}\text{variance-averse probability}}z_i \label{eq:VaE}
\end{equation}
\end{definition}
In~\eqref{eq:VaE}, $\mathcal{C}(z_i) = \mathbb{P}(Z \le z_i)$ denotes the cumulative distribution function (CDF) evaluated at $z_i$, and $\delta \ge 0$ controls the degree of aversion. As $\delta$ increases, the variance-averse expectation increasingly penalizes high-variance return distributions. When $\delta = 0$, the operator reduces to the standard expectation, i.e., $\mathcal{E}(Z) = \mathbb{E}[Z]$. An illustrative example of $\mathcal{E}(Z)$ for different values of $\delta$ is presented in Figure~\ref{fig:VaE}.\footnote{We discuss design considerations including the role of the support boundary and numerical stability in Appendix~\ref{app:b2}.}

While the variance-averse expectation $\mathcal{E}(Z)$ in Definition~\ref{def:VaE} focuses on the lower-return regions of the distribution, its underlying role is to penalize high-variance return distributions. We formalize this through the convex order, denoted by $\le_{\mathrm{cx}}$, which provides a distributional notion of dispersion beyond second-order moments. Specifically, for two return distributions $Z^{\dagger}$ and $Z^{\circ}$ with identical expectations, the relation $Z^{\dagger} \le_{\mathrm{cx}} Z^{\circ}$ indicates that $Z^{\circ}$ is more dispersed than $Z^{\dagger}$, meaning that its probability mass is spread further away from the mean. The operator in~\eqref{eq:VaE} is a finite-atom discretization of a \emph{spectral} functional of the quantile function of $Z$. The following theorem establishes variance aversion exactly for that functional and bounds its gap to the implemented operator.

\begin{theorem}[Variance aversion of $\mathcal{E}$]
\label{thm:VaE_mps}
Let $Z$ be a categorical return distribution as in Definition~\ref{def:VaE}, write $\bar{\mathcal{C}}(z_i)=1-\mathcal{C}(z_i)$ for its survival function with $\bar{\mathcal{C}}(z_0)=1$, and define the spectral form
\begin{equation}
\mathcal{E}^{\mathrm{sp}}(Z)
=\sum_{i=1}^{I}\Big[\bar{\mathcal{C}}(z_{i-1})^{\delta+1}-\bar{\mathcal{C}}(z_i)^{\delta+1}\Big]\,z_i
=(\delta+1)\int_0^1 \Omega_Z(u)\,(1-u)^{\delta}\,du,
\label{eq:VaE_spectral}
\end{equation}
where $\Omega_Z$ is the quantile function of $Z$. Then, for any $\delta>0$:
\begin{enumerate}[label=(\roman*),leftmargin=*,nosep]
\item \textbf{(Convex-order monotonicity)} If $\mathbb{E}[Z^{\dagger}]=\mathbb{E}[Z^{\circ}]$ and $Z^{\dagger}\le_{\mathrm{cx}}Z^{\circ}$, then $\mathcal{E}^{\mathrm{sp}}(Z^{\circ})\le\mathcal{E}^{\mathrm{sp}}(Z^{\dagger})$.
\item \textbf{(Discretization error)} The operator $\mathcal{E}$ in~\eqref{eq:VaE} is the normalized right-endpoint discretization of~\eqref{eq:VaE_spectral}, and with $R=z_I-z_1$,
\begin{equation}
\big|\mathcal{E}(Z)-\mathcal{E}^{\mathrm{sp}}(Z)\big|
\;\le\; R\,\varepsilon_\delta(Z),
\qquad
\varepsilon_\delta(Z):=1-(\delta+1)\sum_{i=1}^{I}p_i\,\bar{\mathcal{C}}(z_i)^{\delta}\in[0,1],
\label{eq:VaE_gap}
\end{equation}
where $\varepsilon_\delta(Z)\le(\delta+1)\max_i p_i$ and, for $\delta\ge1$, $\varepsilon_\delta(Z)\le\delta(\delta+1)\sum_{i}p_i^{2}$. Consequently, under the assumptions of (i),
\begin{equation}
\mathcal{E}(Z^{\circ})\;\le\;\mathcal{E}(Z^{\dagger})+R\big[\varepsilon_\delta(Z^{\dagger})+\varepsilon_\delta(Z^{\circ})\big].
\label{eq:VaE_transfer}
\end{equation}
If $Z=\Phi(Y)$ is the C51 projection of a target $Y$ supported on $[V_{\min},V_{\max}]$ with a Lipschitz-continuous CDF, then $\max_i p_i=O(1/I)$ and $\varepsilon_\delta(Z)=O(1/I)$, so the gap vanishes as the number of atoms $I$ increases.
\end{enumerate}
\end{theorem}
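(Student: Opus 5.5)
I would first establish the integral identity in \eqref{eq:VaE_spectral}, since the rest depends on it. The quantile function $\Omega_Z$ is the step function equal to $z_i$ on $(\mathcal{C}(z_{i-1}),\mathcal{C}(z_i)]$, with $\mathcal{C}(z_0)=0$. Integrating $(\delta+1)(1-u)^\delta$ over that interval gives exactly $\bar{\mathcal{C}}(z_{i-1})^{\delta+1}-\bar{\mathcal{C}}(z_i)^{\delta+1}$. Hence $\mathcal{E}^{\mathrm{sp}}$ is a spectral (distortion) risk functional $\int_0^1 \Omega_Z(u)\phi(u)\,du$ with weight $\phi(u)=(\delta+1)(1-u)^\delta$. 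This weight is nonnegative, integrates to $1$, and is nonincreasing for $\delta>0$.

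For (i), I would write $\phi(u)=\int_u^1(-\phi'(s))\,ds+\phi(1)$ with $\phi(1)=0$ and apply Fubini. This gives
\[\mathcal{E}^{\mathrm{sp}}(Z)=\int_0^1 (-\phi'(s))\int_0^s\Omega_Z(u)\,du\,ds.\]
The inner integral is $s$ times the lower-tail average, i.e.\ the negated CVaR-type quantity. The standard characterization of convex order with equal means states that $Z^\dagger\le_{\mathrm{cx}}Z^\circ$ iff $\int_0^s\Omega_{Z^\circ}\le\int_0^s\Omega_{Z^\dagger}$ for all $s$ (equality at $s=1$). I would cite this rather than re-prove it, e.g.\ via $\mathbb{E}(t-Z)_+$ and Legendre duality. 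Since $-\phi'\ge0$, integrating the pointwise inequality against it yields (i).

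For (ii), I would rewrite $\mathcal{E}(Z)=\sum_i w_i z_i/\sum_j w_j$ with $w_i=p_i\bar{\mathcal{C}}(z_i)^\delta$. By the mean value theorem, the exact weight is $q_i=\bar{\mathcal{C}}(z_{i-1})^{\delta+1}-\bar{\mathcal{C}}(z_i)^{\delta+1}=(\delta+1)p_i\xi_i^\delta$ with $\xi_i\in[\bar{\mathcal{C}}(z_i),\bar{\mathcal{C}}(z_{i-1})]$. So $(\delta+1)w_i\le q_i$, which is the right-endpoint discretization. Let $S=(\delta+1)\sum_i w_i$, so $S\le\sum_i q_i=1$ and $\varepsilon_\delta=1-S\in[0,1]$.

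Both $\tilde w=(\delta+1)w/S$ and $q$ are probability vectors. Centering at $z_1$ then bounds the difference by $R\,\|\tilde w-q\|_{TV}$. Writing $\tilde w-q=((\delta+1)w-q)+(\delta+1)w(1/S-1)$, each part has $\ell_1$ mass $\varepsilon_\delta$, and the difference of two probability vectors has total variation at most half of this, giving $R\,\varepsilon_\delta$. The main obstacle is getting this constant right without picking up a factor of $2$; if needed, I would argue directly that $\tilde w_i\ge q_i$ fails only in the upper region and use a one-sided coupling.

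For the bounds on $\varepsilon_\delta$, I would write $1-S=\sum_i(q_i-(\delta+1)w_i)$. For each term, $q_i-(\delta+1)p_i\bar{\mathcal{C}}(z_i)^\delta\le(\delta+1)p_i[\bar{\mathcal{C}}(z_{i-1})^\delta-\bar{\mathcal{C}}(z_i)^\delta]$. The bracketed differences telescope to at most $1$, which gives $\varepsilon_\delta\le(\delta+1)\max_i p_i$. For $\delta\ge1$, $x^\delta$ is $\delta$-Lipschitz on $[0,1]$, so each bracket is at most $\delta p_i$, giving $\varepsilon_\delta\le\delta(\delta+1)\sum_i p_i^2$.

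The transfer inequality \eqref{eq:VaE_transfer} then follows by the triangle inequality, using (i) together with the gap bound applied to both distributions. Finally, for the C51 projection of a target with $L$-Lipschitz CDF, each atom receives mass only from an interval of width at most $2\Delta z=2R/(I-1)$. This gives $p_i\le 2LR/(I-1)=O(1/I)$, and hence $\varepsilon_\delta=O(1/I)$.
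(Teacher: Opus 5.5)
Your proposal is correct and follows essentially the same route as the paper. The shared steps are the piecewise integral identity; Fubini against $-\phi'(s)=(\delta+1)\delta(1-s)^{\delta-1}$ (the paper uses the equivalent integration by parts) combined with the quantile characterization of the convex order; the right-endpoint sandwich $(\delta+1)p_i\bar{\mathcal{C}}(z_i)^{\delta}\le q_i\le(\delta+1)p_i\bar{\mathcal{C}}(z_{i-1})^{\delta}$; the same two-part decomposition of $\tilde w-q$; and the same telescoping and Lipschitz bounds on $\varepsilon_\delta$. Your factor-of-$2$ worry is unfounded: $\|\tilde w-q\|_1\le 2\varepsilon_\delta$ gives total variation at most $\varepsilon_\delta$, and since $\sum_i(\tilde w_i-q_i)=0$ and $z_i-z_1\in[0,R]$, the value gap is at most $R$ times the total variation, which is exactly the bound the paper gets by centering at $(z_1+z_I)/2$.
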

Part (i) states that the spectral form prefers the less dispersed of two equal-mean return distributions, and part (ii) states that the implemented operator inherits this preference up to a gap governed by how concentrated the critic's mass is on individual atoms. In particular, whenever the convex-order margin $\mathcal{E}^{\mathrm{sp}}(Z^{\dagger})-\mathcal{E}^{\mathrm{sp}}(Z^{\circ})$ exceeds the error budget in~\eqref{eq:VaE_transfer}, $\mathcal{E}$ induces the same ordering as the spectral form (Corollary~\ref{cor:ordering}). The certificate $\varepsilon_\delta(Z)$ is computable from the critic output. On trained critics the realized gap is small (median $0.37\%$ of $R$), and $\mathcal{E}$ and $\mathcal{E}^{\mathrm{sp}}$ select the same rejection-sampling candidate in $96.9\%$ of states with Kendall's $\tau=0.99$ (Appendix~\ref{app:b3}). Proofs are provided in Appendices~\ref{app:b1} and~\ref{app:b3}.

\section{VAN-Flow: Variance-averse $n$-step RL with Flow-based Policy}
\begin{wrapfigure}{R}{0.5\linewidth}  %
\vspace{-3\intextsep}                  %
\begin{minipage}{\linewidth}
\begin{algorithm}[H]
\caption{\textbf{\color{magenta}VAN-Flow}}
\label{alg:main}
\footnotesize
\begin{algorithmic}[1]
\STATE \textbf{Input:} Dataset $\mathcal{D}$, networks $Z_\psi$, $v^\pi_\theta$
\STATE \textbf{Hyperparameters:} discount $\gamma$, return-step $n$, aversion intensity $\delta$, num of candidates $M$, flow steps $K$, flow-matching $\lambda$, learning rates $\alpha_{Z}, \alpha_{\pi}$, total epochs $E$
\FOR{$e = 1$ to $E$}
    \STATE Sample mini-batch $B$ from $\mathcal{D}$
    \STATE Generate target action $\boldsymbol{a}_{t+n}^*$ via rejection sampling~\eqref{eq:rejection_sampling}
    \STATE Compute target $\mathcal{T}_z^{(n)}Z_\psi$ by~\eqref{eq:proposed_criticBB}
    \STATE Compute critic loss $\mathcal{L}(\psi)$ by~\eqref{eq:pro_critic_loss}
    \STATE Compute actor loss $\mathcal{L}(\theta)$ by~\eqref{eq:pro_actor_loss}
    \STATE $\psi\leftarrow\psi-\alpha_Z\nabla_\psi\mathcal{L}(\psi)$
    \STATE $\theta\leftarrow\theta-\alpha_\pi\nabla_\theta\mathcal{L}(\theta)$
\ENDFOR
\end{algorithmic}
\end{algorithm}
\end{minipage}
\vspace{-1.5\intextsep}                %
\end{wrapfigure}

Building on the variance-averse expectation in Definition~\ref{def:VaE}, we propose \textbf{\color{magenta}VAN-Flow}, a flow-based offline RL framework that operationalizes \emph{reliable-action selection} for generative actors and exploits the benefits of $n$-step returns even under high-variance datasets. The flow-based architecture is motivated by the strong representational capacity of flow matching~\cite{chen2024deep,zhang2025energyflow}, which helps mitigate OOD action generation in offline RL~\cite{kang2023diffusion,park2025fql}. Furthermore, we introduce a variance-averse value--guided flow-matching policy that effectively leverages the proposed variance-averse expectation. In this section, we first describe the design of the flow-based actor and then provide implementation details of the proposed method.
\needspace{18\baselineskip}   %
\subsection{Flow-based Actor Design}
One distinctive feature of our flow-based actor is the use of rejection sampling based on the variance-averse expectation. In this subsection, we describe the action construction procedure and the corresponding rejection sampling mechanism.

\paragraph{Flow-based Action Construction.}
We design an action generation mechanism that directly exploits the learned velocity field. Rather than explicitly solving the continuous-time ODE in~\eqref{eq:flow_ode}, our approach constructs actions via a lightweight discretization of the flow trajectory. Specifically, given the interpolated point $\boldsymbol{x}_t(\tau)$ defined in~\eqref{eq:flow_interpolation}, the policy evaluates the velocity field $v^\pi_\theta(\tau, \boldsymbol{s}_t, \boldsymbol{x}_t(\tau))$ conditioned on the current state $\boldsymbol{s}_t$. The action $\boldsymbol{a}_t^\pi$ is then obtained by advancing $\boldsymbol{x}_\tau$ along the predicted flow using an Euler update.
\begin{equation}
\boldsymbol{a}^\pi_t
=\boldsymbol{x}_\tau+\frac{v^\pi_\theta(\tau, \boldsymbol{s}_t, \boldsymbol{x}_t(\tau))}{K}\label{eq:EUler_step}
\end{equation}

\paragraph{Rejection Sampling.} While a flow-based actor faithfully reproduces dataset actions, it equally reproduces both reliable and unreliable modes, and may further generate OOD actions during early training due to an incompletely learned vector field. To selectively retain reliable, in-distribution candidates, we design the flow-based actor to perform rejection sampling guided by the variance-averse expectation~\cite{chen2023highfidelity,robert2004montecarlo}. Specifically, the actor generates $M$ candidate actions $\mathbf{A}_t^\pi = \{\boldsymbol{a}_{t,1}^\pi, \ldots, \boldsymbol{a}_{t,m}^\pi, \ldots, \boldsymbol{a}_{t,M}^\pi \}$, and selects the action $\boldsymbol{a}_{t}^*$ that maximizes the variance-averse value as follows. 
\begin{align}
    \boldsymbol{a}_t^* = \arg\max_{\boldsymbol{a}_{t,m}^\pi\in\mathbf{A}_t^\pi} {\color{magenta}\mathcal{E}\Big(Z_\psi\left(\boldsymbol{s}_{t},\boldsymbol{a}_{t,m}^\pi\right)\Big)} \label{eq:rejection_sampling}
\end{align}
Such rejection sampling enables the actor to select actions with low return variance from in-distribution samples.

\subsection{Variance-averse $Q$-guided Flow Matching}
In this subsection, we provide the implementation details of \textbf{\color{magenta}VAN-Flow}.
The proposed method is implemented within an actor--critic framework. Our framework captures return variance through a categorical critic $Z_\psi(\boldsymbol{s}_t, \boldsymbol{a}_t)$ and employs a flow-based actor $\pi_\theta(\cdot \mid \boldsymbol{s}_t)$.

\paragraph{Categorical Critic Training with $n$-step Return.}
The categorical critic is trained with the standard cross-entropy loss between the predicted return distribution and the projected $n$-step Bellman target. 
\begin{align}
&\mathcal{L}(\psi) = \mathbb{E}_{(\boldsymbol{s}_t,\boldsymbol{a}_t,G_t^{(n)},\boldsymbol{s}_{t+n})\sim\mathcal{D}} \Big[\mathrm{CE}\big(Z_\psi(\boldsymbol{s}_t,\boldsymbol{a}_t),\, \mathcal{T}_z^{(n)}Z_\psi(\boldsymbol{s}_t,\boldsymbol{a}_t)\big)\Big]
\label{eq:pro_critic_loss}\\
&\mathcal{T}_z^{(n)}Z_{\psi}(\boldsymbol{s}_t, \boldsymbol{a}_t) = \Phi\Big(\sum_{i=1}^{I} p_i(\boldsymbol{s}_{t+n}, \boldsymbol{a}_{t+n}^*)\,\delta_{G_t^{(n)}+\gamma^n z_i}\Big)
\label{eq:proposed_criticBB}
\end{align}
The notable component is that we use the variance-averse target action $\boldsymbol{a}_{t+n}^*$ obtained via rejection sampling~\eqref{eq:rejection_sampling} instead of an actor-sampled action $\boldsymbol{a}_{t+n}^\pi$. This distinct target action selection enables valid value propagation aligned with the training objective. Projection details follow C51~\cite{bellemare2017distributional} (see Appendix~\ref{app:C}).

\paragraph{Flow-based Actor with $Q$ Guidance.}
To train the actor network $\pi_\theta(\cdot \mid \boldsymbol{s}_t)$, we define 
a variance-averse, critic-guided loss function as follows.
\begin{align}
    \mathcal{L}(\theta) = \mathbb{E}_{\substack{(\boldsymbol{s}_t,\boldsymbol{a}_t) \sim \mathcal{D}, \\ \boldsymbol{\epsilon} \sim \mathcal{N}(0,\mathbb{I}), \\ \tau \sim \mathrm{Unif}[0,1]}} \Big[\lambda\big\|v^\pi_\theta(\tau, \boldsymbol{s}_t, \boldsymbol{x}_t(\tau)) 
    -(\boldsymbol{a}_t - \boldsymbol{\epsilon})\big\|_2^2 -\underbrace{{\color{magenta}\mathcal{E}\Big(Z_\psi\big(\boldsymbol{s}_t,\boldsymbol{x}_t(\tau+\Delta\tau)\big)\Big)}}_{Q\text{ guidance}} \Big] \label{eq:pro_actor_loss}
\end{align}
In~\eqref{eq:pro_actor_loss}, $\lambda \in \mathbb{R}^+$ is a constant that 
controls the magnitude of the flow-matching loss, and $\Delta\tau = 1/K$ 
denotes the flow integration step size, where $K$ is the total number of flow 
steps. The term $\boldsymbol{x}_t(\tau+\Delta\tau)$ denotes a single Euler 
step from the current flow state $\boldsymbol{x}_t(\tau)$, defined as follows.
\begin{align}
    \boldsymbol{x}_t(\tau+\Delta\tau)= \boldsymbol{x}_t(\tau)+\Delta\tau \cdot v^\pi_\theta(\tau, \boldsymbol{s}_t, \boldsymbol{x}_t(\tau))
\end{align}
Here, $\boldsymbol{x}_t(\tau+\Delta\tau)$ serves as a single-step action proxy 
through which the critic gradient is propagated to the velocity field, acting 
as a computationally efficient surrogate for the fully integrated action. 
Unlike prior work that employs full ODE integration for the 
actor~\cite{wang2023diffusion,10423845}, we reduce the ODE computation 
cost by using a single Euler step.

Under the actor loss in~\eqref{eq:pro_actor_loss}, the flow-based actor is trained to favor actions with low return variance, thereby mitigating training instability caused by high-variance $n$-step returns. The pseudocode of the proposed method is presented in Algorithm~\ref{alg:main}. For implementation, we also use a target network, as commonly employed for training stabilization~\cite{timothy2025ddpg,fujimoto2019off,NEURIPS2020_c16a53d5}, along with the double $Q$ method to mitigate overestimation bias~\cite{fujimoto2018addressing,vanhasselt2016double,haarnoja2018soft}.

\section{Simulation Result}\label{sec:experiments}
We conduct extensive experiments to evaluate \textbf{\color{magenta}VAN-Flow} on sparse, long-horizon reinforcement learning benchmarks. All results are reported as averages over five random seeds. Experiments are performed on a system equipped with an AMD Ryzen Threadripper PRO 9975WX (CPU) and an NVIDIA RTX PRO 6000 (GPU).
\subsection{Simulation Setup}

\paragraph{Benchmark.}
To evaluate our method under the core challenges of offline RL in sparse, long-horizon settings, we focus on two benchmark tasks that are widely regarded as difficult due to sparse rewards and long-horizon structure, and that additionally exhibit substantial return-variance changes across dataset variants. A detailed explanation of each benchmark task is presented in Appendix~\ref{app:D}.
\begin{itemize}[leftmargin=10pt, nosep]
    \item \textbf{D4RL~\cite{fu2020d4rl}:} We evaluate our method on the \texttt{AntMaze} tasks from D4RL, which remain challenging due to their long-horizon structure and sparse rewards. While many D4RL benchmarks exhibit performance saturation~\cite{tarasov2023corl,rafailov2024d5rl}, \texttt{AntMaze} continues to expose limitations in long-term credit assignment and value extrapolation.
    \item \textbf{OGBench~\cite{park2025ogbench}:} OGBench is a recently introduced benchmark designed to evaluate offline RL under long-horizon control, sparse and deceptive rewards, and limited dataset coverage. We focus on state-based OGBench tasks with noisy and suboptimal datasets, where sensitivity to out-of-distribution actions is pronounced, providing a stringent test for behavior-regularized, value-guided methods.
\end{itemize}

\paragraph{Baseline.} We compare our proposed \textbf{\color{magenta}VAN-Flow} under the following evaluation criteria. Detailed descriptions of the baseline algorithms are provided in Appendix~\ref{app:f1}.
\begin{itemize}[leftmargin=10pt, nosep]
    \item \textbf{Policy Architecture:} We evaluate methods employing \textbf{\textit{Gaussian-based}} policies (IQL~\cite{kostrikov2022iql}, ReBRAC~\cite{tarasov2023revisiting}, HIQL~\cite{park2023hiql}), which are the conventional approach for representing stochastic policies. In addition, we compare with methods using \textbf{\textit{flow-based}} policies (FQL~\cite{park2025fql}, BFN~\cite{li2025qchunking}, FQL-n, BFN-n~\cite{li2025qchunking}, QC~\cite{li2025qchunking}), which are most closely related to our actor design.
    \item \textbf{Algorithm Design:} For an extensive comparison, we evaluate $n$-step return methods (Retrace($\lambda$)~\cite{munos2016safe}, PQL~\cite{kozuno2021revisiting}, LEQ~\cite{park2025leq}, TD3BC+MS~\cite{peng2024deadly}, QC~\cite{li2025qchunking})\footnote{Retrace($\lambda$) and PQL were originally proposed for off-policy online RL. In our experiments, we adapt them to the fixed offline-dataset setting.} and distributional critics (PA-RL~\cite{mark2025policyagnostic}). We also include D4PG~\cite{maron2018distributional}, which incorporates both a distributional critic and $n$-step return, with an added behavior cloning (BC) term~\cite{fujimoto2021minimalist}. In addition, pure offline RL methods (IQL~\cite{kostrikov2022iql}, ReBRAC~\cite{tarasov2023revisiting}, FQL~\cite{park2025fql}) are considered as baselines that use $1$-step returns.

\end{itemize}

\subsection{Performance Comparison}

In this subsection, we present an extensive performance comparison of the proposed method.\footnote{We use results reported in~\cite{park2025ogbench,tarasov2023corl,li2025qchunking} along with our own experiments. Additional experiments are conducted only for tasks not covered in the original papers. Details are provided in Appendix~\ref{app:f2}.} Learning curves, the full performance table, and additional comparison results are provided in Appendices~\ref{app:g1}, \ref{app:g2}, and \ref{app:g3}, respectively. Furthermore, Off-to-online fine-tuning results are reported in Appendix~\ref{app:offon}.

\begin{table*}[t]
\centering
\footnotesize
\renewcommand{\arraystretch}{0.75}
\setlength{\tabcolsep}{4.5pt}
\caption{Performance comparison on OGBench. Each entry averages five tasks per environment. Best results are in bold.}
\label{tab:task_wise_results}
\resizebox{\textwidth}{!}{%
\begin{tabular}{l cccc ccccc !{\color{magenta!40}\vrule width 0.8pt} >{\columncolor{MagentaShade}}c}
\toprule
& \multicolumn{4}{c}{\textit{Gaussian-based}}
& \multicolumn{5}{c}{\textit{Flow-based}}
& \multicolumn{1}{c}{\textbf{Proposed}} \\
\cmidrule(lr){2-5} \cmidrule(lr){6-10} \cmidrule(lr){11-11}
\textbf{Task}
& IQL & HIQL & ReBRAC & ReBRAC-$n$
& FQL & BFN & FQL-$n$ & BFN-$n$ & QC
& VAN-Flow \\
\midrule
\multicolumn{11}{l}{\textit{Standard tasks}} \\[-2pt]
\midrule
\texttt{antmaze-large-navigate}     & 53 & 91 & 81 & 82 & 79 & 71 & 91 & 81 & 49 & \textbf{95} \\
\texttt{antmaze-teleport-navigate}  & 23 & 42 &  7 & 35 & 26 & 40 & 22 & 41 & 40 & \textbf{57} \\
\texttt{scene-play}                 &  0 & 38 & 11 & 40 & 57 & 85 & 18 & 57 & 84 & \textbf{93} \\
\texttt{puzzle-3x3-play}            &  0 & 12 & 55 & 10 & \textbf{100} & 98 & 98 & 58 & \textbf{100} & \textbf{100} \\
\midrule
\multicolumn{11}{l}{\textit{Long-horizon tasks}} \\[-2pt]
\midrule
\texttt{humanoidmaze-large-navigate} & 2 & 49 &  2 &  4 &  4 & 26 & 20 & 46 & 16 & \textbf{81} \\
\texttt{antmaze-giant-navigate}      & 4 & 65 & 26 & 16 &  9 & 11 & 54 &  5 &  2 & \textbf{68} \\
\texttt{humanoidmaze-giant-navigate} & 1 & 12 &  0 &  0 &  1 &  0 &  3 & 74 & 12 &\textbf{92} \\
\midrule
\multicolumn{11}{l}{\textit{Noisy tasks}} \\[-2pt]
\midrule
\texttt{antmaze-large-explore}      & 15 &  4 & 48 & 33 & 20 & 22 &  1 & 17 & 58 & \textbf{84} \\
\texttt{puzzle-3x3-noisy}           & 33 & 51 &  2 & 10 & \textbf{100} & \textbf{100} & 33 & 98 & \textbf{100} & 97 \\
\bottomrule
\end{tabular}%
}
\vspace{-0.5mm}
\end{table*}
\begin{table*}[t]
\centering
\footnotesize
\renewcommand{\arraystretch}{1.15}
\setlength{\tabcolsep}{3.5pt}
\caption{Offline performance on D4RL AntMaze tasks. Scores are normalized following the D4RL protocol. Entries marked ``--'' indicate results not reported in prior work. Best results are in bold.}
\label{tab:antmaze_results}
\resizebox{\textwidth}{!}{%
\begin{tabular}{l ccc ccccc c c !{\color{magenta!40}\vrule width 0.8pt} >{\columncolor{MagentaShade}}c}
\toprule
& \multicolumn{3}{c}{\textit{Pure Offline RL}}
& \multicolumn{5}{c}{\textit{$n$-step}}
& \textit{Dist.}
& \textit{Dist.+$n$}
& \multicolumn{1}{c}{\textbf{Proposed}} \\
\cmidrule(lr){2-4} \cmidrule(lr){5-9} \cmidrule(lr){10-10} \cmidrule(lr){11-11} \cmidrule(lr){12-12}
\textbf{Task}
& IQL & ReBRAC & FQL
& Retrace($\lambda$) & PQL & LEQ & QC & TD3BC+MS
& PA-RL
& D4PG
& VAN-Flow \\
\midrule
\texttt{umaze}
& 77.0\pmv{5.5} & 97.7\pmv{1.5} & 96.0\pmv{2.0}
& 96.0\pmv{3.0} & 94.0\pmv{3.0} & 94.4\pmv{6.3} & 95.0\pmv{1.4} & --
& -- & 87.6\pmv{4.1}
& \textbf{98.0}\pmv{2.8} \\

\texttt{umaze-diverse}
& 54.2\pmv{5.5} & 83.5\pmv{7.0} & 89.0\pmv{2.0}
& 84.0\pmv{10.0} & 63.0\pmv{6.0} & 71.0\pmv{12.3} & 78.0\pmv{8.5} & --
& -- & 61.3\pmv{9.4}
& \textbf{95.6}\pmv{2.6} \\

\texttt{medium-play}
& 65.7\pmv{11.7} & 89.5\pmv{3.3} & 78.0\pmv{7.0}
& 79.0\pmv{10.0} & 81.0\pmv{6.0} & 58.8\pmv{33.0} & 72.0\pmv{2.4} & 70.0
& 88.0 & 89.1\pmv{3.0}
& \textbf{90.7}\pmv{3.1} \\

\texttt{medium-diverse}
& 73.7\pmv{5.4} & 83.5\pmv{8.2} & 71.0\pmv{13.0}
& 53.0\pmv{17.0} & 60.0\pmv{33.0} & 46.2\pmv{23.2} & 64.0\pmv{2.8} & 66.0
& 88.0 & 81.3\pmv{4.1}
& \textbf{89.3}\pmv{5.0} \\

\texttt{large-play}
& 42.0\pmv{4.5} & 52.2\pmv{29.0} & 84.0\pmv{7.0}
& 12.0\pmv{26.0} & 33.0\pmv{10.0} & 58.6\pmv{9.1} & 80.0\pmv{2.8} & 22.0
& \textbf{87.0} & 71.3\pmv{4.1}
& 86.5\pmv{5.7} \\

\texttt{large-diverse}
& 30.2\pmv{3.6} & 64.0\pmv{5.4} & 83.0\pmv{4.0}
& 20.0\pmv{3.0} & 44.0\pmv{9.0} & 60.2\pmv{18.3} & 82.0\pmv{5.6} & 28.0
& 73.0 & 80.6\pmv{6.1}
& \textbf{88.6}\pmv{3.0} \\
\bottomrule
\end{tabular}%
}
\vspace{-0.4cm}
\end{table*}

\paragraph{Comparison of Policy Architecture.}
Table~\ref{tab:task_wise_results} compares different policy architectures across tasks with varying horizon lengths and noise levels. Overall, \textbf{\textit{Gaussian-based}} policies (IQL, ReBRAC, HIQL) exhibit strong performance on several tasks but show noticeable degradation in long-horizon and noisy environments, reflecting limited expressiveness under complex multi-modal action distributions.

\textbf{\textit{Flow-based}} policies substantially improve performance in these challenging settings, demonstrating the advantage of expressive action modeling; however, naive flow-based methods (FQL, BFN) remain unstable and sensitive to distributional shift.
While incorporating $n$-step returns into flow-based policies (FQL-n, BFN-n) yields more robust performance, these methods still suffer from performance drops in high-variance regimes.
In contrast, \textbf{\color{magenta}VAN-Flow} consistently achieves strong results across long-horizon and noisy tasks, indicating that combining expressive flow-based policies with variance-aware value guidance is crucial for stable and effective offline RL. In particular, on high-variance datasets such as \texttt{antmaze-large-explore}, flow-based $n$-step methods (FQL-$n$, BFN-$n$) collapse far more severely than their Gaussian counterpart (ReBRAC-$n$) under the same $n$-step setting, while VAN-Flow remains robust.

\paragraph{Comparison of Algorithmic Design.} Table~\ref{tab:antmaze_results} compares representative design choices on D4RL maze tasks. Pure offline RL methods perform competitively on small-scale tasks but degrade as the horizon and dataset diversity increase. Methods using only $n$-step returns (Retrace($\lambda$), PQL, LEQ, QC, TD3BC+MS) show inconsistent performance across task scales and often drop sharply on large or diverse settings. PA-RL, which uses a distributional critic without $n$-step returns, is strong on the medium mazes but falls behind on \texttt{large-diverse}. Even D4PG, which combines both components, fails to maintain robust performance as task difficulty increases. In contrast, \textbf{\color{magenta}VAN-Flow} consistently achieves strong results from \texttt{umaze} to \texttt{large-diverse}, indicating that \emph{explicit return-variance control} is crucial for stably leveraging $n$-step returns in offline RL.

\subsection{Ablation Studies}
We conduct controlled ablations to isolate the effect of each major
component while keeping all other components fixed. Specifically,
we examine (1) the distributional critic and flow-based policy,
(2) the variance-averse expectation, and (3) the $Q$ guidance mechanism.
For clarity, we report representative results on one task from each
environment. Additional component-wise ablations that further isolate
the contributions of variance-averse expectation, the distributional critic, and the flow-based policy across action-dimension regimes are provided in Appendix~\ref{app:component_ablation}.

\paragraph{Categorical Critic and Flow-based Actor.}
\begin{wrapfigure}{r}{0.5\linewidth}
    \vspace{-\intextsep}
    \centering
    \includegraphics[width=0.8\linewidth]{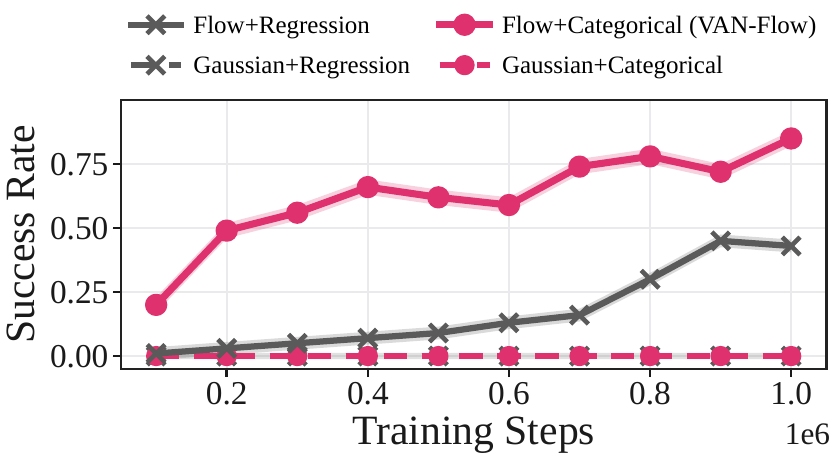}
    \caption{Effect of flow-based policies and categorical critics on \texttt{humanoidmaze-giant-navigate}.}
    \label{fig:policydits}
    \vspace{-\intextsep}
\end{wrapfigure}
Figure~\ref{fig:policydits} demonstrates the effect of combining a distributional critic with a flow-matching actor. In this result, our proposed combination (i.e., Flow+Categorical) consistently outperforms methods that rely partially on a regression-based critic or a Gaussian-based actor. Moreover, the consistent superiority of the flow-based actor over the Gaussian-based counterpart highlights its effectiveness in addressing long-horizon and sparse-reward problems. While Flow+Regression also exhibits improved performance, it is consistently outperformed by the \textbf{\color{magenta} VAN-Flow}, underscoring the importance of capturing and penalizing return variance via a categorical critic. We further validate the choice of flow matching over diffusion-based parameterizations in Appendix~\ref{app:flow_vs_diffusion}, where flow matching achieves over 90\% success rate with only 3 flow steps while a diffusion-based variant requires 20 steps for comparable performance.
\begin{wraptable}{r}{1.3\linewidth}
\centering
\vspace{-1.5em}
\caption{Ablation comparing return aggregation rules.}
\label{tab:ablation_vae_tasks}
\setlength{\tabcolsep}{4pt}
\renewcommand{\arraystretch}{0.95}
\resizebox{\linewidth}{!}{%
\begin{tabular}{ll ccccc}
\toprule
\textbf{Task}
& \textbf{Metric}
& $\mathbb{E}[Z]$
& \textbf{CVaR}
& \textbf{Entropic}
& \textbf{Mean--Var}
& {\color{magenta}$\mathcal{E}(Z)$} \\
\midrule
\multirow{2}{*}{\makecell[l]{\texttt{humanoidmaze}\\\texttt{-large-navigate}}}
& $\mathrm{Var}[Z]\!\downarrow$
& $1.00$ & $0.95$ & $0.97$ & $0.94$
& $\mathbf{0.93}$ \\
& $\mathrm{Success}\!\uparrow$
& $77$ & $83$ & $79$ & $80$
& $\mathbf{87}$ \\
\midrule
\multirow{2}{*}{\makecell[l]{\texttt{antmaze}\\\texttt{-teleport-navigate}}}
& $\mathrm{Var}[Z]\!\downarrow$
& $1.00$ & $1.08$ & $0.93$ & $0.90$
& $\mathbf{0.62}$ \\
& $\mathrm{Success}\!\uparrow$
& $38$ & $54$ & $44$ & $46$
& $\mathbf{71}$ \\
\midrule
\multirow{2}{*}{\texttt{antmaze-large-explore}}
& $\mathrm{Var}[Z]\!\downarrow$
& $1.00$ & $0.87$ & $0.72$ & $0.77$
& $\mathbf{0.66}$ \\
& $\mathrm{Success}\!\uparrow$
& $84$ & $2$ & $86$ & $85$
& $\mathbf{93}$ \\
\bottomrule
\end{tabular}}%
\vspace{-1em}
\end{wraptable}
\paragraph{Variance-averse Expectation.}
We evaluate whether the proposed \textbf{\color{magenta}variance-averse expectation} suppresses high-variance return distributions and how it compares to existing risk-sensitive aggregation rules. To this end, we replace the aggregation rule in~\eqref{eq:rejection_sampling} and~\eqref{eq:pro_actor_loss} with $\mathbb{E}[Z]$, CVaR, entropic risk, mean--variance, and {\color{magenta}$\mathcal{E}(Z)$}, while keeping all other components unchanged.\footnote{For each baseline, the associated hyperparameter (e.g., the confidence level of CVaR or the penalty coefficient $\beta$ of entropic risk and mean--variance) is selected via a per-task sweep that reports the best-performing value.}
We report two complementary metrics: $\mathrm{Var}[Z]$, which measures the dispersion of the learned return distribution on training samples after convergence and is normalized by its value under $\mathbb{E}[Z]$ within each task, and $\mathrm{Success}$, which measures task performance.

As shown in Table~\ref{tab:ablation_vae_tasks}, {\color{magenta}$\mathcal{E}(Z)$} consistently achieves the lowest $\mathrm{Var}[Z]$ across all tasks, and this variance reduction directly translates into the highest success rates. These results indicate that suppressing dispersion in return distributions is critical for reliable action selection in sparse, long-horizon offline RL. Existing risk-sensitive rules fall short for structural reasons. CVaR focuses on the lower tail of the distribution, while mean--variance and entropic risk introduce auxiliary penalty terms of the form $Q-\beta\mathrm{Var}$ or $Q-\beta\mathrm{Entropy}$ that require task-specific tuning. In contrast, $\mathcal{E}(Z)$ acts as a single aggregation operator over the categorical return distribution, redistributing probability mass to favor actions that are both high-return and low-variance without hard truncation or auxiliary penalties. This simple formulation consistently improves both variance and success using a fixed $\delta=2$ across nearly all tasks. Importantly, the operator is directly compatible with categorical distributional critics,
requiring no modification to the underlying distributional RL training procedure.
Additional comparisons with other risk measures (CPW, Norm, Wang, Sharpe, Sortino) are provided in Appendix~\ref{app:risk}.

\paragraph{$Q$ Guidance.}
\begin{wrapfigure}{r}{0.5\linewidth}
    \centering
    \includegraphics[width=\linewidth]{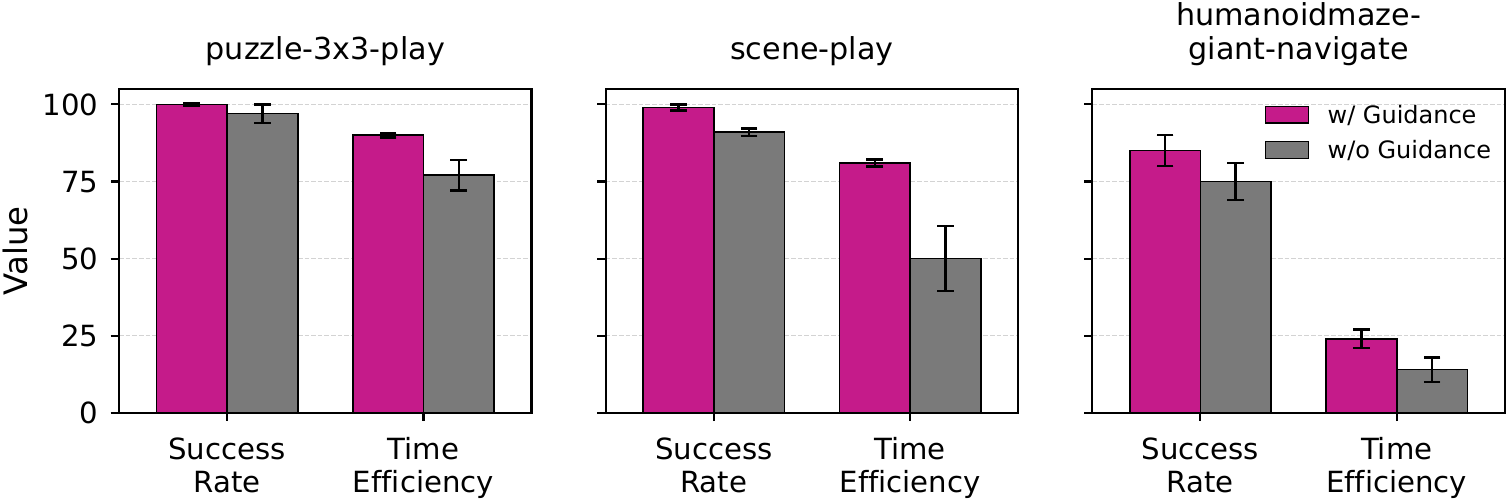}
    \caption{Effect of $Q$ guidance on policy performance across multiple tasks. Bars report mean success rate and time efficiency.}
    \label{fig:qguidance}
    \vspace{-\intextsep}
\end{wrapfigure}
We also examine the effect of the $Q$ guidance term 
in~\eqref{eq:pro_actor_loss} on actor performance. To evaluate its impact, we 
consider two metrics: (1) the success rate, which measures whether the actor 
successfully solves the task, and (2) the time efficiency, which assesses how 
rapidly the task is completed. The time efficiency is computed from the actual episode length $T$ required to reach the goal, normalized as $\mathrm{TE} = 100 \cdot (1 - T/T_{\max})$, where $T_{\max}$ denotes the maximum episode horizon. Higher values thus indicate faster task completion. Figure~\ref{fig:qguidance} clearly demonstrates that the $Q$-guided actor achieves both high success rates and high time efficiency, consistently outperforming an actor trained without $Q$ guidance. In contrast, while the actor without $Q$ guidance may achieve high success rates in some tasks, its time efficiency is often extremely low, indicating inefficient decision-making in the absence of explicit $Q$ guidance.

\subsection{A Close Look at {\color{magenta}VAN-Flow}}
\label{sec:close_look}
In this subsection, we provide a sensitivity analysis on the \texttt{humanoidmaze-giant-navigate} task to examine how key algorithmic parameters (e.g., the $n$-step horizon, the number of rejection samples, and the number of flow steps) affect the performance. The sensitivity analyses across various tasks are presented in Appendix~\ref{app:H}.

\begin{wrapfigure}{r}{0.45\linewidth}
    \centering
    \vspace{-2.0em}
    \includegraphics[width=\linewidth]{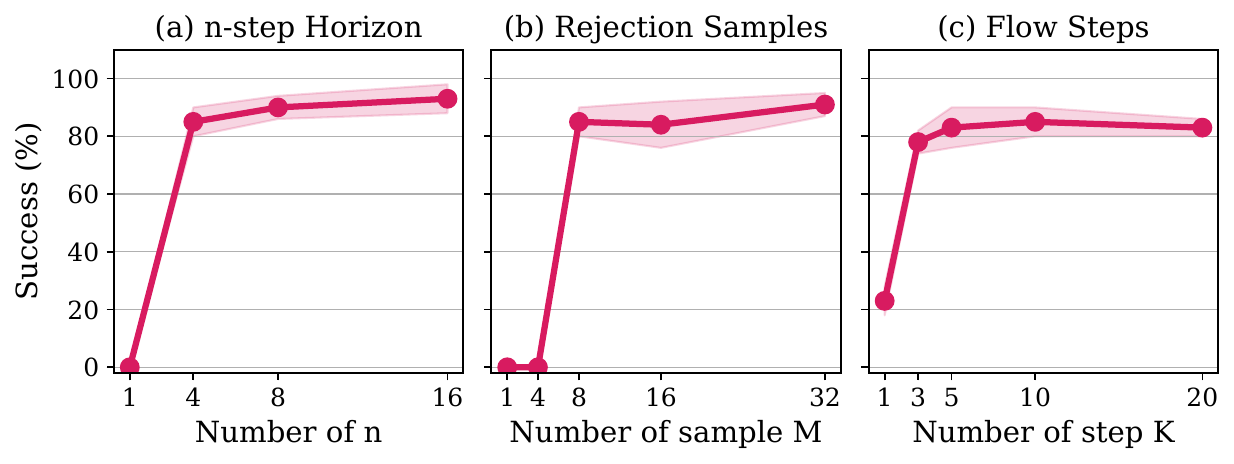}
    \caption{Sensitivity analysis with respect to key algorithmic parameters.}
    \label{fig:sensitive}
    \vspace{-0.5em}
\end{wrapfigure}

\paragraph{$n$-step Horizon.}
We examine the impact of varying the $n$-step return horizon with Figure~\ref{fig:sensitive}(a). For long-horizon tasks, increasing $n$ substantially improves performance by enabling more effective long-term credit assignment. However, excessively large values of $n$ lead to diminishing or even negative returns, reflecting the amplified variance and instability of long-horizon targets. Importantly, the proposed method remains robust across a wide range of $n$, indicating that variance-aware learning alleviates the sensitivity typically associated with long-horizon returns.

\paragraph{Number of Rejection Samples.}
The effect of the number of candidate actions used in rejection sampling is presented in Figure~\ref{fig:sensitive}(b). Performance consistently improves as more samples are considered, particularly on challenging tasks where conservative target action selection is critical. Beyond a moderate number of samples, the improvement gradually saturates, with most tasks exhibiting stable performance using fewer than $32$ candidates. This suggests that reliable variance-averse target action selection can be achieved without excessive computational overhead.

\paragraph{Number of Flow Steps.}
Figure~\ref{fig:sensitive}(c) analyzes sensitivity to the number of flow integration steps for action generation. A single integration step is not enough, but performance is already high with three to five steps and changes only mildly beyond that, so the proposed actor does not depend on precise numerical integration. Overall, these results show that evaluating critic values on flow-induced actions enables stable optimization with flexible integration granularity.

\subsection{Computational Complexity Analysis}
\begin{wraptable}{r}{0.5\linewidth}
    \vspace{-3.5em}
    \centering
    \footnotesize
    \captionsetup{justification=raggedright,singlelinecheck=false}
    \caption{Runtime comparison on \texttt{humanoidmaze-large-navigate}.}
    \label{tab:complexity}
    \renewcommand{\arraystretch}{1.1}
    \resizebox{\linewidth}{!}{
    \begin{tabular}{lccc}
    \toprule
    \textbf{Method} & \textbf{Complexity} & \textbf{Step (ms)} & \textbf{Wall (h)} \\
    \midrule
    FQL      & $\mathcal{O}(UKd^2)$    & 1.1 & 0.32 \\
    BFN      & $\mathcal{O}(UMKd^2)$   & 2.1 & 0.60 \\
    QC       & $\mathcal{O}(UMKd^2)$   & 3.0 & 0.83 \\
    VAN-Flow & $\mathcal{O}(UMKd^2)$   & 1.9 & 0.54 \\
    \bottomrule
    \end{tabular}
    }
    \vspace{-1.5em}
\end{wraptable}
VAN-Flow introduces additional computation due to rejection sampling and flow-based action generation. 
Table~\ref{tab:complexity} compares VAN-Flow with representative generative-policy methods, including FQL, BFN, and QC, in terms of both theoretical complexity and practical runtime. 
The overall training complexity of VAN-Flow is $\mathcal{O}(UMKd^2)$, where $U$ denotes the number of gradient updates, $d$ the network width (assuming fully-connected layers), $M$ the number of rejection-sampling candidates, and $K$ the number of flow steps. The factors $M$ and $K$ arise from evaluating $M$ candidate actions generated through $K$ flow transformations during policy optimization. 
Although this matches the asymptotic order of QC, the empirical overhead remains modest: VAN-Flow is faster than QC in both step time and wall-clock time, while remaining comparable to other generative-policy baselines. As discussed in Section~\ref{sec:close_look}, strong performance is achieved with relatively small values of $M$ and $K$, which keeps the practical computational cost low.

\section{Related Works}
\paragraph{Offline RL with $n$-step Returns.}
$n$-step returns have recently regained attention in offline RL as an effective tool for mitigating long-horizon bootstrapping bias. PiLaR~\cite{daley2024averaging} analyzes unfavorable bias--variance trade-offs of naive $n$-step returns and introduces compound return estimators. Decoupled Q-chunking~\cite{dqc2026} eliminates pessimistic bias by decoupling policy and critic horizons, while LEQ~\cite{park2025leq} stabilizes long-horizon learning using $n$-step $\lambda$-returns with expectile regression. Other routes to long-horizon offline RL augment transitions by temporal distance~\cite{lee2025temporaldistanceawaretransitionaugmentation} or build foresight into the agent~\cite{NEURIPS2024_1588dc2b,10421808}.

\paragraph{Generative Policies for Reinforcement Learning.}
Recent offline RL methods increasingly adopt expressive generative models to parameterize policies. Diffusion-based approaches, including Diffusion Policy~\cite{chi2023diffusion} and Diffuser~\cite{janner2022diffuser}, model complex action or trajectory distributions, while Diffusion-QL~\cite{wang2023diffusion} and IDQL~\cite{hansen2023dql} integrate critic-based guidance to move beyond pure imitation.
Flow-matching-based policies offer a scalable alternative, as exemplified by Flow Q-Learning~\cite{park2025fql}, Q-chunking~\cite{li2025qchunking}, and energy-weighted flow matching~\cite{zhang2025energyflow}.

\paragraph{Distributional and Risk-sensitive RL.}
Distributional RL methods such as C51~\cite{bellemare2017distributional}, 
QR-DQN~\cite{dabney2018quantile}, and IQN~\cite{dabney2018iqn} model the 
full return distribution but typically aggregate it via the standard 
expectation. Risk-sensitive variants based on CVaR~\cite{dabney2018iqn}, 
entropic risk, or mean--variance objectives modify the aggregation rule, 
but were designed to encode risk preferences in online RL rather than to 
suppress unreliable, high-variance targets in offline $n$-step learning. 
Distortion-based aggregation in actuarial risk theory~\cite{yaari1987,wang1996} 
reweights continuous distributions through the survival probability. 
Our variance-averse expectation $\mathcal{E}(\cdot)$ instead operates on 
discrete categorical critics with explicit normalization, reweighting all 
atoms without hard truncation while penalizing dispersion.

\section{Conclusion}
We investigate offline RL in sparse, long-horizon environments and show that naive horizon reduction via $n$-step returns fundamentally breaks down on high-variance datasets. To address this issue, we propose \textbf{\color{magenta}VAN-Flow}, a variance-averse offline RL framework that combines $n$-step learning with categorical distributional value estimation. VAN-Flow introduces a variance-averse expectation that downweights unreliable high-variance return targets and couples it with a flow-based actor guided by variance-averse $Q$ estimates and rejection sampling. Across challenging benchmarks, our method consistently outperforms strong baselines. To our knowledge, VAN-Flow is the first framework to select reliable behaviors for generative actors through variance-averse aggregation over categorical return distributions, and our results show that this choice matters for $n$-step offline RL under heterogeneous data.

\section*{Acknowledgments}
This work was partly supported by the Institute of Information \& Communications Technology Planning \& Evaluation (IITP) grant (No.~RS-2026-25519475), the National Research Foundation of Korea (NRF) grants (RS-2023-00278812, RS-2025-02214082, RS-2026-25522801) funded by the Korean government (MSIT), and Korea Institute of Police Technology (KIPoT) funded by the Korean National Police Agency \& Korea Customs Service (RS-2026-25536784). The computing resources were partly supported by Institute of Information \& communications Technology Planning \& Evaluation (IITP, AI Computing Support Project for R\&D) grant funded by the Korea government(MSIT) (RS-2026-25505492). The authors would like to thank Chanin Eom for his helpful discussions and comments during this research. The authors declare no competing interests.

\bibliography{ref}

\begin{thebibliography}{10}

\bibitem{wang2023diffusion}
Zhendong Wang, Jonathan~J. Hunt, and Mingyuan Zhou.
\newblock Diffusion policies as an expressive policy class for offline
  reinforcement learning.
\newblock In {\em International Conference on Learning Representations (ICLR)},
  2023.

\bibitem{chi2023diffusion}
Cheng Chi, Zhenjia Xu, Siyuan Feng, Eric Cousineau, Yilun Du, Benjamin
  Burchfiel, Russ Tedrake, and Shuran Song.
\newblock Diffusion policy: Visuomotor policy learning via action diffusion.
\newblock {\em The International Journal of Robotics Research},
  44(10--11):1684--1704, 2025.

\bibitem{park2025fql}
Seohong Park, Qiyang Li, and Sergey Levine.
\newblock Flow {Q}-learning.
\newblock In {\em International Conference on Machine Learning (ICML)}, 2025.

\bibitem{li2025qchunking}
Qiyang Li, Zhiyuan Zhou, and Sergey Levine.
\newblock Reinforcement learning with action chunking.
\newblock In {\em Advances in Neural Information Processing Systems (NeurIPS)},
  2025.

\bibitem{10343100}
Jeyeon Eo, Dongsu Lee, and Minhae Kwon.
\newblock The impact of dataset on offline reinforcement learning performance
  in uav-based emergency network recovery tasks.
\newblock {\em IEEE Communications Letters}, 28(5):1058--1061, 2024.

\bibitem{10610308}
Dongsu Lee, Chanin Eom, and Minhae Kwon.
\newblock {AD4RL}: {A}utonomous driving benchmarks for offline reinforcement
  learning with value-based dataset.
\newblock In {\em IEEE International Conference on Robotics and Automation},
  2024.

\bibitem{eom2024selective}
Chanin Eom, Dongsu Lee, and Minhae Kwon.
\newblock Selective imitation for efficient online reinforcement learning with
  pre-collected data.
\newblock {\em ICT Express}, 10(6):1308--1314, 2024.

\bibitem{bellemare2017distributional}
Marc~G. Bellemare, Will Dabney, and R{\'e}mi Munos.
\newblock A distributional perspective on reinforcement learning.
\newblock In {\em International Conference on Machine Learning (ICML)}, 2017.

\bibitem{dabney2018quantile}
Will Dabney, Mark Rowland, Marc~G. Bellemare, and R{\'e}mi Munos.
\newblock Distributional reinforcement learning with quantile regression.
\newblock In {\em AAAI Conference on Artificial Intelligence (AAAI)}, 2018.

\bibitem{dabney2018iqn}
Will Dabney, Georg Ostrovski, David Silver, and R{\'e}mi Munos.
\newblock Implicit quantile networks for distributional reinforcement learning.
\newblock In {\em International Conference on Machine Learning (ICML)}, 2018.

\bibitem{kearns2000bias}
Michael~J. Kearns and Satinder Singh.
\newblock Bias-variance error bounds for temporal difference updates.
\newblock In {\em Conference on Computational Learning Theory (COLT)}, 2000.

\bibitem{park2025horizon}
Seohong Park, Kevin Frans, Deepinder Mann, Benjamin Eysenbach, Aviral Kumar,
  and Sergey Levine.
\newblock Horizon reduction makes {RL} scalable.
\newblock In {\em Advances in Neural Information Processing Systems (NeurIPS)},
  2025.

\bibitem{precup2000eligibility}
Doina Precup, Richard~S. Sutton, and Satinder Singh.
\newblock Eligibility traces for off-policy policy evaluation.
\newblock In {\em International Conference on Machine Learning (ICML)}, 2000.

\bibitem{sutton2018rl}
Richard~S. Sutton and Andrew~G. Barto.
\newblock {\em Reinforcement Learning: An Introduction}.
\newblock MIT Press, Cambridge, MA, 2nd edition, 2018.

\bibitem{ren2021nearly}
Tongzheng Ren, Jialian Li, Bo~Dai, Simon~S. Du, and Sujay Sanghavi.
\newblock Nearly horizon-free offline reinforcement learning.
\newblock In {\em Advances in Neural Information Processing Systems (NeurIPS)},
  2021.

\bibitem{lee2023survey}
Dongsu Lee, Chanin Eom, Sungwoo Choi, Sungkwan Kim, and Minhae Kwon.
\newblock Survey on practical reinforcement learning : from imitation learning
  to offline reinforcement learning.
\newblock {\em Journal of Korean Institute of Communications and Information
  Sciences}, 48(11):1405--1417, 2023.

\bibitem{park2025leq}
Kwanyoung Park and Youngwoon Lee.
\newblock Model-based offline reinforcement learning with lower expectile
  {Q}-learning.
\newblock In {\em International Conference on Learning Representations (ICLR)},
  2025.

\bibitem{burns2023multifrequency}
Kaylee Burns, Tianhe Yu, Chelsea Finn, and Karol Hausman.
\newblock Offline reinforcement learning at multiple frequencies.
\newblock In {\em Conference on Robot Learning (CoRL)}, 2022.

\bibitem{chen2024deep}
Jiayu Chen, Bhargav Ganguly, Yang Xu, Yongsheng Mei, Tian Lan, and Vaneet
  Aggarwal.
\newblock Deep generative models for offline policy learning: Tutorial, survey,
  and perspectives on future directions.
\newblock {\em Transactions on Machine Learning Research}, 2024.

\bibitem{zhang2025energyflow}
Shiyuan Zhang, Weitong Zhang, and Quanquan Gu.
\newblock Energy-weighted flow matching for offline reinforcement learning.
\newblock In {\em International Conference on Learning Representations (ICLR)},
  2025.

\bibitem{kang2023diffusion}
Bingyi Kang, Xiao Ma, Chao Du, Tianyu Pang, and Shuicheng Yan.
\newblock Efficient diffusion policies for offline reinforcement learning.
\newblock In {\em Advances in Neural Information Processing Systems (NeurIPS)},
  2023.

\bibitem{chen2023highfidelity}
Huayu Chen, Cheng Lu, Chengyang Ying, Hang Su, and Jun Zhu.
\newblock Offline reinforcement learning via high-fidelity generative behavior
  modeling.
\newblock In {\em International Conference on Learning Representations (ICLR)},
  2023.

\bibitem{robert2004montecarlo}
Christian~P. Robert and George Casella.
\newblock {\em Monte Carlo Statistical Methods}.
\newblock Springer, New York, 2nd edition, 2004.

\bibitem{10423845}
Suzan~Ece Ada, Erhan Oztop, and Emre Ugur.
\newblock Diffusion policies for out-of-distribution generalization in offline
  reinforcement learning.
\newblock {\em IEEE Robotics and Automation Letters}, 9(4):3116--3123, 2024.

\bibitem{timothy2025ddpg}
Timothy~P. Lillicrap, Jonathan~J. Hunt, Alexander Pritzel, Nicolas Heess, Tom
  Erez, Yuval Tassa, David Silver, and Daan Wierstra.
\newblock Continuous control with deep reinforcement learning.
\newblock In {\em International Conference on Learning Representations (ICLR)},
  2016.

\bibitem{fujimoto2019off}
Scott Fujimoto, David Meger, and Doina Precup.
\newblock Off-policy deep reinforcement learning without exploration.
\newblock In {\em International Conference on Machine Learning (ICML)}, 2019.

\bibitem{NEURIPS2020_c16a53d5}
Aviral Kumar, Aurick Zhou, George Tucker, and Sergey Levine.
\newblock Conservative {Q}-learning for offline reinforcement learning.
\newblock In {\em Advances in Neural Information Processing Systems (NeurIPS)},
  2020.

\bibitem{fujimoto2018addressing}
Scott Fujimoto, Herke van Hoof, and David Meger.
\newblock Addressing function approximation error in actor-critic methods.
\newblock In {\em International Conference on Machine Learning (ICML)}, 2018.

\bibitem{vanhasselt2016double}
Hado van Hasselt, Arthur Guez, and David Silver.
\newblock Deep reinforcement learning with double {Q}-learning.
\newblock In {\em AAAI Conference on Artificial Intelligence (AAAI)}, 2016.

\bibitem{haarnoja2018soft}
Tuomas Haarnoja, Aurick Zhou, Pieter Abbeel, and Sergey Levine.
\newblock Soft actor-critic: Off-policy maximum entropy deep reinforcement
  learning with a stochastic actor.
\newblock In {\em International Conference on Machine Learning (ICML)}, 2018.

\bibitem{fu2020d4rl}
Justin Fu, Aviral Kumar, Ofir Nachum, George Tucker, and Sergey Levine.
\newblock {D4RL}: Datasets for deep data-driven reinforcement learning.
\newblock arXiv preprint arXiv:2004.07219, 2020.

\bibitem{tarasov2023corl}
Denis Tarasov, Alexander Nikulin, Dmitry Akimov, Vladislav Kurenkov, and Sergey
  Kolesnikov.
\newblock {CORL}: Research-oriented deep offline reinforcement learning
  library.
\newblock In {\em Advances in Neural Information Processing Systems (NeurIPS)
  Datasets and Benchmarks Track}, 2023.

\bibitem{rafailov2024d5rl}
Rafael Rafailov, Kyle~Beltran Hatch, Anikait Singh, Aviral Kumar, Laura Smith,
  Ilya Kostrikov, Philippe Hansen-Estruch, Victor Kolev, Philip~J. Ball, Jiajun
  Wu, Sergey Levine, and Chelsea Finn.
\newblock {D5RL}: Diverse datasets for data-driven deep reinforcement learning.
\newblock {\em Reinforcement Learning Journal}, 5:2178--2197, 2024.

\bibitem{park2025ogbench}
Seohong Park, Kevin Frans, Benjamin Eysenbach, and Sergey Levine.
\newblock {OGBench}: Benchmarking offline goal-conditioned {RL}.
\newblock In {\em International Conference on Learning Representations (ICLR)},
  2025.

\bibitem{kostrikov2022iql}
Ilya Kostrikov, Ashvin Nair, and Sergey Levine.
\newblock Offline reinforcement learning with implicit {Q}-learning.
\newblock In {\em International Conference on Learning Representations (ICLR)},
  2022.

\bibitem{tarasov2023revisiting}
Denis Tarasov, Vladislav Kurenkov, Alexander Nikulin, and Sergey Kolesnikov.
\newblock Revisiting the minimalist approach to offline reinforcement learning.
\newblock In {\em Advances in Neural Information Processing Systems (NeurIPS)},
  2023.

\bibitem{park2023hiql}
Seohong Park, Dibya Ghosh, Benjamin Eysenbach, and Sergey Levine.
\newblock {HIQL}: Offline goal-conditioned {RL} with latent states as actions.
\newblock In {\em Advances in Neural Information Processing Systems (NeurIPS)},
  2023.

\bibitem{munos2016safe}
R{\'e}mi Munos, Tom Stepleton, Anna Harutyunyan, and Marc~G. Bellemare.
\newblock Safe and efficient off-policy reinforcement learning.
\newblock In {\em Advances in Neural Information Processing Systems (NeurIPS)},
  2016.

\bibitem{kozuno2021revisiting}
Tadashi Kozuno, Yunhao Tang, Mark Rowland, R{\'e}mi Munos, Steven Kapturowski,
  Will Dabney, Michal Valko, and David Abel.
\newblock Revisiting {P}eng's {Q}($\lambda$) for modern reinforcement learning.
\newblock In {\em International Conference on Machine Learning (ICML)}, 2021.

\bibitem{peng2024deadly}
Zhiyong Peng, Yadong Liu, and Zongtan Zhou.
\newblock Deadly triad matters for offline reinforcement learning.
\newblock {\em Knowledge-Based Systems}, 284:111341, 2024.

\bibitem{mark2025policyagnostic}
Max~Sobol Mark, Tian Gao, Georgia~Gabriela Sampaio, Mohan~Kumar Srirama, Archit
  Sharma, Chelsea Finn, and Aviral Kumar.
\newblock Policy agnostic {RL}: Offline {RL} and online {RL} fine-tuning of any
  class and backbone.
\newblock arXiv preprint arXiv:2412.06685, 2024.

\bibitem{maron2018distributional}
Gabriel Barth-Maron, Matthew~W. Hoffman, David Budden, Will Dabney, Dan Horgan,
  Dhruva TB, Alistair Muldal, Nicolas Heess, and Timothy Lillicrap.
\newblock Distributed distributional deterministic policy gradients.
\newblock In {\em International Conference on Learning Representations (ICLR)},
  2018.

\bibitem{fujimoto2021minimalist}
Scott Fujimoto and Shixiang~Shane Gu.
\newblock A minimalist approach to offline reinforcement learning.
\newblock In {\em Advances in Neural Information Processing Systems (NeurIPS)},
  2021.

\bibitem{daley2024averaging}
Brett Daley, Martha White, and Marlos~C. Machado.
\newblock Averaging $n$-step returns reduces variance in reinforcement
  learning.
\newblock In {\em International Conference on Machine Learning (ICML)}, 2024.

\bibitem{dqc2026}
Qiyang Li, Seohong Park, and Sergey Levine.
\newblock Decoupled {Q}-chunking.
\newblock In {\em International Conference on Learning Representations (ICLR)},
  2026.

\bibitem{lee2025temporaldistanceawaretransitionaugmentation}
Dongsu Lee and Minhae Kwon.
\newblock Temporal distance-aware transition augmentation for offline
  model-based reinforcement learning.
\newblock In {\em International Conference on Machine Learning}, 2025.

\bibitem{NEURIPS2024_1588dc2b}
Dongsu Lee and Minhae Kwon.
\newblock Episodic future thinking mechanism for multi-agent reinforcement
  learning.
\newblock In {\em Advances in Neural Information Processing Systems (NeurIPS)},
  2024.

\bibitem{10421808}
Dongsu Lee and Minhae Kwon.
\newblock Foresighted decisions for inter-vehicle interactions: An offline
  reinforcement learning approach.
\newblock In {\em IEEE International Conference on Intelligent Transportation
  Systems}, 2023.

\bibitem{janner2022diffuser}
Michael Janner, Yilun Du, Joshua~B. Tenenbaum, and Sergey Levine.
\newblock Planning with diffusion for flexible behavior synthesis.
\newblock In {\em International Conference on Machine Learning (ICML)}, 2022.

\bibitem{hansen2023dql}
Philippe Hansen-Estruch, Ilya Kostrikov, Michael Janner, Jakub~Grudzien Kuba,
  and Sergey Levine.
\newblock {IDQL}: Implicit {Q}-learning as an actor-critic method with
  diffusion policies.
\newblock arXiv preprint arXiv:2304.10573, 2023.

\bibitem{yaari1987}
Menahem~E. Yaari.
\newblock The dual theory of choice under risk.
\newblock {\em Econometrica}, 55(1):95--115, 1987.

\bibitem{wang1996}
Shaun Wang.
\newblock Premium calculation by transforming the layer premium density.
\newblock {\em ASTIN Bulletin}, 26(1):71--92, 1996.

\bibitem{shaked2007stochastic}
Moshe Shaked and J.~George Shanthikumar.
\newblock {\em Stochastic Orders}.
\newblock Springer, New York, 2007.

\bibitem{kingma2015adam}
Diederik~P. Kingma and Jimmy Ba.
\newblock {Adam}: A method for stochastic optimization.
\newblock In {\em International Conference on Learning Representations (ICLR)},
  2015.

\bibitem{hendrycks2016gelu}
Dan Hendrycks and Kevin Gimpel.
\newblock {Gaussian} error linear units ({GELUs}).
\newblock arXiv preprint arXiv:1606.08415, 2016.

\bibitem{kangcikm}
Guhyeon Kang, Jaehwi Lee, and Minhae Kwon.
\newblock Simple actors and deep critics for scalable reinforcement learning.
\newblock In {\em ACM International Conference on Information and Knowledge
  Management}, 2026.

\bibitem{chen2024score}
Huayu Chen, Cheng Lu, Zhengyi Wang, Hang Su, and Jun Zhu.
\newblock Score regularized policy optimization through diffusion behavior.
\newblock In {\em International Conference on Learning Representations (ICLR)},
  2024.

\bibitem{ding2024consistency}
Zihan Ding and Chi Jin.
\newblock Consistency models as a rich and efficient policy class for
  reinforcement learning.
\newblock In {\em International Conference on Learning Representations (ICLR)},
  2024.

\bibitem{nair2020awac}
Ashvin Nair, Abhishek Gupta, Murtaza Dalal, and Sergey Levine.
\newblock {AWAC}: Accelerating online reinforcement learning with offline
  datasets.
\newblock arXiv preprint arXiv:2006.09359, 2020.

\bibitem{11080103}
Dongsu Lee and Minhae Kwon.
\newblock Scenario-free autonomous driving with multi-task offline-to-online
  reinforcement learning.
\newblock {\em IEEE Transactions on Intelligent Transportation Systems},
  26(9):13317--13330, 2025.

\bibitem{10969773}
Dongsu Lee and Minhae Kwon.
\newblock Episodic future thinking with offline reinforcement learning for
  autonomous driving.
\newblock {\em IEEE Internet of Things Journal}, 12(11):17012--17023, 2025.

\bibitem{kang2025offlineramp}
Guhyeon Kang, Jaehwi Lee, Chanin Eom, and Minhae Kwon.
\newblock Offline reinforcement learning based autonomous driving on complex
  roads with on-ramp merging and bottleneck.
\newblock {\em Transactions of the Korean Society of Automotive Engineers},
  33(6):465--479, 2025.

\bibitem{sangeunksae}
Sangeun Park, Chanin Eom, and Minhae Kwon.
\newblock x{LSTM}-based noise-robust driving characteristic inference network
  for {V2I} systems.
\newblock {\em Transactions of the Korean Society of Automotive Engineers},
  33(7):487--502, 2025.

\bibitem{eo2024offline}
Jeyeon Eo, Dongsu Lee, and Minhae Kwon.
\newblock Offline reinforcement learning based uav training for disaster
  network recovery.
\newblock {\em Journal of Korean Institute of Communications and Information
  Sciences}, 49(1):1--11, 2024.

\bibitem{lee2026ai}
Jaehwi Lee, Chanin Eom, Chan Kim, Kyeongsoo Kim, Hyeongdo Lee, Hyunsu Kang, and
  Minhae Kwon.
\newblock Ai fire support officer: Military decision support system based on
  reward adaptive reinforcement learning.
\newblock {\em Journal of Korean Institute of Communications and Information
  Sciences}, 51(1):209--222, 2026.

\bibitem{lee2025deep}
Jaehwi Lee, Chanin Eom, Kyeongsoo Kim, Hyunsu Kang, and Minhae Kwon.
\newblock Deep reinforcement learning based weapon-target assignment to support
  military decision-making.
\newblock {\em Journal of Korean Institute of Communications and Information
  Sciences}, 50(6):884--895, 2025.

\end{thebibliography}
\bibliographystyle{unsrt}

\clearpage
\appendix
\setcounter{section}{0}

\newcommand{\appline}[3]{%
  \noindent\textbf{#1}\hspace{0.6em}#2\dotfill\pageref{#3}\par
}
\newcommand{\appsubline}[3]{%
  \noindent\hspace*{1.7em}\textbf{#1}\hspace{0.6em}#2\dotfill\pageref{#3}\par
}
\newcommand{\appsubsubline}[3]{%
  \noindent\hspace*{3.2em}\textbf{#1}\hspace{0.6em}#2\dotfill\pageref{#3}\par
}

\thispagestyle{empty}
\vspace*{2em}
\begin{center}
  {\LARGE\bfseries Appendix}
\end{center}
\vspace{1.2em}

\appline{A}{Notation Tables}{app:A}

\vspace{0.6em}
\appline{B}{Theoretical Analysis of the Variance-Averse Expectation}{app:B}
\appsubline{B.1}{Proof of Theorem~\ref{thm:VaE_mps}(i)}{app:b1}
\appsubline{B.2}{Design Considerations}{app:b2}
\appsubline{B.3}{Proof of Theorem~\ref{thm:VaE_mps}(ii): Discretization Error}{app:b3}

\vspace{0.6em}
\appline{C}{Projection Operator}{app:C}

\vspace{0.6em}
\appline{D}{Benchmarks and Task Descriptions}{app:D}
\appsubline{D.1}{D4RL}{app:d1}
\appsubline{D.2}{OGBench}{app:d2}

\vspace{0.6em}
\appline{E}{Implementation Details of VAN-Flow}{app:E}

\vspace{0.6em}
\appline{F}{Component-wise Ablation Study}{app:component_ablation}

\vspace{0.6em}
\appline{G}{Empirical Comparison of Flow Matching and Diffusion Policies}{app:flow_vs_diffusion}

\vspace{0.6em}
\appline{H}{Baselines Details}{app:F}
\appsubline{H.1}{Baselines Description}{app:f1}
\appsubline{H.2}{Sources of Reported Results}{app:f2}

\vspace{0.6em}
\appline{I}{Extensive Results of Performance Comparisons}{app:G}
\appsubline{I.1}{Learning Curve of VAN-Flow}{app:g1}
\appsubline{I.2}{Full Performance Table}{app:g2}
\appsubline{I.3}{Performance Comparison in Denser Reward Environments}{app:g3}
\appsubline{I.4}{D4RL Locomotion Tasks}{app:g4}

\vspace{0.6em}
\appline{J}{Off-to-Online Fine-Tuning Results}{app:offon}

\vspace{0.6em}
\appline{K}{Additional Comparisons with Risk-Sensitive Objectives}{app:risk}

\vspace{0.6em}
\appline{L}{Full Results on Sensitivity Analyses}{app:H}
\appsubline{L.1}{$n$-step Horizon}{app:h1}
\appsubline{L.2}{Number of Rejection Samples}{app:h2}
\appsubline{L.3}{Number of Flow Steps}{app:h3}
\appsubline{L.4}{Aversion Coefficient $\delta$}{app:h4}
\appsubline{L.5}{Categorical Support}{app:h5}
\appsubline{L.6}{Number of Atoms}{app:h6}

\vspace{0.6em}
\appline{M}{Broader Impact and Limitation}{app:limit}

\clearpage

\section{Notation Tables}\label{app:A}

Table~\ref{tab:notation} summarizes the key notations used throughout the paper.

\begin{table}[h]
\caption{Notation table.}
\centering
\small
\renewcommand{\arraystretch}{1.15}
\setlength{\tabcolsep}{6pt}
\begin{tabular}{ll}
\toprule
\textbf{Notation} & \textbf{Description} \\
\midrule
$\mathcal{M}=(\mathcal{S},\mathcal{A},P,r,\gamma)$ & Markov decision process \\
$\pi_\theta(\cdot\mid s)$ & Flow-based policy (actor) with parameters $\theta$ \\
$v^\pi_\theta(\tau,s,x)$ & Flow-matching velocity field (policy parameterization) \\
$Q_\psi(s,a)$ & Scalar critic (used in standard 1-step formulations) \\
$Z_\psi(s,a)$ & Categorical return distribution critic used in VAN-Flow \\
$Z(s,a)$ & Return random variable (categorical distribution) for $(s,a)$ \\
$\{z_i\}_{i=1}^{I}$ & Fixed support atoms for categorical distribution \\
$\{p_i(s,a)\}_{i=1}^{I}$ & Atom probabilities of $Z_\psi(s,a)$, $\sum_i p_i=1$ \\
$\mathcal{D}$ & Offline dataset of transitions/segments \\
$G_t^{(n)}=\sum_{k=0}^{n-1}\gamma^k r_{t+k}$ & $n$-step truncated return \\
$\mathcal{T}^{(n)}$ & $n$-step Bellman operator (scalar) \\
$\mathcal{T}^{(n)}_z$ & $n$-step distributional Bellman operator \\
$\Phi(\cdot)$ & Categorical projection operator onto fixed atoms \\
$\mathcal{C}(z_i)=\mathbb{P}(Z\le z_i)$ & CDF of the return random variable $Z$ at $z_i$ \\
$\bar{\mathcal{C}}(z_i)=1-\mathcal{C}(z_i)$ & Survival function of $Z$ at $z_i$, with $\bar{\mathcal{C}}(z_0)=1$ \\
$\Omega_Z(u)$ & Generalized quantile function of return distribution $Z$ \\
$F_Z(u)=\int_0^u \Omega_Z(t)dt$ & Integrated quantile function \\
$\mathcal{E}(Z)$ & Variance-averse expectation, implemented operator (Def.~\ref{def:VaE}) \\
$\mathcal{E}^{\mathrm{sp}}(Z)$ & Spectral form of $\mathcal{E}$, exact finite-sum representation~\eqref{eq:VaE_spectral} \\
$\varepsilon_\delta(Z)$ & Discretization-gap certificate of Theorem~\ref{thm:VaE_mps}(ii) \\
$\delta$ & Aversion intensity for $\mathcal{E}(\cdot)$ \\
$\boldsymbol{a}_t^*$ & Action selected via variance-averse rejection sampling \\
$\mathbf{A}_t^\pi$ & Set of $M$ candidate actions sampled from the policy \\
$\tau\sim\mathrm{Unif}[0,1]$ & Flow time variable \\
$K$ & Number of flow discretization steps \\
$M$ & Number of rejection-sampling candidates \\
$\lambda$ & Weight for flow-matching regression term \\
$\le_{\mathrm{cx}}$ & Convex order \\
\bottomrule
\end{tabular}
\label{tab:notation}
\end{table}

\section{Theoretical Analysis of the Variance-Averse Expectation}\label{app:B}

\subsection{Proof of Theorem~\ref{thm:VaE_mps}(i)}\label{app:b1}

Throughout Appendix~\ref{app:B}, $Z$ is a categorical return distribution with probabilities $p_i\ge0$, $\sum_{i=1}^{I}p_i=1$, on ordered atoms $z_1<\cdots<z_I$. We write $\mathcal{C}_i:=\mathcal{C}(z_i)=\sum_{j\le i}p_j$ for its CDF, with $\mathcal{C}_0:=0$, and $S_i:=\bar{\mathcal{C}}(z_i)=1-\mathcal{C}_i$ for its survival function, so that $S_0=1$, $S_I=0$, and $p_i=S_{i-1}-S_i$. The generalized quantile function of $Z$ is
\begin{align}
\Omega_Z(u)=z_i,\qquad u\in\big(\mathcal{C}_{i-1},\,\mathcal{C}_i\big],
\label{eq:quantile_app}
\end{align}
and $w_i^{\star}:=S_{i-1}^{\delta+1}-S_i^{\delta+1}$ denotes the weights of the spectral form~\eqref{eq:VaE_spectral}. We first verify that the two expressions in~\eqref{eq:VaE_spectral} coincide.

\begin{lemma}[Exact finite-sum representation]\label{lem:finite_sum}
For every $\delta\ge0$,
$(\delta+1)\int_0^1\Omega_Z(u)(1-u)^{\delta}\,du=\sum_{i=1}^{I}w_i^{\star}z_i$.
The weights $w_i^{\star}$ are nonnegative and sum to one, and $w_i^{\star}=p_i$ when $\delta=0$.
\end{lemma}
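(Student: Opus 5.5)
The plan is to evaluate the integral piecewise, using the fact that the generalized quantile function~\eqref{eq:quantile_app} is a step function. Splitting $[0,1]$ into the intervals $(\mathcal{C}_{i-1},\mathcal{C}_i]$ gives
\begin{align*}
(\delta+1)\int_0^1\Omega_Z(u)(1-u)^{\delta}\,du=\sum_{i=1}^{I}z_i\,(\delta+1)\int_{\mathcal{C}_{i-1}}^{\mathcal{C}_i}(1-u)^{\delta}\,du .
\end{align*}
Intervals with $p_i=0$ are degenerate and contribute nothing. The union of the intervals, together with the null set $\{0\}$, is all of $[0,1]$, because $\mathcal{C}_0=0$ and $\mathcal{C}_I=1$.

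On each interval the integrand is continuous, so we may use the antiderivative $-(1-u)^{\delta+1}/(\delta+1)$. This gives
\begin{align*}
(\delta+1)\int_{\mathcal{C}_{i-1}}^{\mathcal{C}_i}(1-u)^{\delta}\,du=(1-\mathcal{C}_{i-1})^{\delta+1}-(1-\mathcal{C}_i)^{\delta+1}=S_{i-1}^{\delta+1}-S_i^{\delta+1}=w_i^{\star}.
\end{align*}
This holds for every $\delta\ge0$. When $\delta=0$ the integrand is identically $1$, which is consistent with the formula.

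The remaining properties follow directly.
\begin{itemize}
\item \textbf{Nonnegativity.} $S_i$ is nonincreasing in $i$ with values in $[0,1]$, and $x\mapsto x^{\delta+1}$ is nondecreasing on $[0,1]$. Hence $S_{i-1}^{\delta+1}\ge S_i^{\delta+1}$, so $w_i^{\star}\ge0$.
\item \textbf{Normalization.} The sum of the $w_i^{\star}$ telescopes to $S_0^{\delta+1}-S_I^{\delta+1}=1-0=1$.
\item \textbf{Case $\delta=0$.} Here $w_i^{\star}=S_{i-1}-S_i=p_i$.
\end{itemize}

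There is no real obstacle. The only care needed is to justify the piecewise decomposition of the integral, namely that the intervals partition $(0,1]$ up to measure zero and that atoms with $p_i=0$ give empty intervals and zero weight. This is what makes the identity exact rather than approximate.
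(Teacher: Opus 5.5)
Your proposal is correct and follows essentially the same route as the paper's proof. Both split the integral over the intervals $(\mathcal{C}_{i-1},\mathcal{C}_i]$ on which $\Omega_Z$ is constant and integrate $(\delta+1)(1-u)^{\delta}$ explicitly to obtain $w_i^{\star}$; both then get nonnegativity from the monotonicity of $S_i$, normalization by telescoping to $S_0^{\delta+1}-S_I^{\delta+1}=1$, and $w_i^{\star}=S_{i-1}-S_i=p_i$ at $\delta=0$.
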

\begin{proof}
By~\eqref{eq:quantile_app}, $\Omega_Z$ is constant on each interval $(\mathcal{C}_{i-1},\mathcal{C}_i]$, hence
\begin{align}
(\delta+1)\int_0^1\Omega_Z(u)(1-u)^{\delta}\,du
&=\sum_{i=1}^{I}z_i\int_{\mathcal{C}_{i-1}}^{\mathcal{C}_i}(\delta+1)(1-u)^{\delta}\,du\notag\\
&=\sum_{i=1}^{I}z_i\Big[(1-\mathcal{C}_{i-1})^{\delta+1}-(1-\mathcal{C}_i)^{\delta+1}\Big]
=\sum_{i=1}^{I}w_i^{\star}z_i.
\end{align}
Nonnegativity follows from $S_{i-1}\ge S_i$, the sum telescopes to $S_0^{\delta+1}-S_I^{\delta+1}=1$, and for $\delta=0$ we have $w_i^{\star}=S_{i-1}-S_i=p_i$.
\end{proof}

\begin{proof}[Proof of Theorem~\ref{thm:VaE_mps}(i)]
By Lemma~\ref{lem:finite_sum}, $\mathcal{E}^{\mathrm{sp}}(Z)=(\delta+1)\int_0^1\Omega_Z(u)(1-u)^{\delta}\,du$.
Define $F_Z(u):=\int_0^u\Omega_Z(t)\,dt$. Integrating by parts,
\begin{align}
\mathcal{E}^{\mathrm{sp}}(Z)
&=(\delta+1)\Big[F_Z(u)(1-u)^{\delta}\Big]_{0}^{1}
+(\delta+1)\delta\int_0^1F_Z(u)(1-u)^{\delta-1}\,du\notag\\
&=(\delta+1)\delta\int_0^1F_Z(u)(1-u)^{\delta-1}\,du,
\label{eq:vae_double_int_app}
\end{align}
where the boundary term vanishes because $F_Z(0)=0$, $F_Z$ is bounded, and $(1-u)^{\delta}\to0$ as $u\to1$ for $\delta>0$. For $0<\delta<1$ the weight $(1-u)^{\delta-1}$ is integrable on $[0,1]$, so the right-hand side is well defined.

Now let $Z^{\dagger}$ and $Z^{\circ}$ satisfy $\mathbb{E}[Z^{\dagger}]=\mathbb{E}[Z^{\circ}]$ and $Z^{\dagger}\le_{\mathrm{cx}}Z^{\circ}$. The quantile characterization of the convex order (see, e.g.,~\cite[Sec.~3.A]{shaked2007stochastic}) states that, for random variables with equal means, $Z^{\dagger}\le_{\mathrm{cx}}Z^{\circ}$ holds if and only if
\begin{align}
F_{Z^{\dagger}}(u)=\int_0^u\Omega_{Z^{\dagger}}(t)\,dt
\;\ge\;
\int_0^u\Omega_{Z^{\circ}}(t)\,dt=F_{Z^{\circ}}(u)
\qquad\text{for all }u\in[0,1].
\label{eq:cx_order_quantile_app}
\end{align}
Substituting~\eqref{eq:cx_order_quantile_app} into~\eqref{eq:vae_double_int_app},
\begin{align}
\mathcal{E}^{\mathrm{sp}}(Z^{\dagger})-\mathcal{E}^{\mathrm{sp}}(Z^{\circ})
=(\delta+1)\delta\int_0^1
\Big[F_{Z^{\dagger}}(u)-F_{Z^{\circ}}(u)\Big](1-u)^{\delta-1}\,du
\;\ge\;0,
\end{align}
since the bracket is nonnegative for all $u\in[0,1]$ and $(1-u)^{\delta-1}>0$ on $(0,1)$. Hence $\mathcal{E}^{\mathrm{sp}}(Z^{\circ})\le\mathcal{E}^{\mathrm{sp}}(Z^{\dagger})$.
\end{proof}

The operator is variance averse, but it never prefers a uniformly worse return distribution. The following proposition makes this precise.

\begin{proposition}[Monotonicity under first-order stochastic dominance]\label{prop:fosd}
Let $Z^{1}$ and $Z^{2}$ be categorical return distributions on the same atoms with $\mathcal{C}_{Z^{1}}(z)\le\mathcal{C}_{Z^{2}}(z)$ for every $z$, that is, $Z^{1}$ first-order stochastically dominates $Z^{2}$. Then $\mathcal{E}^{\mathrm{sp}}(Z^{1})\ge\mathcal{E}^{\mathrm{sp}}(Z^{2})$ for every $\delta\ge0$.
\end{proposition}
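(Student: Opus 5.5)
The plan is to work directly with the finite-sum form of $\mathcal{E}^{\mathrm{sp}}$ from Lemma~\ref{lem:finite_sum}, rewritten by Abel summation so that it depends monotonically on the survival function. Write $S_i^{k}=1-\mathcal{C}_{Z^k}(z_i)$ for $k=1,2$, with $S_0^k=1$ and $S_I^k=0$. Then
\begin{align}
\mathcal{E}^{\mathrm{sp}}(Z^k)=\sum_{i=1}^{I}\big[(S_{i-1}^k)^{\delta+1}-(S_i^k)^{\delta+1}\big]z_i
= z_1+\sum_{i=1}^{I-1}(S_i^k)^{\delta+1}\,(z_{i+1}-z_i).
\label{eq:fosd_abel_plan}
\end{align}
Verifying this identity is routine. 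The coefficient of $(S_i^k)^{\delta+1}$ for $1\le i\le I-1$ is $z_{i+1}-z_i$. The boundary terms contribute $(S_0^k)^{\delta+1}z_1=z_1$ and $(S_I^k)^{\delta+1}z_I=0$.

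First-order dominance gives $\mathcal{C}_{Z^1}(z_i)\le\mathcal{C}_{Z^2}(z_i)$, so $S_i^1\ge S_i^2\ge0$ for every $i$. Since $x\mapsto x^{\delta+1}$ is nondecreasing on $[0,1]$ for $\delta\ge0$, we get $(S_i^1)^{\delta+1}\ge(S_i^2)^{\delta+1}$. The atoms are strictly ordered, so each increment $z_{i+1}-z_i$ is positive. Subtracting the two instances of~\eqref{eq:fosd_abel_plan} therefore expresses $\mathcal{E}^{\mathrm{sp}}(Z^1)-\mathcal{E}^{\mathrm{sp}}(Z^2)$ as a sum of nonnegative terms, which proves the claim for every $\delta\ge0$, including $\delta=0$.

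An alternative route uses the quantile representation in~\eqref{eq:VaE_spectral}. Dominance implies $\Omega_{Z^1}(u)\ge\Omega_{Z^2}(u)$ pointwise, and the weight $(\delta+1)(1-u)^{\delta}$ is nonnegative, so integrating gives the same inequality. I would mention this only as a remark, because the pointwise quantile comparison needs a short justification via generalized inverses.

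The only step that needs real care is the summation-by-parts bookkeeping in~\eqref{eq:fosd_abel_plan}: the boundary conventions $S_0=1$ and $S_I=0$ must be handled correctly. The rest is immediate. Note that the result holds for $\mathcal{E}^{\mathrm{sp}}$ only, since that is what the proposition states. For the implemented operator $\mathcal{E}$, it would transfer up to the gap~\eqref{eq:VaE_gap}.
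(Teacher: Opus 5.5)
Your proof is correct, but it takes a different route from the paper's. The paper uses the quantile representation. It asserts, without proof, that first-order dominance is equivalent to $\Omega_{Z^{1}}(u)\ge\Omega_{Z^{2}}(u)$ for all $u\in(0,1]$. It then integrates this against the nonnegative weight $(\delta+1)(1-u)^{\delta}$ via Lemma~\ref{lem:finite_sum}. That is exactly the route you set aside as a remark.

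Your summation by parts rewrites $\mathcal{E}^{\mathrm{sp}}(Z)=z_1+\sum_{i=1}^{I-1}S_i^{\delta+1}(z_{i+1}-z_i)$. The identity checks out with $S_0=1$ and $S_I=0$. This is the dual, Choquet-type form $z_1+\int_{z_1}^{z_I}\bar{\mathcal{C}}(z)^{\delta+1}\,dz$. In it the hypothesis enters directly as $S_i^{1}\ge S_i^{2}$ at the atoms, so your argument needs no generalized-inverse step and is fully self-contained.

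What each approach buys:
\begin{itemize}
\item The paper's version is a one-liner once the quantile characterization is granted. It parallels the integration-by-parts proof of Theorem~\ref{thm:VaE_mps}(i), and it never uses that the two distributions share atoms.
\item Your version compares term by term on the common support. This is no real restriction, since zero-mass atoms receive zero spectral weight, so any two finite distributions can be placed on a common grid. In exchange, your version makes explicit that $\mathcal{E}^{\mathrm{sp}}$ is a distortion functional of the survival function, with distortion $s\mapsto s^{\delta+1}$. That is the natural link to the Yaari and Wang constructions the paper cites.
\end{itemize}
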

\begin{proof}
First-order stochastic dominance is equivalent to $\Omega_{Z^{1}}(u)\ge\Omega_{Z^{2}}(u)$ for all $u\in(0,1]$. By Lemma~\ref{lem:finite_sum},
$\mathcal{E}^{\mathrm{sp}}(Z^{1})-\mathcal{E}^{\mathrm{sp}}(Z^{2})=(\delta+1)\int_0^1\big[\Omega_{Z^{1}}(u)-\Omega_{Z^{2}}(u)\big](1-u)^{\delta}\,du\ge0$,
since both the bracket and the weight $(1-u)^{\delta}$ are nonnegative.
\end{proof}

Consequently, an action whose returns are uniformly higher than those of another action is never rejected in favor of it, whatever its variance. Variance affects the ranking only among actions that are not ordered by first-order stochastic dominance, which is exactly the situation addressed by Theorem~\ref{thm:VaE_mps}. By Proposition~\ref{prop:gap}, the implemented operator $\mathcal{E}$ inherits this property up to the discretization gap $R\,\varepsilon_\delta(Z)$.

\subsection{Design Considerations}\label{app:b2}

This subsection elaborates on two design considerations of the variance-averse 
expectation $\mathcal{E}(Z)$ defined in Equation~\eqref{eq:VaE}. Both are quantified by the discretization analysis of Appendix~\ref{app:b3}.
\paragraph{Design Choice on the Boundary Atom.}
By construction $\mathcal{C}(z_I)=1$, and thus the reweighting in~\eqref{eq:VaE} 
assigns zero weight to the largest support atom $z_I$. More generally, any atom above which the critic places no mass receives zero weight, whereas the spectral form~\eqref{eq:VaE_spectral} assigns such an atom the weight $\bar{\mathcal{C}}(z_{i-1})^{\delta+1}$. Proposition~\ref{prop:gap} bounds the effect of this truncation (and of the normalization) on the value by $R\,\varepsilon_\delta(Z)$, a quantity that is small whenever the probability mass is spread over several atoms. In the standard C51-style parameterization the support 
$[V_{\min}, V_{\max}]$ is deliberately chosen to bracket the plausible return 
range, and the projection operator $\Phi(\cdot)$ distributes learned 
probability mass across multiple interior atoms, leaving the boundary atom 
$z_I$ with negligible mass in practice. We measure the resulting gap on trained critics in Appendix~\ref{app:b3}, where $\mathcal{E}$ and $\mathcal{E}^{\mathrm{sp}}$ select the same rejection-sampling candidate in $96.9\%$ of states.

\paragraph{Numerical Stability.}
The denominator of~\eqref{eq:VaE}, given by 
$\sum_{j=1}^{I} p_j(1-\mathcal{C}(z_j))^\delta=(1-\varepsilon_\delta(Z))/(\delta+1)$, is strictly positive whenever some atom with positive probability has positive survival probability. The categorical critic outputs are parameterized via softmax, 
ensuring strictly positive atom probabilities, and the C51 projection further 
spreads mass across multiple atoms. As a safeguard, we additionally include a 
small numerical stabilizer $\eta = 10^{-8}$ in the denominator. The denominator becomes small only when the mass concentrates on a single atom ($\max_i p_i\approx1$), which is also the regime in which the bound of Theorem~\ref{thm:VaE_mps}(ii) is loose (Remarks~\ref{rem:loose} and~\ref{rem:eps}). Throughout 
all reported experiments, we observed no numerical instability arising from 
this term. For critics that output nearly deterministic returns, the spectral form $\mathcal{E}^{\mathrm{sp}}$ is a drop-in alternative that requires no normalization and returns $z_i$ exactly for a point mass at $z_i$.

\subsection{Proof of Theorem~\ref{thm:VaE_mps}(ii): Discretization Error of the Implemented Operator}\label{app:b3}

We use the notation of Appendix~\ref{app:b1} and additionally write $a_i:=p_iS_i^{\delta}$ for the unnormalized weights of Definition~\ref{def:VaE}, $w_i:=a_i/\sum_{j}a_j$ for its normalized weights (we take the stabilizer $\eta=0$ here and treat $\eta>0$ in Remark~\ref{rem:eps}), $\varepsilon_\delta:=1-(\delta+1)\sum_i a_i$ as in~\eqref{eq:VaE_gap}, $\rho:=\max_i p_i$, and $R:=z_I-z_1$. In this notation $\mathcal{E}(Z)=\sum_i w_iz_i$ and $\mathcal{E}^{\mathrm{sp}}(Z)=\sum_i w_i^{\star}z_i$. Substituting $x=1-u$ in Lemma~\ref{lem:finite_sum},
\begin{align}
w_i^{\star}=\int_{S_i}^{S_{i-1}}(\delta+1)\,x^{\delta}\,dx,
\label{eq:wstar_integral}
\end{align}
so that $(\delta+1)a_i=(\delta+1)p_iS_i^{\delta}$ is the right-endpoint rule (in $u$) applied to the same integral, and $\mathcal{E}$ is this rule followed by normalization, which justifies the description in Theorem~\ref{thm:VaE_mps}(ii).

\begin{proposition}[Discretization error]\label{prop:gap}
For every $\delta>0$ the following hold.
\begin{enumerate}[label=(\alph*),leftmargin=*,nosep]
\item $(\delta+1)a_i\le w_i^{\star}\le(\delta+1)p_iS_{i-1}^{\delta}$ for all $i$. Consequently, $\varepsilon_\delta\in[0,1]$.
\item $\sum_{i=1}^{I}|w_i^{\star}-w_i|\le2\,\varepsilon_\delta$.
\item $|\mathcal{E}^{\mathrm{sp}}(Z)-\mathcal{E}(Z)|\le R\,\varepsilon_\delta$.
\item $\varepsilon_\delta\le(\delta+1)\rho$ for every $\delta>0$, and $\varepsilon_\delta\le\delta(\delta+1)\sum_ip_i^{2}$ for $\delta\ge1$.
\end{enumerate}
\end{proposition}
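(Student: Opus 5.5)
Part (a) comes from the integral representation \eqref{eq:wstar_integral}. The integrand $(\delta+1)x^{\delta}$ is nondecreasing in $x$ for $\delta>0$, and the interval $[S_i,S_{i-1}]$ has length $p_i$. Bounding the integrand below by its value at $S_i$ and above by its value at $S_{i-1}$ gives $(\delta+1)p_iS_i^{\delta}\le w_i^{\star}\le(\delta+1)p_iS_{i-1}^{\delta}$. Summing the left inequality over $i$ and using $\sum_i w_i^{\star}=1$ (Lemma~\ref{lem:finite_sum}) yields $(\delta+1)\sum_i a_i\le 1$, so $\varepsilon_\delta\ge0$. Since $a_i\ge0$, we also have $\varepsilon_\delta\le1$.

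For (b), write $A:=\sum_j a_j=(1-\varepsilon_\delta)/(\delta+1)$ and split the error through the intermediate vector $(\delta+1)a_i$:
\[
\sum_i|w_i^{\star}-w_i|\le\sum_i\big(w_i^{\star}-(\delta+1)a_i\big)+\sum_i\big|(\delta+1)a_i-a_i/A\big|.
\]
By (a) the first sum has nonnegative terms and equals $1-(1-\varepsilon_\delta)=\varepsilon_\delta$. The second sum equals $\sum_i a_i\,|(\delta+1)-1/A|=|(\delta+1)A-1|=\varepsilon_\delta$. Together these give $2\varepsilon_\delta$. This uses $A>0$; I would treat $A=0$ (all mass on $z_I$) as excluded, or as covered by the convention of Remark~\ref{rem:eps}.

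For (c), both weight vectors sum to one, so $\sum_i(w_i^{\star}-w_i)=0$ and we may subtract any constant from the atoms. Centering at $c=(z_1+z_I)/2$ gives $|z_i-c|\le R/2$, hence
\[
|\mathcal{E}^{\mathrm{sp}}-\mathcal{E}|=\Big|\sum_i(w_i^{\star}-w_i)(z_i-c)\Big|\le\tfrac R2\cdot2\varepsilon_\delta.
\]

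Part (d) is the main obstacle. Using the upper bound in (a),
\[
\varepsilon_\delta=\sum_i\big(w_i^{\star}-(\delta+1)p_iS_i^{\delta}\big)\le(\delta+1)\sum_i p_i\big(S_{i-1}^{\delta}-S_i^{\delta}\big).
\]
The right-hand side is at most $(\delta+1)\rho\sum_i(S_{i-1}^{\delta}-S_i^{\delta})$, and this sum telescopes to $S_0^{\delta}-S_I^{\delta}=1$, which gives the first bound $(\delta+1)\rho$. For $\delta\ge1$, the function $x^{\delta}$ is $\delta$-Lipschitz on $[0,1]$ (mean value theorem), so $S_{i-1}^{\delta}-S_i^{\delta}\le\delta(S_{i-1}-S_i)=\delta p_i$. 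This yields $\varepsilon_\delta\le\delta(\delta+1)\sum_ip_i^2$.

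Finally, the transfer inequality \eqref{eq:VaE_transfer} follows by chaining (c) for both distributions with part (i). For the C51 remark, a Lipschitz CDF $L$ makes the projected mass on each atom at most about $2L\Delta z=O(1/I)$, so $\rho=O(1/I)$ and $\varepsilon_\delta=O(1/I)$ by the first bound in (d).
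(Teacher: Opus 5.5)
Your proof is correct and follows the paper's argument step for step. It uses the same monotone sandwich of $(\delta+1)x^{\delta}$ on $[S_i,S_{i-1}]$ for (a), the same split through $(\delta+1)a_i$ for (b), the same centering at $(z_1+z_I)/2$ for (c), and the same telescoping and mean-value bounds for (d). One small correction to your aside: $A=\sum_j a_j=0$ occurs for a point mass at \emph{any} atom $z_k$ (because then $S_k=0$), not only at $z_I$, and the paper's division by $1-\varepsilon_\delta$ implicitly excludes this same degenerate case.
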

\begin{proof}
(a) The integrand in~\eqref{eq:wstar_integral} is nondecreasing on $[S_i,S_{i-1}]$, an interval of length $p_i$, so it lies between $(\delta+1)S_i^{\delta}$ and $(\delta+1)S_{i-1}^{\delta}$, and integrating gives (a). Summing the lower bound over $i$ and using Lemma~\ref{lem:finite_sum} yields $(\delta+1)\sum_ia_i\le\sum_iw_i^{\star}=1$, i.e., $\varepsilon_\delta\in[0,1]$.

(b) Write $w_i^{\star}-w_i=\big(w_i^{\star}-(\delta+1)a_i\big)+(\delta+1)a_i\big(1-\tfrac{1}{1-\varepsilon_\delta}\big)$. The first term is nonnegative by (a) and sums to $\varepsilon_\delta$, whereas the second is nonpositive and sums to $-\varepsilon_\delta$. Hence $\sum_i|w_i^{\star}-w_i|\le\varepsilon_\delta+\varepsilon_\delta$.

(c) Both weight vectors sum to one, so for any constant $c$, $\mathcal{E}^{\mathrm{sp}}(Z)-\mathcal{E}(Z)=\sum_i(w_i^{\star}-w_i)(z_i-c)$. Taking $c=(z_1+z_I)/2$ gives $|z_i-c|\le R/2$, and (b) yields the claim.

(d) By (a) and~\eqref{eq:wstar_integral}, $\varepsilon_\delta=\sum_i\int_{S_i}^{S_{i-1}}(\delta+1)\big(x^{\delta}-S_i^{\delta}\big)\,dx$. For the first bound, monotonicity of $x\mapsto x^{\delta}$ gives $x^{\delta}-S_i^{\delta}\le S_{i-1}^{\delta}-S_i^{\delta}$ on $[S_i,S_{i-1}]$, so
\begin{align}
\varepsilon_\delta\le(\delta+1)\sum_i p_i\big(S_{i-1}^{\delta}-S_i^{\delta}\big)
&\le(\delta+1)\,\rho\sum_i\big(S_{i-1}^{\delta}-S_i^{\delta}\big)\notag\\
&=(\delta+1)\,\rho\,\big(S_0^{\delta}-S_I^{\delta}\big)=(\delta+1)\,\rho .
\end{align}
For the second bound, if $\delta\ge1$ the mean value theorem gives $x^{\delta}-S_i^{\delta}\le\delta x^{\delta-1}(x-S_i)\le\delta p_i$ on $[S_i,S_{i-1}]$ (using $x\le1$), so $\varepsilon_\delta\le(\delta+1)\delta\sum_ip_i^{2}$.
\end{proof}

Proposition~\ref{prop:gap}(c)--(d) is the bound stated in Theorem~\ref{thm:VaE_mps}(ii). The remaining claims of part (ii) are the two corollaries below.

\begin{corollary}[Ordering transfer]\label{cor:ordering}
Let $\mathbb{E}[Z^{\dagger}]=\mathbb{E}[Z^{\circ}]$ and $Z^{\dagger}\le_{\mathrm{cx}}Z^{\circ}$. Then
\begin{align}
\mathcal{E}(Z^{\circ})\le\mathcal{E}(Z^{\dagger})+R\big[\varepsilon_\delta(Z^{\dagger})+\varepsilon_\delta(Z^{\circ})\big],
\end{align}
which is~\eqref{eq:VaE_transfer}. In particular, if the convex-order margin satisfies $\mathcal{E}^{\mathrm{sp}}(Z^{\dagger})-\mathcal{E}^{\mathrm{sp}}(Z^{\circ})>R[\varepsilon_\delta(Z^{\dagger})+\varepsilon_\delta(Z^{\circ})]$, then $\mathcal{E}(Z^{\dagger})>\mathcal{E}(Z^{\circ})$. That is, the implemented operator induces the same ordering as the spectral form whenever the margin exceeds the certified error budget.
\end{corollary}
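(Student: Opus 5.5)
The corollary follows by combining Theorem~\ref{thm:VaE_mps}(i) with the discretization bound of Proposition~\ref{prop:gap}(c) through a triangle-inequality chain. I will write $\mathcal{E}(Z^{\circ})$ as $\mathcal{E}^{\mathrm{sp}}(Z^{\circ})$ plus a discretization error. I will bound that error, apply the exact convex-order monotonicity to the spectral forms, and then return from $\mathcal{E}^{\mathrm{sp}}(Z^{\dagger})$ to $\mathcal{E}(Z^{\dagger})$ at the cost of a second error.

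Concretely, I first apply Proposition~\ref{prop:gap}(c) to $Z^{\circ}$, which gives $\mathcal{E}(Z^{\circ})\le\mathcal{E}^{\mathrm{sp}}(Z^{\circ})+R\,\varepsilon_\delta(Z^{\circ})$. Next, under the hypotheses $\mathbb{E}[Z^{\dagger}]=\mathbb{E}[Z^{\circ}]$ and $Z^{\dagger}\le_{\mathrm{cx}}Z^{\circ}$, Theorem~\ref{thm:VaE_mps}(i) gives $\mathcal{E}^{\mathrm{sp}}(Z^{\circ})\le\mathcal{E}^{\mathrm{sp}}(Z^{\dagger})$. Finally, applying Proposition~\ref{prop:gap}(c) to $Z^{\dagger}$ gives $\mathcal{E}^{\mathrm{sp}}(Z^{\dagger})\le\mathcal{E}(Z^{\dagger})+R\,\varepsilon_\delta(Z^{\dagger})$. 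Chaining these three inequalities yields \eqref{eq:VaE_transfer}.

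The one point to check is that both distributions share the same atoms, so that $R=z_I-z_1$ is common to both. This holds because both are categorical on the fixed support of Definition~\ref{def:VaE}. If the supports differed, one would instead use the maximum of the two ranges.

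For the "in particular" claim I will use the two-sided bound $|\mathcal{E}-\mathcal{E}^{\mathrm{sp}}|\le R\varepsilon_\delta$, which gives
\begin{align}
\mathcal{E}(Z^{\dagger})-\mathcal{E}(Z^{\circ})
\ge \big[\mathcal{E}^{\mathrm{sp}}(Z^{\dagger})-\mathcal{E}^{\mathrm{sp}}(Z^{\circ})\big]-R\,\varepsilon_\delta(Z^{\dagger})-R\,\varepsilon_\delta(Z^{\circ}).
\end{align}
The right-hand side is strictly positive exactly when the margin exceeds the budget $R[\varepsilon_\delta(Z^{\dagger})+\varepsilon_\delta(Z^{\circ})]$. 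Note that this step does not need the convex-order hypothesis at all; it is a pure perturbation statement about the ordering of $\mathcal{E}^{\mathrm{sp}}$.

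There is no real obstacle here, since all the analytic work sits in Theorem~\ref{thm:VaE_mps}(i) and Proposition~\ref{prop:gap}. The only care needed is to use the correct side of the absolute-value bound at each step of the chain.
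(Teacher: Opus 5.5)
Your proposal is correct and follows essentially the same approach as the paper: the same three-inequality chain (Proposition~\ref{prop:gap}(c) applied to $Z^{\circ}$, then Theorem~\ref{thm:VaE_mps}(i), then Proposition~\ref{prop:gap}(c) applied to $Z^{\dagger}$) yields \eqref{eq:VaE_transfer}, and the strict-ordering claim comes from the two-sided bound, which the paper summarizes as ``the triangle inequality in the same way.'' Your side remarks are also right: $R$ is common because the support is fixed, and the second claim does not need the convex-order hypothesis.
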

\begin{proof}
By Theorem~\ref{thm:VaE_mps}(i), $\mathcal{E}^{\mathrm{sp}}(Z^{\circ})\le\mathcal{E}^{\mathrm{sp}}(Z^{\dagger})$. Applying Proposition~\ref{prop:gap}(c) to both distributions,
$\mathcal{E}(Z^{\circ})\le\mathcal{E}^{\mathrm{sp}}(Z^{\circ})+R\varepsilon_\delta(Z^{\circ})\le\mathcal{E}^{\mathrm{sp}}(Z^{\dagger})+R\varepsilon_\delta(Z^{\circ})\le\mathcal{E}(Z^{\dagger})+R[\varepsilon_\delta(Z^{\dagger})+\varepsilon_\delta(Z^{\circ})]$. The second claim follows from the triangle inequality in the same way.
\end{proof}

\begin{corollary}[Vanishing gap under the C51 projection]\label{cor:c51}
Let $Y$ be a target return supported on $[V_{\min},V_{\max}]$ whose CDF is $L$-Lipschitz, and let $Z=\Phi(Y)$ be its projection~\eqref{eq:c51_projection} onto $I$ atoms with spacing $\Delta=(V_{\max}-V_{\min})/(I-1)$. Then $\rho\le2L\Delta=O(1/I)$ and
\begin{align}
|\mathcal{E}^{\mathrm{sp}}(Z)-\mathcal{E}(Z)|\le R\,\varepsilon_\delta(Z)\le 2(\delta+1)LR\,\Delta=O(1/I).
\end{align}
\end{corollary}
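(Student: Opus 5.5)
The plan is to bound the largest atom probability $\rho=\max_i p_i$ of the projected distribution directly from the structure of the C51 projection~\eqref{eq:c51_projection}, and then feed this into Proposition~\ref{prop:gap}. Part (c) of that proposition gives $|\mathcal{E}^{\mathrm{sp}}(Z)-\mathcal{E}(Z)|\le R\,\varepsilon_\delta(Z)$, and the first bound in part (d) gives $\varepsilon_\delta(Z)\le(\delta+1)\rho$, valid for every $\delta>0$. So the whole corollary reduces to the single inequality $\rho\le 2L\Delta$.

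\textbf{Step 1: the projected mass lives in a window of width $2\Delta$.} Since $Y$ is supported on $[V_{\min},V_{\max}]=[z_1,z_I]$, the clipping step in the projection is inactive. The projection therefore assigns to atom $z_i$ the mass
\begin{align}
p_i=\mathbb{E}\big[(1-|Y-z_i|/\Delta)_+\big].
\end{align}
The integrand is at most $1$ and vanishes unless $Y\in(z_{i-1},z_{i+1})$. For the boundary atoms the window is one-sided, namely $[z_1,z_2)$ or $(z_{I-1},z_I]$. Hence
\begin{align}
p_i\le\mathbb{P}\big(z_{i-1}<Y<z_{i+1}\big).
\end{align}

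\textbf{Step 2: use the Lipschitz CDF.} An $L$-Lipschitz CDF $\mathcal{C}_Y$ is continuous, so $Y$ has no point masses. The probability of any interval of length $\ell$ is therefore at most $L\ell$. Applying this with $\ell=2\Delta$ in the interior gives $p_i\le\mathcal{C}_Y(z_{i+1})-\mathcal{C}_Y(z_{i-1})\le 2L\Delta$. At the boundary the window has length $\Delta$, which gives the smaller bound $L\Delta$. Thus $\rho\le 2L\Delta$, and $2L\Delta=2L(V_{\max}-V_{\min})/(I-1)=O(1/I)$.

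\textbf{Step 3: conclude.} Combining the pieces,
\begin{align}
|\mathcal{E}^{\mathrm{sp}}(Z)-\mathcal{E}(Z)|\le R\,\varepsilon_\delta(Z)\le(\delta+1)R\rho\le 2(\delta+1)LR\,\Delta.
\end{align}
Since $R=V_{\max}-V_{\min}$ does not depend on $I$, the right-hand side is $O(1/I)$.

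The only delicate point is Step 1: the bound $p_i\le\mathbb{P}(|Y-z_i|<\Delta)$ relies on the exact form of the projection. I would first verify it against~\eqref{eq:c51_projection}, checking in particular the boundary atoms and the handling of $Y$ landing exactly on an atom. If the appendix's projection used a different interpolation rule, I would instead bound $p_i$ by the probability that $Y$ falls in the union of the two cells adjacent to $z_i$, which yields the same $2L\Delta$ bound.
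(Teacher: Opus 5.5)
Your proposal is correct and follows the paper's proof essentially verbatim: the paper also notes that the projection sends the mass at $y$ only to the two atoms adjacent to $y$, and that no clipping occurs. It concludes $p_i\le F_Y(z_i+\Delta)-F_Y(z_i-\Delta)\le 2L\Delta$ and then applies Proposition~\ref{prop:gap}(c)--(d). Your explicit formula $p_i=\mathbb{E}[(1-|Y-z_i|/\Delta)_+]$ and your boundary-atom check are a more detailed version of that same step.
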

\begin{proof}
The projection moves the mass at $y$ only to the two atoms adjacent to $y$, so atom $z_i$ receives mass only from $\{y:|y-z_i|<\Delta\}$ and $p_i\le F_Y(z_i+\Delta)-F_Y(z_i-\Delta)\le2L\Delta$ (no clipping occurs because $Y$ is supported on $[V_{\min},V_{\max}]$). Apply Proposition~\ref{prop:gap}(c)--(d).
\end{proof}

\begin{remark}[When the bound is loose]\label{rem:loose}
The certificate $R\,\varepsilon_\delta(Z)$ is small whenever the critic spreads its mass over several atoms ($\rho\ll1$) and is vacuous for a nearly deterministic return ($\rho\approx1$): for a point mass at $z_i$ the spectral form returns exactly $z_i$ ($w_i^{\star}=1$), whereas Definition~\ref{def:VaE} assigns that atom the weight $p_iS_i^{\delta}=0$, so that $\mathcal{E}$ is determined by the (vanishingly small) mass below $z_i$. The Lipschitz assumption of Corollary~\ref{cor:c51} excludes this regime. The guarantee therefore covers return distributions that are genuinely spread---the regime produced by $n$-step bootstrapped targets under the C51 projection with $I=101$ atoms, as verified below---and does not cover deterministic returns.
\end{remark}

\begin{remark}[Numerical stabilizer]\label{rem:eps}
With the stabilizer $\eta>0$ of Appendix~\ref{app:b2}, the implemented weights sum to $\sum_ja_j/(\sum_ja_j+\eta)<1$, which adds at most $\max_i|z_i|\cdot\eta/(\sum_ja_j+\eta)$ to the error in Proposition~\ref{prop:gap}(c). This term is negligible unless $\sum_ja_j=(1-\varepsilon_\delta)/(\delta+1)\lesssim\eta$, i.e., unless $\varepsilon_\delta\approx1$, which is exactly the regime of Remark~\ref{rem:loose}. In that regime $\mathcal{E}^{\mathrm{sp}}$, which needs no normalization, is the preferable drop-in replacement.
\end{remark}

\paragraph{Empirical size of the gap on trained critics.}
To confirm that the regime of Corollary~\ref{cor:c51}, rather than that of Remark~\ref{rem:loose}, is the one encountered in practice, we measured the gap on critics trained with the default configuration on \texttt{antmaze-teleport-navigate}, using 256 randomly selected dataset states and $M=32$ candidate actions per state. For every candidate action drawn from the flow actor we evaluated $\mathcal{E}$ and $\mathcal{E}^{\mathrm{sp}}$. Table~\ref{tab:gap_empirical} reports the results: the realized gap is small (median $0.37\%$ of the value range $R$), both operators select the same candidate in $96.9\%$ of states, and their candidate rankings are strongly correlated (Kendall's $\tau=0.99$), indicating that the discretization gap is negligible at the atom resolution used throughout the paper.

\begin{table}[h]
\centering
\footnotesize
\caption{Discretization gap between the implemented operator $\mathcal{E}$ and its spectral form $\mathcal{E}^{\mathrm{sp}}$ on trained critics.}
\label{tab:gap_empirical}
\renewcommand{\arraystretch}{1.1}
\begin{tabular}{lc}
\toprule
\textbf{Measurement} & \textbf{Value} \\
\midrule
Median realized gap $|\mathcal{E}-\mathcal{E}^{\mathrm{sp}}|$ (fraction of the value range $R$) & $0.37\%$ \\
States in which $\mathcal{E}$ and $\mathcal{E}^{\mathrm{sp}}$ select the same candidate & $96.9\%$ \\
Kendall's $\tau$ between the two candidate rankings & $0.99$ \\
\bottomrule
\end{tabular}
\end{table}

\section{Projection Operator}\label{app:C}

This appendix details the categorical projection operator $\Phi(\cdot)$ used in
\eqref{eq:DnRL_BO} and \eqref{eq:proposed_criticBB}, following the standard C51-style projection
introduced in~\cite{bellemare2017distributional}.

\paragraph{Fixed support.}
We fix $I$ atoms $\{z_i\}_{i=1}^{I}$ uniformly spanning an interval $[V_{\min},V_{\max}]$:
\begin{equation}
z_i = V_{\min} + (i-1)\Delta,\qquad \Delta=\frac{V_{\max}-V_{\min}}{I-1}.
\end{equation}
The categorical critic outputs a probability vector $p(s,a)\in\Delta^{I}$ over these atoms.

\paragraph{Target distribution and clamping.}
Let the (unprojected) target distribution be a random variable $Y$ obtained by shifting the next-state
return distribution by the $n$-step return:
\begin{equation}
Y \;=\; G_t^{(n)} + \gamma^n Z_\psi(s_{t+n},a_{t+n}^*).
\end{equation}
Because $Y$ may fall outside $[V_{\min},V_{\max}]$, we clamp support values using
\begin{equation}
\bar{y}=\mathrm{clip}(y, V_{\min}, V_{\max}).
\end{equation}

\paragraph{Projection $\Phi$: mass transfer to neighboring atoms.}
Suppose $Y$ is represented categorically as probabilities $\{q_j\}_{j=1}^{I}$ on some values $\{y_j\}_{j=1}^{I}$
(e.g., $y_j = G_t^{(n)} + \gamma^n z_j$ when the next distribution is supported on $\{z_j\}$).
The projection $\Phi$ maps this distribution onto $\{z_i\}$ by distributing each mass $q_j$
to the two nearest atoms:
\begin{align}
b_j &= \frac{\bar{y}_j - V_{\min}}{\Delta}+1, \\
\ell_j &= \lfloor b_j \rfloor,\qquad u_j=\lceil b_j \rceil,
\end{align}
and define the projected probabilities $m_i$ by
\begin{equation}
m_i \;=\;\sum_{j=1}^{I} q_j \left[
\mathbf{1}\{u_j=i\}(b_j-\ell_j) + \mathbf{1}\{\ell_j=i\}(u_j-b_j)
\right],
\label{eq:c51_projection}
\end{equation}
with the convention that when $\ell_j=u_j$, all mass is assigned to that atom.
Then $\Phi(Y)$ is the categorical distribution supported on $\{z_i\}$ with probabilities $\{m_i\}_{i=1}^{I}$.
\clearpage
\section{Benchmarks and Task Descriptions}\label{app:D}
This section summarizes the benchmark environments used in our experiments,
with emphasis on properties that are directly relevant to the problem studied in this paper:
\emph{sparse rewards}, \emph{long-horizon credit assignment},
and \emph{dataset-induced return variance}.
Rather than providing a comprehensive description of each benchmark,
we highlight how different task and dataset variants give rise to distinct variance regimes
for $n$-step return learning in offline RL.

\subsection{D4RL \texttt{antMaze}}\label{app:d1}
\begin{figure}[h]
    \centering
    \includegraphics[width=0.7\linewidth]{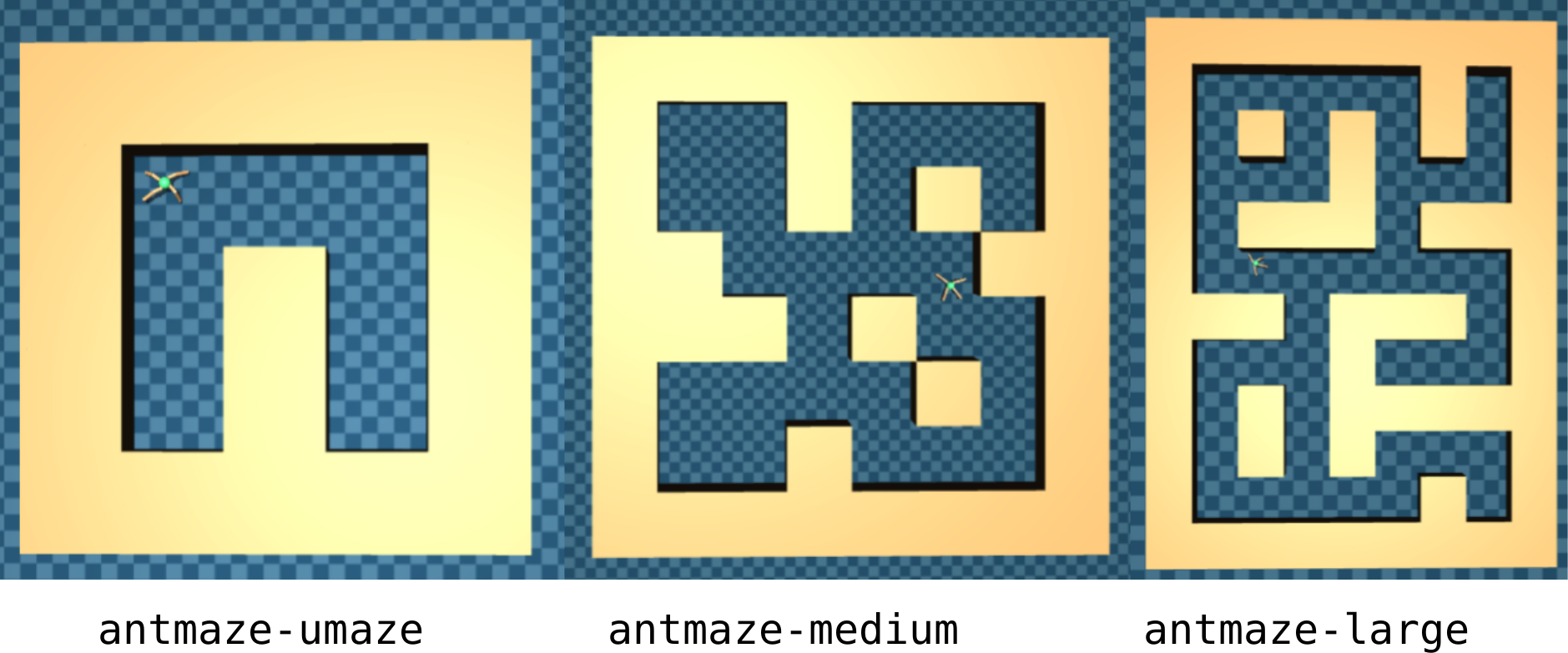}
    \caption{D4RL}
    \label{fig:D4RL}
\end{figure}
We evaluate on the \texttt{AntMaze} tasks from D4RL, a canonical benchmark
for offline RL under sparse rewards and long planning horizons.
AntMaze has become a standard testbed for studying long-horizon credit assignment
and robustness to dataset-induced distributional challenges.
In these environments, rewards are extremely sparse and are typically provided
only when the agent reaches the goal region, making effective long-horizon value propagation essential.

\paragraph{Horizon and variance characteristics.}
AntMaze provides a controlled setting in which both the effective planning horizon
and dataset-induced return variance can be systematically varied.

\begin{itemize}[leftmargin=14pt, nosep]
    \item \textbf{Maze scales:} \texttt{umaze}, \texttt{medium}, and \texttt{large}
    progressively increase the episode horizon and the depth of credit assignment.
    \item \textbf{Dataset variants:} \texttt{play} datasets are collected using relatively
    stationary behavior policies, resulting in lower return variance,
    whereas \texttt{diverse} datasets are generated from heterogeneous behavior policies
    and induce substantially higher return variance.
\end{itemize}

As the maze scale and dataset heterogeneity increase,
$n$-step return targets aggregate rewards over increasingly diverse trajectories,
thereby amplifying return variance.
Consequently, AntMaze serves as a particularly suitable benchmark for evaluating the stability of
long-horizon and multi-step learning methods in offline RL.

\subsection{OGBench}\label{app:d2}
\begin{figure}[h]
    \centering
    \includegraphics[width=0.6\linewidth]{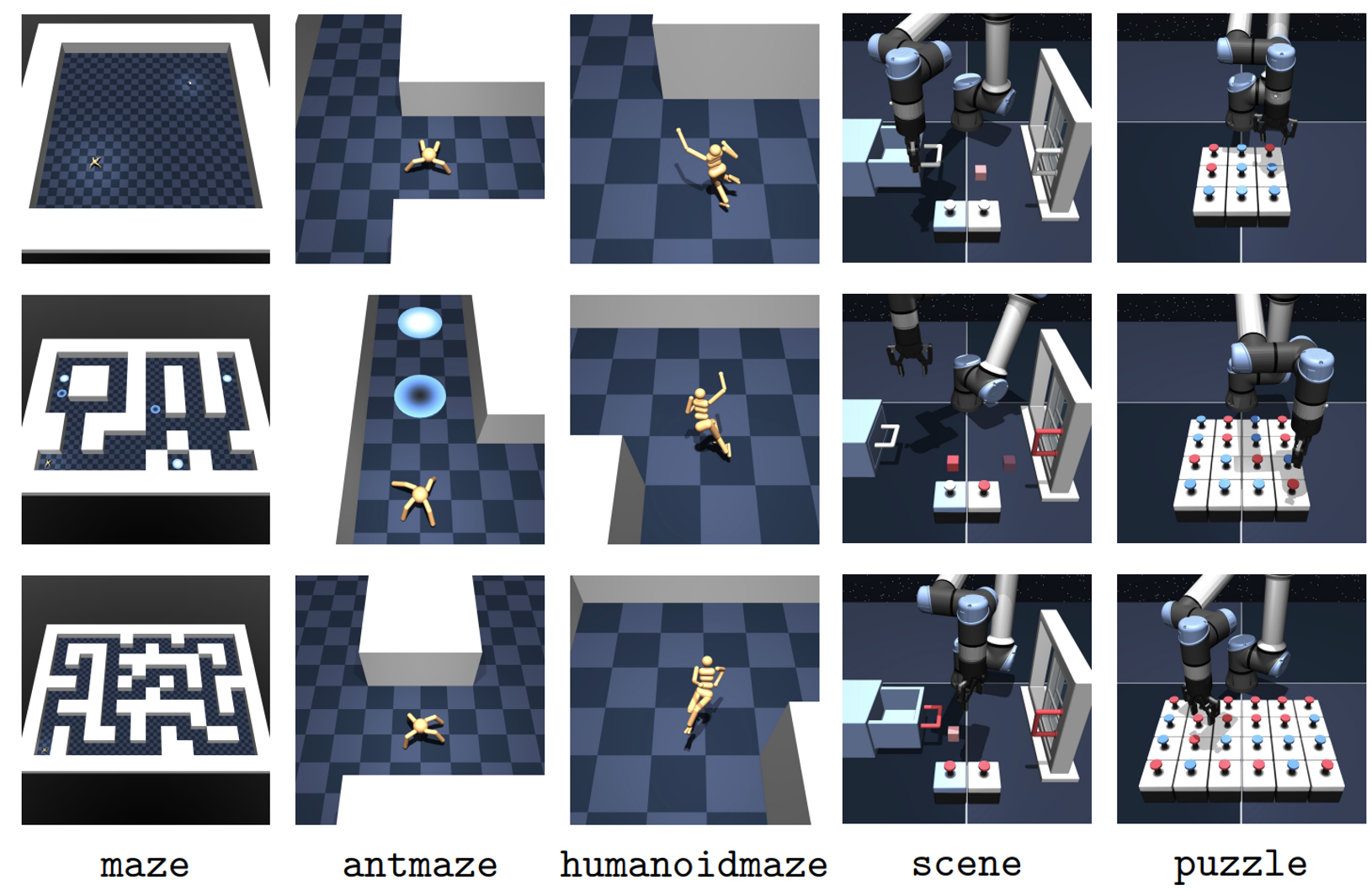}
    \caption{OGBench}
    \label{fig:OGBench}
\end{figure}
OGBench is a benchmark suite designed to evaluate offline RL in environments
characterized by long horizons, sparse or deceptive rewards, and limited dataset coverage.
In contrast to D4RL AntMaze, OGBench offers a broader collection of tasks
with diverse dynamics, action spaces, and dataset construction processes.

\paragraph{Relevance to return variance.}
A distinguishing feature of OGBench is that different dataset variants within the same environment
exhibit substantial differences in return dispersion.
This property enables controlled comparisons between low-variance and high-variance offline datasets
under identical task dynamics.

\paragraph{Task families used.}
We focus on state-based OGBench tasks that are particularly sensitive to
long-horizon value estimation and return variance:

\begin{itemize}[leftmargin=14pt, nosep]
    \item \textbf{AntMaze:}
    \texttt{antmaze-large-navigate}, \texttt{antmaze-teleport-navigate},
    \texttt{antmaze-large-explore}, and \texttt{antmaze-giant-navigate},
    which require long-horizon planning and are highly sensitive to dataset coverage.
    
    \item \textbf{HumanoidMaze:}
    \texttt{humanoidmaze-large-navigate},
    \texttt{humanoidmaze-giant-navigate},
    which combine extremely long horizons with high-dimensional action spaces.
    
    \item \textbf{Scene and Puzzle:}
    \texttt{scene-play},
    \texttt{puzzle-3x3-play}, and \texttt{puzzle-3x3-noisy},
    which exhibit strong multi-modality and heightened sensitivity to out-of-distribution actions.
\end{itemize}

\paragraph{Low-variance vs.\ high-variance datasets.}
Following the main text, we categorize dataset variants according to their induced return variance.

\begin{itemize}[leftmargin=14pt, nosep]
    \item \texttt{navigate} and \texttt{play} datasets are treated as relatively
    low-variance datasets, collected using more stationary behavior policies.
    \item \texttt{explore} and \texttt{noisy} datasets are treated as high-variance datasets,
    where non-stationary or noisy behavior policies induce large dispersion in return targets.
\end{itemize}

These high-variance datasets pose a significant challenge for naïve $n$-step return methods,
as increasing the return horizon directly amplifies target variance.
Consequently, OGBench serves as a particularly suitable testbed for evaluating
variance-aware learning strategies in offline RL.
\clearpage
\section{Implementation Details of VAN-Flow}\label{app:E}
We provide the hyperparameter details of the proposed method in Table~\ref{tab:vanflow_hparams}.
\begin{table}[h]
\caption{Hyperparameters for VAN-Flow.}
\label{tab:vanflow_hparams}
\begin{tabular}{>{\raggedright\arraybackslash}p{\dimexpr0.5\linewidth-2\tabcolsep\relax}
                >{\raggedright\arraybackslash}p{\dimexpr0.5\linewidth-2\tabcolsep\relax}}
\toprule
\textbf{Hyperparameter} & \textbf{Value} \\
\midrule
\multicolumn{2}{l}{\textit{Optimization}} \\
Learning rate (critic) $\alpha_Z$ & $3\times10^{-4}$ \\
Learning rate (actor) $\alpha_\pi$ & $3\times10^{-4}$ \\
Optimizer & Adam~\citep{kingma2015adam} \\
Gradient steps & $10^6$ (OGBench), $5 \times 10^5$ (D4RL) \\
Minibatch size & 256 \\
\midrule
\multicolumn{2}{l}{\textit{Network Architecture}} \\
MLP dimensions & [512, 512, 512, 512] \\
Nonlinearity & GELU~\citep{hendrycks2016gelu} \\
Target network smoothing coefficient $\rho$ & 0.005 \\
\midrule
\multicolumn{2}{l}{\textit{Discounting and Returns}} \\
Discount factor $\gamma$ & \makecell[l]{0.995 (default) \\ 0.999 (antmaze-giant, antmaze-teleport, \\ humanoidmaze)} \\
n-step return $n$ & \makecell[l]{
4 (default) \\
2 (antmaze-teleport) \\
3 (antmaze-large-explore, scene-play) \\
8 (antmaze-giant)} \\
\midrule
\multicolumn{2}{l}{\textit{Variance-Averse Expectation}} \\
Aversion intensity $\delta$ & \makecell[l]{2 (default) \\ 7 (antmaze-teleport)} \\
\midrule
\multicolumn{2}{l}{\textit{Distributional Critic}} \\
$V_{\min}$ & -200 (default), -1000 (antmaze-giant, antmaze-teleport, humanoidmaze) \\
$V_{\max}$ & 0 \\
Critic ensemble size & 2 \\
Critic aggregation method & mean \\
Categorical atoms & 101  \\
\midrule
\multicolumn{2}{l}{\textit{Flow-based Actor}} \\
Flow steps $K$ & 10  \\
Number of candidate actions $M$ & 8  \\
Flow-matching regression term weight $\lambda$ & \makecell[l]{3 (default) \\ 10 (antmaze-giant, humanoidmaze)\\ 300 (antmaze-teleport)} \\
\bottomrule
\end{tabular}
\end{table}

\paragraph{Reward Preprocessing.}
We rescale all rewards to $\{-1, 0\}$ prior to training, so that the discounted 
return remains within the categorical critic's support $[V_{\min}, 0]$. The 
lower bound is set as $V_{\min} \approx -1/(1-\gamma)$, which corresponds to 
the worst-case return when the goal is never reached. This transformation adds 
a state-independent constant to all $Q$-values and therefore does not affect 
the optimal policy.

\clearpage
\section{Component-wise Ablation Study}\label{app:component_ablation}

This appendix presents a component-wise ablation that isolates the individual and
joint contributions of the three core components of \textbf{\color{magenta}VAN-Flow}:
the variance-averse expectation (VE), the categorical distributional critic (Dist),
and the flow-based actor (Flow).

\paragraph{Setup.}
All variants are trained with $n$-step returns under identical hyperparameters,
differing only in which components are enabled. We evaluate across two
action-dimension regimes: low-dimensional locomotion (\texttt{antmaze-teleport}
and \texttt{antmaze-giant}, both 8-dim) and a high-dimensional task
(\texttt{humanoidmaze-large}, 21-dim).
\begin{table}[h]
\centering
\footnotesize
\caption{Component-wise ablation across action-dimension regimes. All variants use
$n$-step returns. Scores denote success rate (\%), averaged over five random seeds.}
\label{tab:component_ablation}
\renewcommand{\arraystretch}{1.1}
\setlength{\tabcolsep}{6pt}
\resizebox{0.95\linewidth}{!}{%
\begin{tabular}{lccc|ccc}
\toprule
\textbf{Variant} & \textbf{Dist} & \textbf{VE} & \textbf{Flow}
& \texttt{antmaze-teleport} (8d)
& \texttt{antmaze-giant} (8d)
& \texttt{humanoidmaze-large} (21d) \\
\midrule
\rowcolor{MagentaShade}
\textbf{VAN-Flow (Ours)} & \cmark & \cmark & \cmark
& $\mathbf{71\!\pm\!7}$ & $\mathbf{58\!\pm\!4}$ & $\mathbf{87\!\pm\!6}$ \\
\midrule
w/o VE          & \cmark & \xmark & \cmark & $38\!\pm\!7$  & $36\!\pm\!6$  & $77\!\pm\!4$ \\
w/o Flow        & \cmark & \cmark & \xmark & $42\!\pm\!7$  & $38\!\pm\!13$ & $0\!\pm\!0$ \\
w/o VE \& Flow  & \cmark & \xmark & \xmark & $36\!\pm\!9$  & $32\!\pm\!5$  & $0\!\pm\!0$ \\
\bottomrule
\end{tabular}}
\end{table}

\paragraph{Discussion.}
On low-dimensional tasks (\texttt{antmaze-teleport}, \texttt{antmaze-giant}), 
removing VE from VAN-Flow causes a substantial performance drop 
($71 \rightarrow 38$ on teleport; $58 \rightarrow 36$ on giant), 
while removing Flow leads to a smaller decrease 
($71 \rightarrow 42$; $58 \rightarrow 38$). 
This indicates that, under $n$-step returns, actor expressiveness alone 
provides little benefit unless return variance is properly controlled. 
VE downweights unreliable high-variance return targets, while Flow 
provides the expressiveness required to exploit the resulting value estimates.

The high-dimensional task \texttt{humanoidmaze-large} (21d) makes this 
synergy even clearer. Removing Flow causes complete failure 
($87 \rightarrow 0$), consistent with Figure~\ref{fig:policydits}, 
since a Gaussian actor cannot represent the multi-modal action distribution 
required by the task. Removing VE instead is far less catastrophic 
($87 \rightarrow 77$), but still leaves a meaningful gap. 
These results jointly demonstrate that VE and Flow play complementary roles: 
Flow supplies the action expressiveness, while VE ensures the value estimates 
guiding the actor are reliable. How much capacity the actor and the critic each need is studied separately in~\cite{kangcikm}.

\clearpage
\section{Empirical Comparison of Flow Matching and Diffusion Policies}\label{app:flow_vs_diffusion}

This appendix provides the empirical justification for adopting flow matching 
as the policy backbone of \textbf{\color{magenta}VAN-Flow}, rather than the more 
widely used diffusion-based parameterization (e.g., DDPM~\cite{wang2023diffusion}).

\paragraph{Motivation.}
Diffusion-based actors~\cite{wang2023diffusion,chi2023diffusion} 
typically require many denoising steps to produce high-quality actions, 
substantially increasing both training and inference cost. Flow matching 
instead parameterizes the policy via a deterministic ODE that enables 
high-quality samples with far fewer integration 
steps~\cite{park2025fql}, which is particularly attractive in 
actor--critic offline RL where the actor is repeatedly evaluated during 
both training and inference.
\paragraph{Experimental Setup.}
To validate this design choice within our framework, we instantiate a 
diffusion-based variant of our method, denoted \textbf{VAN-DDPM}, by replacing 
the flow-matching actor in \textbf{\color{magenta}VAN-Flow} with a DDPM-style 
denoising actor while keeping all other components (categorical critic, 
variance-averse expectation, rejection sampling, $n$-step return) unchanged.
We evaluate both variants on \texttt{humanoidmaze-large-navigate} over five 
random seeds, varying the number of denoising/flow steps. We report the 
success rate as performance and the cumulative inference time per action 
generation in milliseconds.

\begin{table}[h]
\centering
\footnotesize
\caption{Comparison between flow matching (VAN-Flow) and diffusion (VAN-DDPM) 
policies on \texttt{humanoidmaze-large-navigate}. Performance denotes success 
rate (\%); Time denotes cumulative inference time per action (ms). Bold entries 
indicate the first step count achieving $>$90\% success rate.}
\label{tab:flow_vs_diffusion}
\renewcommand{\arraystretch}{1.4}
\setlength{\tabcolsep}{14pt}
\begin{tabular}{c cc cc}
\toprule
& \multicolumn{2}{c}{\textbf{VAN-DDPM}}
& \multicolumn{2}{c}{\textbf{\color{magenta}VAN-Flow}} \\
\cmidrule(lr){2-3} \cmidrule(lr){4-5}
\textbf{Steps} & \textbf{Perf.} & \textbf{Time (ms)} & \textbf{Perf.} & \textbf{Time (ms)} \\
\midrule
1  & $0\pmv{0}$    & 2.0  & $0\pmv{0}$            & 1.6 \\
3  & $0\pmv{0}$    & 3.8  & $\mathbf{96\pmv{3}}$  & \textbf{3.3} \\
5  & $0\pmv{0}$    & 5.7  & $96\pmv{2}$           & 5.0 \\
10 & $57\pmv{17}$  & 8.4  & $87\pmv{6}$           & 7.2 \\
20 & $\mathbf{94\pmv{4}}$ & \textbf{11.9} & $76\pmv{6}$ & 10.2 \\
\bottomrule
\end{tabular}
\end{table}

\paragraph{Discussion.}
As shown in Table~\ref{tab:flow_vs_diffusion}, \textbf{\color{magenta}VAN-Flow} 
achieves over 90\% success rate with only $3$ flow steps (3.3ms), whereas 
\textbf{VAN-DDPM} requires $20$ denoising steps (11.9ms) to reach a comparable 
level, corresponding to roughly a $3.6\times$ reduction in inference cost. 
We further observe that VAN-DDPM exhibits a much steeper sensitivity to the 
number of denoising steps: it produces near-zero success rates for $\leq 5$ 
steps, indicating that diffusion-based actors require sufficiently many 
denoising iterations to generate in-distribution actions. In contrast, 
flow matching attains strong performance even at very low step counts. 
Overall, these results justify our use of flow matching as the policy 
backbone, balancing high sample quality and low computational cost.

\section{Baselines Details}\label{app:F}
\subsection{Baselines Description}\label{app:f1}
We briefly summarize the baselines used in our comparisons. All methods are run with publicly available
implementations when possible, or re-implemented following the original papers, using the same dataset splits
and evaluation protocols as \textbf{\color{magenta}VAN-Flow}.

\paragraph{Pure offline RL (1-step)}
\begin{itemize}[leftmargin=14pt, nosep]
\item \textbf{IQL~\cite{kostrikov2022iql}:} An offline actor–critic method that employs expectile regression for value learning and advantage-weighted updates for policy improvement.
\item \textbf{ReBRAC~\cite{tarasov2023revisiting}:} A behavior-regularized actor–critic method designed to improve robustness against out-of-distribution (OOD) actions.
\item \textbf{HIQL~\cite{park2023hiql}:} A hierarchical extension of IQL tailored for long-horizon decision-making problems.
\end{itemize}

\paragraph{Flow-based policies}
\begin{itemize}[leftmargin=14pt, nosep]
\item \textbf{FQL~\cite{park2025fql}:} Flow Q-Learning, an offline actor–critic method that employs a flow-matching actor guided by a learned critic.
\item \textbf{BFN / QC~\cite{li2025qchunking}:} Flow-based policy families with value-guided sampling. We adopt the hyperparameter settings recommended by the authors.
\item \textbf{FQL-$n$, BFN-$n$:} Variants of the original algorithms that incorporate an $n$-step temporal-difference (TD) target.
\end{itemize}

\paragraph{$n$-step / horizon reduction methods}
\begin{itemize}[leftmargin=14pt, nosep]
\item \textbf{LEQ~\cite{park2025leq}:} A long-horizon stabilization method that combines $n$-step $\lambda$-returns with expectile regression for improved value estimation stability.

\item \textbf{QC~\cite{li2025qchunking}:} An action-chunking framework based on flow-based policy families with value-guided sampling. We adopt the hyperparameter settings recommended by the authors.
\end{itemize}

\paragraph{Distributional critics}
\begin{itemize}[leftmargin=14pt, nosep]
\item \textbf{PA-RL~\cite{mark2025policyagnostic}:} A policy-agnostic actor–critic algorithm that leverages a distributional critic to optimize actions, which are then distilled into the policy via supervised learning.
\item \textbf{D4PG~\cite{maron2018distributional}:} A distributional actor–critic baseline that combines multi-step returns with a deterministic policy gradient. We adapt it to the offline setting when necessary.

\end{itemize}
\subsection{Sources of Reported Results}\label{app:f2}
In the D4RL benchmark, we use a combination of previously reported results and results obtained from our own experiments.
Specifically, for PA-RL, FQL, and LEQ, we report the performance values as presented in the original papers that proposed each algorithm.
For IQL and ReBRAC, we use the results reported in the benchmark paper CORL~\cite{tarasov2023corl}.
In contrast, the performance of QC and D4PG is obtained from experiments conducted by us.
In particular, since D4PG is an online reinforcement learning algorithm, we implement it with an additional behavior cloning (BC) term for our experiments.

For the OGBench benchmark, we use the reported performance of HIQL from the original OGBench paper as the benchmark baseline~\cite{park2025ogbench}.
For FQL, we use the results reported in the paper that proposed FQL as well as those reported in Reinforcement Learning with Action Chunking~\cite{li2025qchunking}, which also introduces QC.
Additionally, from the same paper, we use the reported performance of other baselines on the scene-play and puzzle-3x3 tasks.
For all remaining tasks in OGBench, we conduct and report our own experimental results.

\section{Extensive Results of Performance Comparisons}\label{app:G}

\subsection{Learning Curve of VAN-Flow}\label{app:g1}
In this subsection, we present the learning curves of \textbf{\color{magenta}VAN-Flow} across representative benchmarks. Each curve shows the evaluation success rate as a function of training steps, averaged over five random seeds, and the shaded band around each curve indicates one standard deviation. Each run consists of an offline training phase followed by online fine-tuning. On OGBench, the agent is trained offline for the first $1$M steps and then fine-tuned online for the remaining $1$M steps. On D4RL, the switch occurs at $0.5$M steps. The vertical dashed line marks this switch, and the darker and lighter backgrounds indicate the offline and online phases, respectively. The corresponding offline-to-online results are summarized in Appendix~\ref{app:offon}. The learning curves for \texttt{antmaze} and \texttt{humanoidmaze} from OGBench are shown in Figure~\ref{fig:learning_curves_AO} and Figure~\ref{fig:learning_curves_HM}, respectively. Additionally, the learning curves for \texttt{puzzle} and \texttt{scene} are presented in Figure~\ref{fig:learning_curves_p} and Figure~\ref{fig:learning_curves_s}. Figure~\ref{fig:learning_curves_D} shows the learning curves for the \texttt{antmaze} tasks in D4RL.
\begin{figure*}[h]
    \centering
    \includegraphics[width=1.0\textwidth]{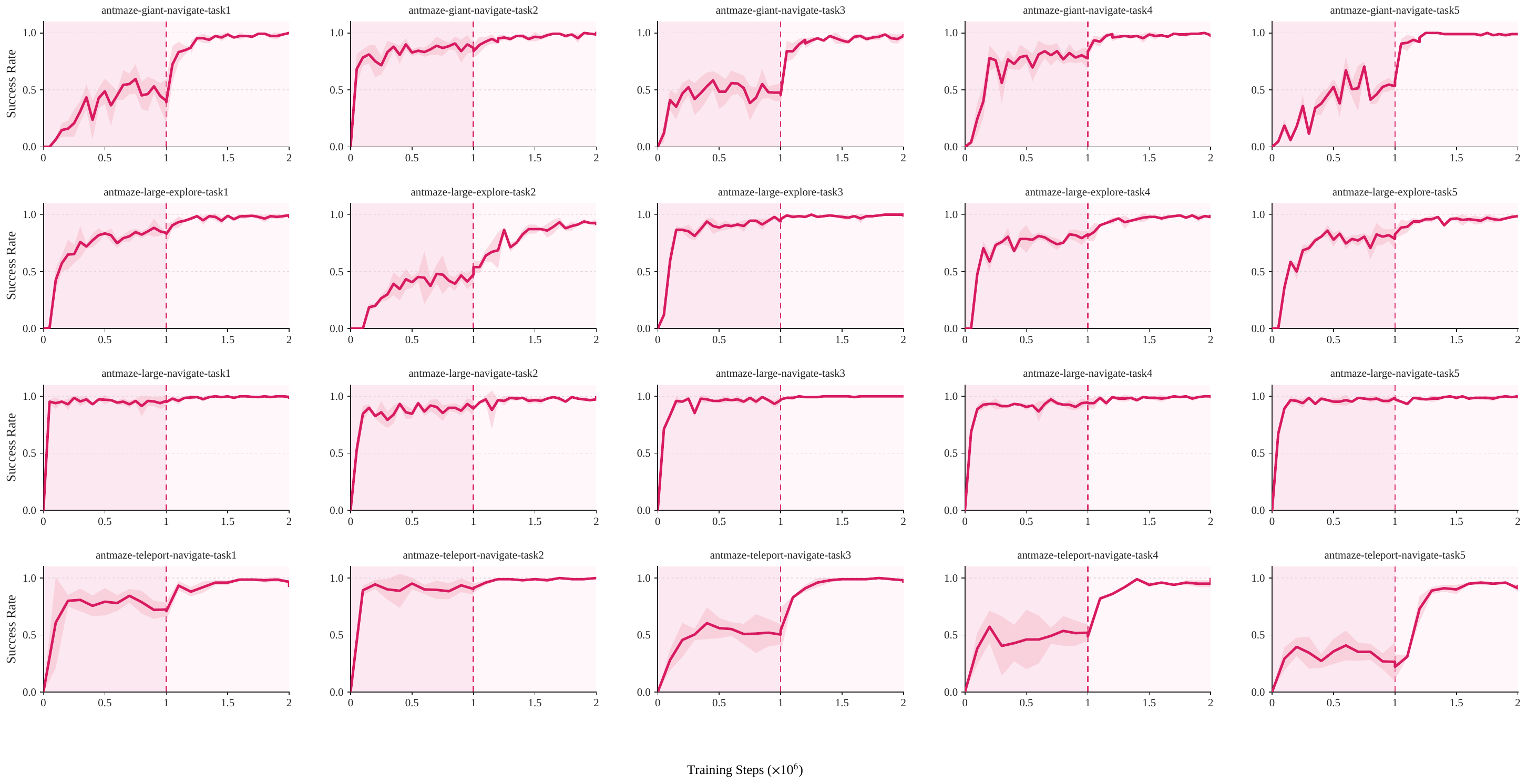}
    \caption{Learning curve on the \texttt{antmaze} task in \texttt{OGBench}}
    \label{fig:learning_curves_AO}
\end{figure*}
\begin{figure*}[h]
    \centering
    \includegraphics[width=1.0\textwidth]{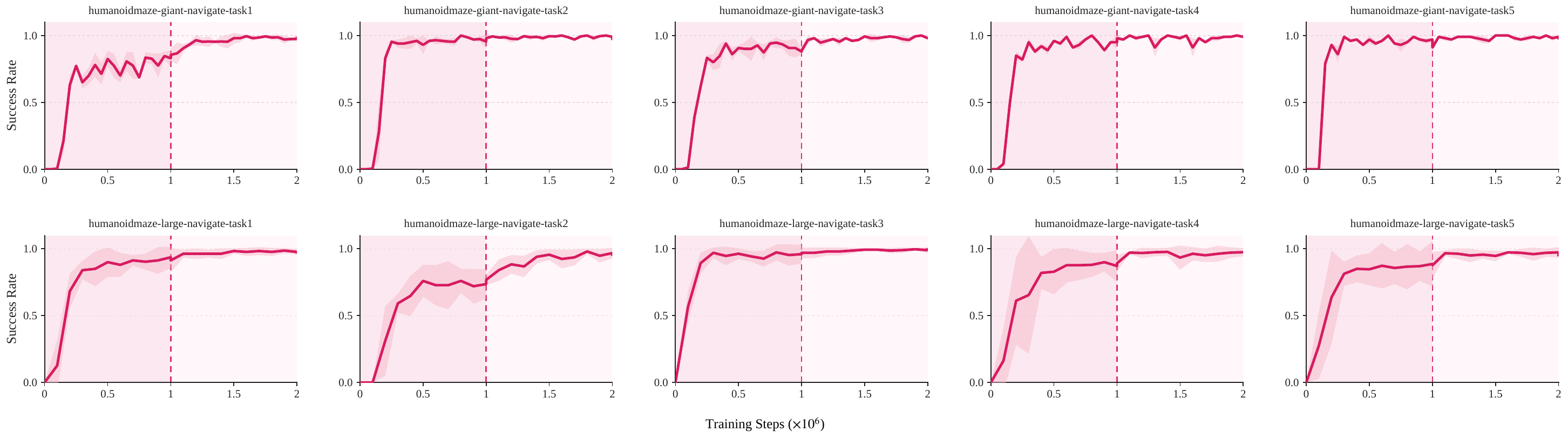}
    \caption{Learning curve on the \texttt{humanoidmaze} task in \texttt{OGBench}}
    \label{fig:learning_curves_HM}
\end{figure*}
\begin{figure*}[h]
    \centering
    \includegraphics[width=1.0\textwidth]{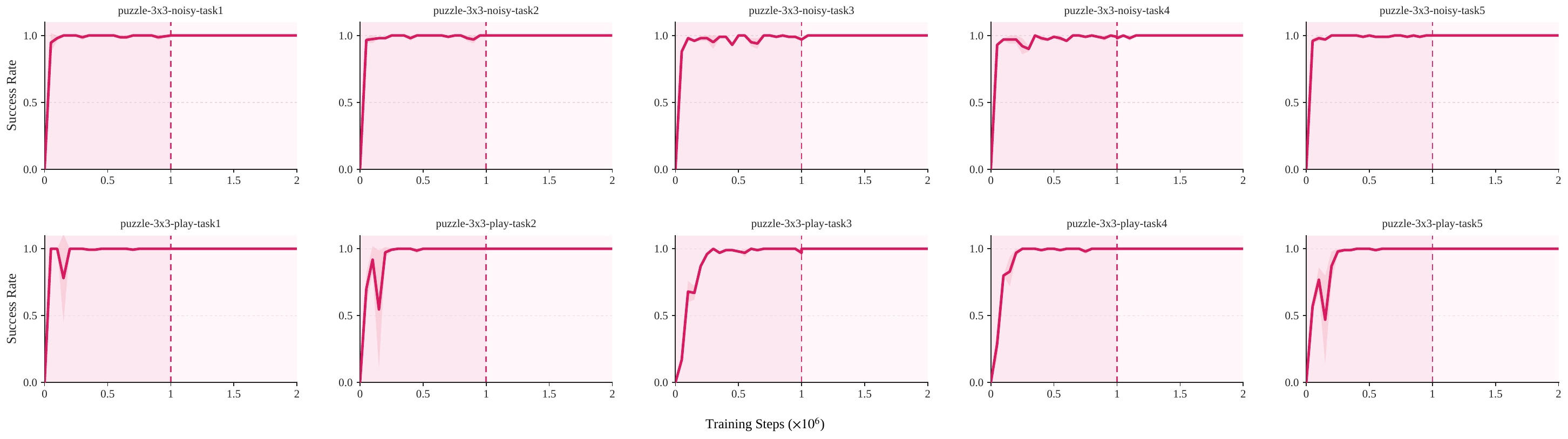}
    \caption{Learning curve on the \texttt{puzzle} task in \texttt{OGBench}}
    \label{fig:learning_curves_p}
\end{figure*}
\begin{figure*}[h]
    \centering
    \includegraphics[width=1.0\textwidth]{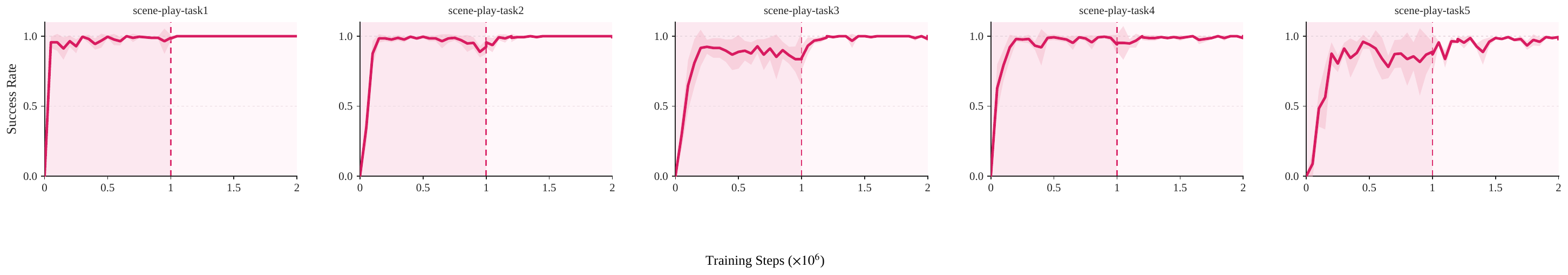}
    \caption{Learning curve on the \texttt{scene} task in \texttt{OGBench}}
    \label{fig:learning_curves_s}
\end{figure*}
\begin{figure*}[h]
    \centering
    \includegraphics[width=0.8\textwidth]{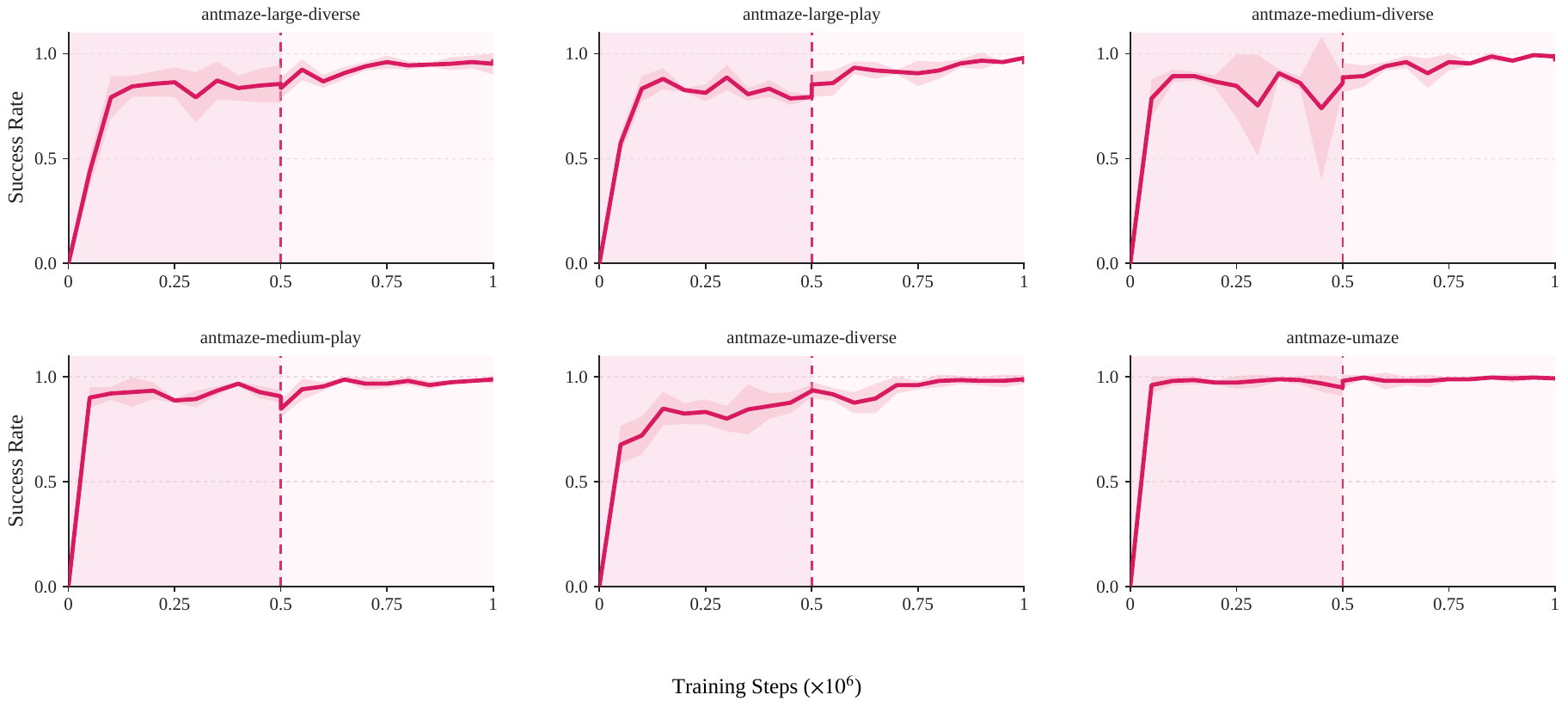}
    \caption{Learning curve on the \texttt{antmaze} task in \texttt{D4RL}}
    \label{fig:learning_curves_D}
\end{figure*}
\clearpage
\subsection{Full Performance Table}\label{app:g2}
Table~\ref{tab:task_wise_results_singletask} provides the full performance table for all evaluated tasks
and all compared baselines. We report normalized scores under the corresponding benchmark protocol.
\begin{table*}[h]
\centering
\footnotesize
\renewcommand{\arraystretch}{1.05}
\caption{Task-wise performance comparison in OGBench (singletask setting).}
\label{tab:task_wise_results_singletask}
\resizebox{\textwidth}{!}{%
\begin{tabular}{
l l|
cccc|
ccccc|
!{\color{magenta!50}\vrule width 0.9pt}
>{\columncolor{MagentaShade}}c
}
\hline

\rowcolor{white}
\multirow{2}{*}{\textbf{Env}}
& \multirow{2}{*}{\textbf{Task}}
& \multicolumn{4}{c|}{\textbf{\textit{Gaussian-based}}}
& \multicolumn{5}{c|}{\textbf{\textit{Flow-based}}}
& \textbf{\color{magenta}Proposed} \\

& 
& \textbf{IQL}
& \textbf{ReBRAC}
& \textbf{ReBRAC-$n$}
& \textbf{HIQL}
& \textbf{FQL}
& \textbf{BFN}
& \textbf{FQL-$n$}
& \textbf{BFN-$n$}
& \textbf{QC}
& \textbf{\color{magenta}VAN-Flow} \\
\hline

\multirow{5}{*}{\texttt{antmaze-large-navigate}}
& task1 & 48\pmv{9} & 91\pmv{10} & 64.0\pmv{25.8} & 93.0\pmv{3.0} & 80\pmv{8} & 93.0\pmv{1.4} & \textbf{94.2}\pmv{2.8} & 83.1\pmv{1.4} & 49.2\pmv{9.9} & \textbf{94.6}\pmv{4.2} \\
& task2 & 42\pmv{6} & \textbf{88}\pmv{4} & 78.3\pmv{5.9} & 78.0\pmv{9.0} & 57\pmv{10} & \textbf{90.2}\pmv{2.8} & 80.0\pmv{3.4} & 88.0\pmv{11.3} & 43.4\pmv{2.4} & \textbf{89.4}\pmv{7.0} \\
& task3 & 72\pmv{7} & 51\pmv{18} & 92.8\pmv{2.3} & 96.0\pmv{2.0} & 93\pmv{3} & 89.1\pmv{7.0} & \textbf{97.5}\pmv{1.4} & 74.5\pmv{5.6} & 71.8\pmv{2.8} & \textbf{98.0}\pmv{2.0} \\
& task4 & 51\pmv{9} & 84\pmv{7} & 90.0\pmv{2.8} & \textbf{94.0}\pmv{2.0} & 80\pmv{4} & 9.0\pmv{1.4} & 89.1\pmv{7.0} & 73.1\pmv{1.4} & 22.0\pmv{2.8} & \textbf{95.4}\pmv{3.1} \\
& task5 & 54\pmv{22} & 90\pmv{2} & 90.8\pmv{2.7} & 94.0\pmv{3.0} & 83\pmv{4} & 72.0\pmv{5.6} & 94.7\pmv{5.6} & 85.1\pmv{1.4} & 61.6\pmv{1.4} & \textbf{98.0}\pmv{2.0} \\
\hline

\multirow{5}{*}{\texttt{antmaze-teleport-navigate}}
& task1 & 17.3\pmv{1.5} & 0.0\pmv{0.0} & 14.0\pmv{12.6} & 37.0\pmv{5.0} & \textbf{39.0}\pmv{9.8} & \textbf{48.0}\pmv{2.8} & 5.0\pmv{1.4} & 23.2\pmv{1.4} & 37.0\pmv{1.2} & \textbf{70.8}\pmv{7.5} \\
& task2 & 27.0\pmv{4.2} & 0.0\pmv{0.0} & 38.4\pmv{21.2} & \textbf{66.0}\pmv{8.0} & 0.0\pmv{0.0} & 40.2\pmv{5.6} & 0.0\pmv{0.0} & 74.4\pmv{14.1} & 56.2\pmv{8.4} & \textbf{91.2}\pmv{3.0} \\
& task3 & 32.0\pmv{5.6} & 27.0\pmv{1.4} & 52.4\pmv{8.6} & 37.0\pmv{5.0} & 47.2\pmv{7.0} & 29.6\pmv{7.0} & \textbf{77.1}\pmv{4.2} & 41.0\pmv{9.9} & 37.0\pmv{3.5} & 54.0\pmv{14.1} \\
& task4 & 26.0\pmv{2.8} & 10.5\pmv{5.6} & 44.0\pmv{4.0} & 30.0\pmv{2.0} & 45.2\pmv{9.8} & 37.5\pmv{3.5} & 0.0\pmv{0.0} & 36.2\pmv{2.8} & 36.6\pmv{7.0} & \textbf{48.0}\pmv{5.5} \\
& task5 & 12.0\pmv{5.6} & 0\pmv{0} & 24.8\pmv{14.6} & 41.0\pmv{8.0} & 0\pmv{0} & \textbf{44.0}\pmv{2.8} & 27.5\pmv{1.4} & 29.2\pmv{1.4} & 31.0\pmv{4.2} & 22.0\pmv{8.0} \\
\hline

\multirow{5}{*}{\texttt{scene-play-v0}}
& task1 & 1.0 & 4.0 & 75.3\pmv{32.2} & 40.0\pmv{4.0} & 69.0 & 99.0 & 22.0 & 64.0 & \textbf{100} & 98.8\pmv{1.8} \\
& task2 & 0.0 & 51.0 & 76.7\pmv{23.0} & 40.4\pmv{9.0} & 51.0 & \textbf{99.0} & 4.0 & 15.0 & 98 & 92.4\pmv{4.5} \\
& task3 & 0.0 & 0.0 & 45.3\pmv{22.3} & 36.0\pmv{5.0} & 68.0 & \textbf{93.0} & 1.0 & 34.0 & 90 & 87.6\pmv{7.9} \\
& task4 & 0.0 & 0.0 & 4.0\pmv{2.5} & 55.0\pmv{5.0} & 75.0 & 86.0 & 50.0 & 91.0 & 93.0 & \textbf{95.2}\pmv{4.6} \\
& task5 & 0.0 & 11.0 & 0.0\pmv{0.0} & 20.0\pmv{5.0} & 25.0 & 38.0 & 14.0 & 71.0 & 43.0 & \textbf{88.8}\pmv{4.6} \\
\hline

\multirow{5}{*}{\texttt{puzzle-3x3-play}}
& task1 & 0.0 & 96.0 & 48.8\pmv{5.4} & 29.0\pmv{4.0} & 99 & 98 & \textbf{100.0} & 88 & \textbf{100.0} & \textbf{100.0}\pmv{0.0} \\
& task2 & 0.0 & 70.0 & 2.8\pmv{1.8} & 11.0\pmv{3.0} & 99 & 98 & \textbf{100.0} & 72 & \textbf{100.0} & \textbf{100.0}\pmv{0.0} \\
& task3 & 0.0 & 27.0 & 0.0\pmv{0.0} & 7.0\pmv{3.0} & \textbf{100.0} & 95 & \textbf{100.0} & 56 & \textbf{100.0} & \textbf{100.0}\pmv{0.0} \\
& task4 & 0.0 & 29.0 & 0.4\pmv{0.9} & 5.0\pmv{1.0} & \textbf{100.0} & 98 & \textbf{100.0} & 37 & \textbf{100.0} & \textbf{100.0}\pmv{0.0} \\
& task5 & 0.0 & 54.0 & 0.0\pmv{0.0} & 8.0\pmv{3.0} & \textbf{100.0} & 97 & 98 & 47 & \textbf{100.0} & \textbf{100.0}\pmv{0.0} \\
\hline

\multicolumn{12}{l}{\textbf{Long-horizon tasks}} \\
\hline

\multirow{5}{*}{\texttt{antmaze-giant-navigate}}
& task1 & 0\pmv{0} & 27\pmv{22} & 0.0\pmv{0.0} & 47.0\pmv{10.0} & 4\pmv{5} & 0.0\pmv{0.0} & 11.2\pmv{1.6} & 0.0\pmv{0.0} & 0.0\pmv{0.0} & \textbf{58.0}\pmv{4.0} \\
& task2 & 1\pmv{1} & 16\pmv{17} & 81.6\pmv{3.0} & 74.0\pmv{5.0} & 9\pmv{7} & 0.0\pmv{0.0} & \textbf{93.6}\pmv{7.0} & 16.1\pmv{8.4} & 4.0\pmv{2.8} & 86.6\pmv{7.0} \\
& task3 & 0\pmv{0} & 34\pmv{22} & 0.0\pmv{0.0} & \textbf{55.0}\pmv{7.0} & 0\pmv{1} & 0.0\pmv{0.0} & 44.0\pmv{8.4} & 0.0\pmv{0.0} & 0.0\pmv{0.0} & \textbf{56.4}\pmv{14.2} \\
& task4 & 0\pmv{0} & 5\pmv{12} & 0.0\pmv{0.0} & 69.0\pmv{5.0} & 14\pmv{23} & 0.0\pmv{0.0} & 57.5\pmv{12.7} & 0.0\pmv{0.0} & 0.0\pmv{0.0} & \textbf{87.2}\pmv{7.5} \\
& task5 & 19\pmv{7} & 49\pmv{22} & 0.0\pmv{0.0} & \textbf{82.0}\pmv{4.0} & 16\pmv{28} & 55.2\pmv{4.7} & 63.6\pmv{1.4} & 7.2\pmv{4.2} & 6.2\pmv{2.8} & 56.8\pmv{4.1} \\
\hline

\multirow{5}{*}{\texttt{humanoidmaze-large-navigate}}
& task1 & 3\pmv{1} & 2\pmv{1} & 0.8\pmv{1.1} & 67.0\pmv{4.0} & 7\pmv{6} & 44.2\pmv{5.6} & 25.3\pmv{1.4} & 45.2\pmv{2.4} & 13.0\pmv{1.4} & \textbf{87.4}\pmv{5.8} \\
& task2 & 0\pmv{0} & 0\pmv{0} & 0.0\pmv{0.0} & 2.0\pmv{3.0} & 0\pmv{0} & 7.2\pmv{1.4} & 0.0\pmv{0.0} & 11.5\pmv{1.4} & 0.0\pmv{0.0} & \textbf{55.4}\pmv{8.0} \\
& task3 & 7\pmv{3} & 2\pmv{1} & 18.5\pmv{8.5} & 88.0\pmv{3.0} & 7\pmv{6} & 26.3\pmv{2.8} & 41.0\pmv{9.8} & 65.2\pmv{9.8} & 30.0\pmv{2.8} & \textbf{96.0}\pmv{2.0} \\
& task4 & 1\pmv{0} & 1\pmv{1} & 0.0\pmv{0.0} & 42.0\pmv{11.0} & 2\pmv{3} & 3.1\pmv{1.4} & 24.5\pmv{2.8} & 45.1\pmv{2.8} & 15.0\pmv{1.4} & \textbf{84.6}\pmv{1.1} \\
& task5 & 1\pmv{1} & 2\pmv{2} & 0.0\pmv{0.0} & 47.0\pmv{10.0} & 1\pmv{3} & 52.3\pmv{11.1} & 10.0\pmv{2.8} & 53.5\pmv{9.9} & 23.2\pmv{2.8} & \textbf{82.6}\pmv{3.0} \\
\hline

\multirow{5}{*}{\texttt{humanoidmaze-giant-navigate}}
& task1 & 0.0\pmv{0.0} & 0.0\pmv{0.0} & 0.0\pmv{0.0} & 13.0\pmv{7.0} & 0.0\pmv{0.0} & 0.0\pmv{0.0} & 0.0\pmv{0.0} & 54.6\pmv{2.8} & 10.2\pmv{2.7} & \textbf{85.4}\pmv{5.0} \\
& task2 & 0.0\pmv{0.0} & 0.0\pmv{0.0} & 0.0\pmv{0.0} & 35.0\pmv{11.0} & 1.2\pmv{1.4} & 0.0\pmv{0.0} & 7.6\pmv{1.4} & 87.1\pmv{5.6} & 3.7\pmv{1.5} & \textbf{97.4}\pmv{1.1} \\
& task3 & 0.0\pmv{0.0} & 0.0\pmv{0.0} & 0.0\pmv{0.0} & 11.0\pmv{4.0} & 0.0\pmv{0.0} & 0.0\pmv{0.0} & 0.0\pmv{0.0} & 70.1\pmv{2.8} & 3.0\pmv{1.2} & \textbf{88.0}\pmv{2.0} \\
& task4 & 0.0\pmv{0.0} & 0.0\pmv{0.0} & 0.0\pmv{0.0} & 2.0\pmv{2.0} & 0.0\pmv{0.0} & 0.0\pmv{0.0} & 0.0\pmv{0.0} & 73.5\pmv{7.0} & 0.0\pmv{0.0} & \textbf{96.6}\pmv{3.0} \\
& task5 & 3.6\pmv{3.5} & 0.0\pmv{0.0} & 0.0\pmv{0.0} & 12.0\pmv{4.0} & 0.0\pmv{0.0} & 0.0\pmv{0.0} & 9.0\pmv{7.0} & 86.1\pmv{2.8} & 44.2\pmv{8.4} & \textbf{97.2}\pmv{1.1} \\
\hline

\multicolumn{12}{l}{\textbf{Noisy tasks}} \\
\hline

\multirow{5}{*}{\texttt{antmaze-large-explore}}
& task1 & 0.0\pmv{0.0} & 90.0\pmv{5.6} & 24.8\pmv{10.4} & 1.0\pmv{3.0} & 2.0\pmv{1.4} & 17.0\pmv{24.0} & 0.0\pmv{0.0} & 5.2\pmv{1.4} & 79.2\pmv{4.2} & \textbf{93.6}\pmv{3.0} \\
& task2 & 0.0\pmv{0.0} & 0.0\pmv{0.0} & 0.0\pmv{0.0} & 0.0\pmv{0.0} & 0.0\pmv{0.0} & 2.0\pmv{2.8} & 0.0\pmv{0.0} & 0.0\pmv{0.0} & 7.0\pmv{4.2} & \textbf{52.0}\pmv{3.4} \\
& task3 & 67.5\pmv{1.4} & 93.4\pmv{1.4} & 95.6\pmv{2.2} & 18.0\pmv{24.0} & 87.6\pmv{4.2} & 89.3\pmv{1.4} & 4.5\pmv{3.5} & 77.2\pmv{4.2} & 90.2\pmv{1.4} & \textbf{96.0}\pmv{4.0} \\
& task4 & 2.0\pmv{1.0} & 0.5\pmv{0.7} & 46.8\pmv{13.2} & 0.0\pmv{0.0} & 10.2\pmv{2.8} & 0.0\pmv{0.0} & 0.0\pmv{0.0} & 0.0\pmv{0.0} & 64.2\pmv{2.8} & \textbf{86.6}\pmv{3.7} \\
& task5 & 6.0\pmv{2.8} & 58.5\pmv{14.5} & 1.6\pmv{0.9} & 0.0\pmv{0.0} & 0.0\pmv{0.0} & 1.0\pmv{1.4} & 1.0\pmv{1.4} & 1.0\pmv{1.4} & 48.6\pmv{7.0} & \textbf{92.2}\pmv{3.0} \\
\hline

\multirow{5}{*}{\texttt{puzzle-3x3-noisy}}
& task1 & 34.5\pmv{1.5} & 9.0\pmv{1.4} & 18.8\pmv{39.8} & 67.0\pmv{10.0} & \textbf{100.0}\pmv{0.0} & \textbf{100.0}\pmv{0.0} & 44.1\pmv{2.2} & \textbf{100.0}\pmv{0.0} & \textbf{100.0}\pmv{0.0} & 96.6\pmv{3.0} \\
& task2 & 53.4\pmv{3.5} & 2.3\pmv{2.8} & 19.6\pmv{41.6} & 54.0\pmv{11.0} & \textbf{100.0}\pmv{0.0} & \textbf{100.0}\pmv{0.0} & 6.0\pmv{8.4} & \textbf{100.0}\pmv{0.0} & \textbf{100.0}\pmv{0.0} & 98.6\pmv{4.1} \\
& task3 & 19.0\pmv{1.4} & 0.0\pmv{0.0} & 0.0\pmv{0.0} & 43.0\pmv{12.0} & \textbf{100.0}\pmv{0.0} & \textbf{100.0}\pmv{0.0} & 84.0\pmv{2.8} & 99.0\pmv{1.4} & \textbf{100.0}\pmv{0.0} & 95.3\pmv{3.0} \\
& task4 & 20.3\pmv{0.7} & 1.5\pmv{1.4} & 0.0\pmv{0.0} & 41.0\pmv{13.0} & \textbf{100.0}\pmv{0.0} & \textbf{100.0}\pmv{0.0} & 18.0\pmv{8.4} & 93.0\pmv{4.2} & \textbf{100.0}\pmv{0.0} & 96.6\pmv{3.0} \\
& task5 & 32.1\pmv{13.8} & 0.0\pmv{0.0} & 0.0\pmv{0.0} & 47.0\pmv{15.0} & \textbf{100.0}\pmv{0.0} & \textbf{100.0}\pmv{0.0} & 13.0\pmv{4.2} & \textbf{100.0}\pmv{0.0} & \textbf{100.0}\pmv{0.0} & 97.4\pmv{1.8} \\
\hline

\end{tabular}
}
\end{table*}

\clearpage
\subsection{Performance Comparison in Denser Reward Environments}\label{app:g3}
We additionally evaluate \textbf{\color{magenta}VAN-Flow} on the D4RL Adroit manipulation suite to
compare against a wide range of generative policy-based offline RL methods,
including diffusion-based approaches (IDQL~\cite{hansen2023dql}, SRPO~\cite{chen2024score}, CAC~\cite{ding2024consistency}) and flow-based
policies (FAWAC, FBRAC, FQL, IFQL).

For flow-based baselines, we consider flow-policy instantiations of existing
offline RL algorithms (AWAC~\cite{nair2020awac}, ReBRAC~\cite{tarasov2023revisiting}, IDQL)~\cite{hansen2023dql} as implemented in
Flow Q-Learning~\cite{park2025fql}.

As shown in Table~\ref{tab:adroit_results}, \textbf{\color{magenta}VAN-Flow} achieves competitive or superior performance
across multiple Adroit tasks, particularly on cloned and expert datasets.
Notably, VAN-Flow outperforms flow-based baselines such as FQL
and IFQL on most tasks, indicating that the observed gains are not merely due to the use of
a generative policy, but arise from variance-aware $n$-step value learning.

Furthermore, \textbf{\color{magenta}VAN-Flow} remains competitive with strong diffusion-based methods,
demonstrating that variance-averse n-step learning integrates naturally with
flow-based policies without sacrificing the advantages typically associated
with diffusion models.
\begin{table*}[h]
\centering
\footnotesize
\renewcommand{\arraystretch}{1.1}
\caption{Performance comparison in D4RL Adroit}
\label{tab:adroit_results}
\resizebox{0.97\textwidth}{!}{%
\begin{tabular}{
l|
>{\columncolor{GrayShade}}c
>{\columncolor{GrayShade}}c
>{\columncolor{GrayShade}}c|
>{\columncolor{GrayShade}}c
>{\columncolor{GrayShade}}c
>{\columncolor{GrayShade}}c|
>{\columncolor{GrayShade}}c
>{\columncolor{GrayShade}}c
>{\columncolor{GrayShade}}c
>{\columncolor{GrayShade}}c
>{\columncolor{MagentaShade}}c
}
\hline

\rowcolor{white}
\multirow{2}{*}{\textbf{Task}}
& \multicolumn{3}{c|}{\textbf{\textit{Gaussian-based}}}
& \multicolumn{3}{c|}{\textbf{\textit{Diffusion Policies}}}
& \multicolumn{4}{c|}{\textbf{\texttt{Flow-based}}}
& \textbf{Proposed} \\

& \textbf{BC}
& \textbf{IQL}
& \textbf{ReBRAC}
& \textbf{IDQL}
& \textbf{SRPO}
& \textbf{CAC}
& \textbf{FAWAC}
& \textbf{FBRAC}
& \textbf{IFQL}
& \textbf{FQL}
& \textbf{Ours} \\
\hline

\texttt{pen-human-v1}
& 71 & 78 & \textbf{103} & 76 & 69 & 64 & 67 & 77 & 71 & 53 & 82.3\pmv{3.5} \\

\texttt{pen-cloned-v1}
& 52 & 83 & 103 & 64 & 61 & 56 & 62 & 67 & 80 & 74 & \textbf{107.5}\pmv{14.9} \\

\texttt{pen-expert-v1}
& 110 & 128 & \textbf{152} & 140 & 134 & 103 & 118 & 119 & 139 & 142 & 146.6\pmv{5.5} \\
\hline

\texttt{door-human-v1}
& 2 & 3 & 0 & \textbf{6} & 3 & 5 & 2 & 4 & 7 & 0 & 2.7\pmv{1.3} \\

\texttt{door-cloned-v1}
& 0 & \textbf{3} & 0 & 0 & 0 & 1 & 0 & 0 & 2 & 2 & \textbf{3.0}\pmv{2.8} \\

\texttt{door-expert-v1}
& 105 & \textbf{107} & 106 & 105 & 105 & 98 & 103 & 104 & 104 & 104 & 104.0\pmv{0.6} \\
\hline

\texttt{hammer-human-v1}
& 3 & 2 & 0 & 2 & 1 & 2 & 2 & 2 & 3 & 1 & \textbf{3.5}\pmv{0.7} \\

\texttt{hammer-cloned-v1}
& 1 & 2 & 5 & 2 & 2 & 1 & 1 & 2 & 2 & \textbf{11} & 3.0\pmv{2.1} \\

\texttt{hammer-expert-v1}
& 127 & 129 & \textbf{134} & 125 & 127 & 92 & 118 & 119 & 117 & 125 & 131\pmv{1.4} \\
\hline

\texttt{relocate-human-v1}
& 0 & 0 & 0 & 0 & 0 & 0 & 0 & 0 & 0 & 0 & 0.2\pmv{0.0} \\

\texttt{relocate-cloned-v1}
& 0 & 0 & 2 & 0 & 0 & 0 & 0 & 1 & 0 & 0 & 0.3\pmv{0.0} \\

\texttt{relocate-expert-v1}
& \textbf{108} & 106 & \textbf{108} & 107 & 106 & 93 & 105 & 105 & 104 & 107 & \textbf{108}\pmv{0.8} \\
\hline

\end{tabular}
}
\end{table*}

\subsection{D4RL Locomotion Tasks}\label{app:g4}
The main experiments focus on sparse-reward, long-horizon tasks because that is where variance amplification under $n$-step returns is most severe. To check that the method is not tied to goal-conditioned tasks, we also ran \textbf{\color{magenta}VAN-Flow} on the dense-reward D4RL locomotion suite with the \texttt{medium} and \texttt{medium-replay} datasets. Neither dataset contains expert demonstrations. We keep the default $\delta=2$ and use $\gamma=0.99$, $I=101$, and the minimum over the two critics, with $n=1$ for HalfCheetah and $n=3$ for Hopper and Walker2d, and $\lambda=0.3$, $3$, and $10$ for the three environments respectively. Results are averaged over five seeds, and baseline numbers are taken from the corresponding papers.

\begin{table}[h]
\centering
\footnotesize
\caption{Normalized returns on D4RL locomotion tasks. Best results are in bold, and results within one standard deviation of the best are also bolded.}
\label{tab:locomotion}
\renewcommand{\arraystretch}{1.1}
\setlength{\tabcolsep}{6pt}
\resizebox{\linewidth}{!}{%
\begin{tabular}{lccccc}
\toprule
\textbf{Task} & \textbf{IQL} & \textbf{ReBRAC} & \textbf{Diffusion-QL} & \textbf{Diffuser} & \textbf{\color{magenta}VAN-Flow} \\
\midrule
\texttt{halfcheetah-medium}        & 47.4 & $\mathbf{65.6\pmv{1.0}}$ & 51.1\pmv{0.5}  & 42.8\pmv{0.3} & 63.9\pmv{2.1} \\
\texttt{halfcheetah-medium-replay} & 44.2 & 51.0\pmv{0.8}            & 47.8\pmv{0.3}  & 37.7\pmv{0.5} & $\mathbf{51.4\pmv{0.3}}$ \\
\texttt{hopper-medium}             & 66.3 & $\mathbf{102.0\pmv{1.0}}$ & 90.5\pmv{4.6} & 74.3\pmv{1.4} & 95.3\pmv{2.5} \\
\texttt{hopper-medium-replay}      & 94.7 & 98.1\pmv{5.3}            & $\mathbf{101.3\pmv{0.6}}$ & 93.6\pmv{0.4} & 100.3\pmv{1.2} \\
\texttt{walker2d-medium}           & 78.3 & 82.5\pmv{3.6}            & 87.0\pmv{0.9}  & 79.6\pmv{0.6} & $\mathbf{88.2\pmv{1.8}}$ \\
\texttt{walker2d-medium-replay}    & 73.9 & 77.3\pmv{7.9}            & $\mathbf{95.5\pmv{1.5}}$ & 70.6\pmv{1.6} & 81.0\pmv{4.1} \\
\midrule
\textbf{Mean}                      & 67.5 & 79.4 & 78.9 & 66.4 & $\mathbf{80.0}$ \\
\bottomrule
\end{tabular}}
\end{table}

VAN-Flow is competitive with the strongest baselines here (mean 80.0 against 79.4 for ReBRAC and 78.9 for Diffusion-QL). It matches or outperforms ReBRAC on all three \texttt{medium-replay} datasets, which mix behavior of very different quality, while the gap is small on the more homogeneous \texttt{medium} datasets. This is consistent with what the method is designed for. When several behavior modes with different returns coexist, return variance separates reliable from unreliable actions, and when the dataset comes from a single behavior policy, variance adds little beyond the expected return and the operator behaves close to neutral.

\section{Off-to-Online Fine-Tuning Results}\label{app:offon}

This appendix presents off-to-online fine-tuning results for the off-to-online evaluation. We compare \textbf{\color{magenta}VAN-Flow} against QC~\cite{li2025qchunking} across long-horizon, noisy, and D4RL AntMaze tasks. Scores in gray indicate performance before online fine-tuning (the offline scores of Tables~\ref{tab:task_wise_results} and~\ref{tab:antmaze_results}), and bold scores denote performance after fine-tuning.
\begin{table}[H]
\centering
\footnotesize
\caption{Offline-to-online fine-tuning performance across diverse long-horizon and noisy control tasks.
Scores are reported before online fine-tuning in gray and after fine-tuning in bold. The offline scores are those reported in Tables~\ref{tab:task_wise_results} and~\ref{tab:antmaze_results}.}

\resizebox{0.9\textwidth}{!}{%
\begin{tabular}{lcc|lcc}
\toprule
\textbf{Task} & \textbf{\color{magenta}VAN-Flow} & \textbf{QC}
& \textbf{Task} & \textbf{\color{magenta}VAN-Flow} & \textbf{QC} \\
\midrule

\texttt{antmaze-teleport-navigate} & \offon{57}{96} & \offon{40}{46} 
& \texttt{scene-play}              & \offon{93}{100} & \offon{84}{99}  \\

\texttt{antmaze-large-navigate}    & \offon{95}{99} & \offon{49}{98}
   
& \texttt{puzzle-3x3-play}         & \offon{100}{100} & \offon{100}{100} \\

\midrule
\multicolumn{3}{l}{\textbf{Long-horizon tasks}}
& \multicolumn{3}{l}{\textbf{Noisy tasks}} \\
\midrule

\texttt{humanoidmaze-large-navigate} & \offon{81}{98} & \offon{16}{16}
& \texttt{antmaze-large-explore}     & \offon{84}{98} & \offon{58}{96} \\

\texttt{humanoidmaze-giant-navigate} & \offon{92}{98} & \offon{12}{67}
& \texttt{puzzle-3x3-noisy}          & \offon{97}{100} & \offon{100}{100} \\

\texttt{antmaze-giant-navigate}      & \offon{68}{98} & \offon{2}{69}
& \texttt{ }     &   &   \\

\midrule
\multicolumn{3}{l}{\textbf{D4RL AntMaze tasks}}
& \multicolumn{3}{l}{} \\
\midrule

\texttt{antmaze-umaze}          & \offon{98.0}{96} & \offon{95.0}{98}
& \texttt{antmaze-medium-diverse} & \offon{89.3}{94} & \offon{64.0}{95} \\

\texttt{antmaze-umaze-diverse}  & \offon{95.6}{98} & \offon{78.0}{92}
& \texttt{antmaze-large-play}    & \offon{86.5}{95} & \offon{80.0}{94} \\

\texttt{antmaze-medium-play}    & \offon{90.7}{98} & \offon{72.0}{98}
& \texttt{antmaze-large-diverse} & \offon{88.6}{98} & \offon{82.0}{93} \\

\bottomrule
\end{tabular}
}
\label{tab:offline_to_online_twocol}
\vspace{-0.1cm}
\end{table}

\paragraph{Discussion.}
Off-to-online evaluation tests whether offline-trained policies serve as strong initializations for online fine-tuning~\cite{11080103}. We perform online post-training from offline agents and compare with QC~\cite{li2025qchunking}, a state-of-the-art horizon-reduction method, to evaluate practical adaptability. As shown in Table~\ref{tab:offline_to_online_twocol}, \textbf{\color{magenta}VAN-Flow} adapts stably online and improves with more interaction on every task that is not already saturated offline. Notably, it matches or slightly surpasses QC in several settings, indicating that variance-averse learning provides high-quality initializations that transfer smoothly from offline to online RL.

\section{Additional Comparisons with Risk-Sensitive Objectives}
\label{app:risk}

We additionally compare the proposed \textbf{\color{magenta}variance-averse expectation} with representative risk-sensitive objectives. To this end, we replace $\mathcal{E}(Z)$ in~\eqref{eq:rejection_sampling} and~\eqref{eq:pro_actor_loss} with alternative aggregation rules, while keeping all other components unchanged.

We consider $\mathbb{E}[Z]$, CVaR, CPW, Norm, Wang, entropic risk, Sharpe ratio, Sortino ratio, and mean--variance. The comparisons are conducted on \texttt{antmaze-teleport-navigate} and \texttt{antmaze-large-explore}, which represent two challenging regimes for variance-aware learning.

As shown in Table~\ref{tab:risk_compare}, \textbf{\color{magenta}VE} consistently outperforms these alternatives across both tasks. Tail-focused objectives such as CVaR can become overly conservative, while scalar penalty-based objectives such as mean--variance or entropic risk do not fully exploit the categorical return structure. By contrast, $\mathcal{E}(Z)$ applies a smooth CDF-based reweighting over the full return distribution, which better aligns with our goal of suppressing unreliable high-variance targets without discarding distributional information.

\begin{table}[H]
\centering
\footnotesize
\caption{
Comparison with representative risk-sensitive objectives.
Scores denote success rates averaged over five random seeds.
}
\label{tab:risk_compare}
\setlength{\tabcolsep}{3.5pt}
\renewcommand{\arraystretch}{1.05}
\resizebox{0.98\linewidth}{!}{
\begin{tabular}{lcccccccccc}
\toprule
\textbf{Task} & $\boldsymbol{\mathbb{E}[Z]}$ & \textbf{CVaR} & \textbf{CPW} & \textbf{Norm} & \textbf{Wang} & \textbf{Entropic} & \textbf{Sharpe} & \textbf{Sortino} & \textbf{Mean--Var.} & \textbf{\color{magenta}VE} \\
\midrule
\texttt{teleport} & $38\!\pm\!7$ & $54\!\pm\!16$ & $37\!\pm\!5$ & $36\!\pm\!9$ & $40\!\pm\!10$ & $44\!\pm\!16$ & $25\!\pm\!11$ & $30\!\pm\!8$ & $46\!\pm\!8$ & $\mathbf{71\!\pm\!7}$ \\
\texttt{explore}  & $84\!\pm\!2$ & $2\!\pm\!1$   & $87\!\pm\!4$ &  $84\!\pm\!8$ & $86\!\pm\!2$  & $86\!\pm\!4$ & $64\!\pm\!27$ & $70\!\pm\!8$ & $85\!\pm\!4$ & $\mathbf{93\!\pm\!3}$ \\
\bottomrule
\end{tabular}
}
\end{table}

\section{Full Results on Sensitivity Analyses}\label{app:H}

We provide extended sensitivity results for the key hyperparameters $n$, $M$, and $K$. All plots report mean $\pm$ standard deviation over random seeds.

\subsection{$n$-step Horizon}\label{app:h1}
\begin{figure}[H]
    \centering
    \includegraphics[width=0.7\linewidth]{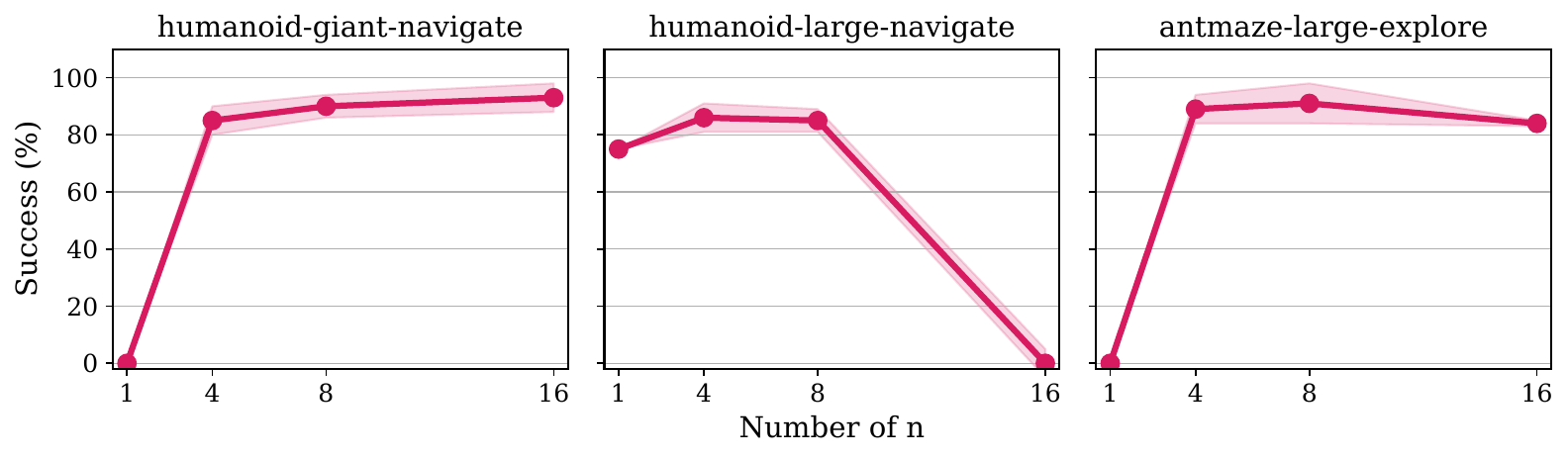}
    \caption{Additional sensitivity results on the number of $n$.}
    \label{fig:nstep}
\end{figure}
We sweep $n$ over a task-dependent grid (e.g., $n\in\{1,4,8,16\}$).
For long-horizon tasks, increasing $n$ initially improves performance by reducing bootstrapping bias,
but overly large $n$ can cause instability due to amplified return variance.
\textbf{\color{magenta}VAN-Flow} remains robust across a wide range of $n$, indicating that variance-aware
aggregation mitigates the typical sensitivity of long-horizon targets.

\subsection{Number of Rejection Samples}\label{app:h2}
\begin{figure}[H]
    \centering
    \includegraphics[width=0.7\linewidth]{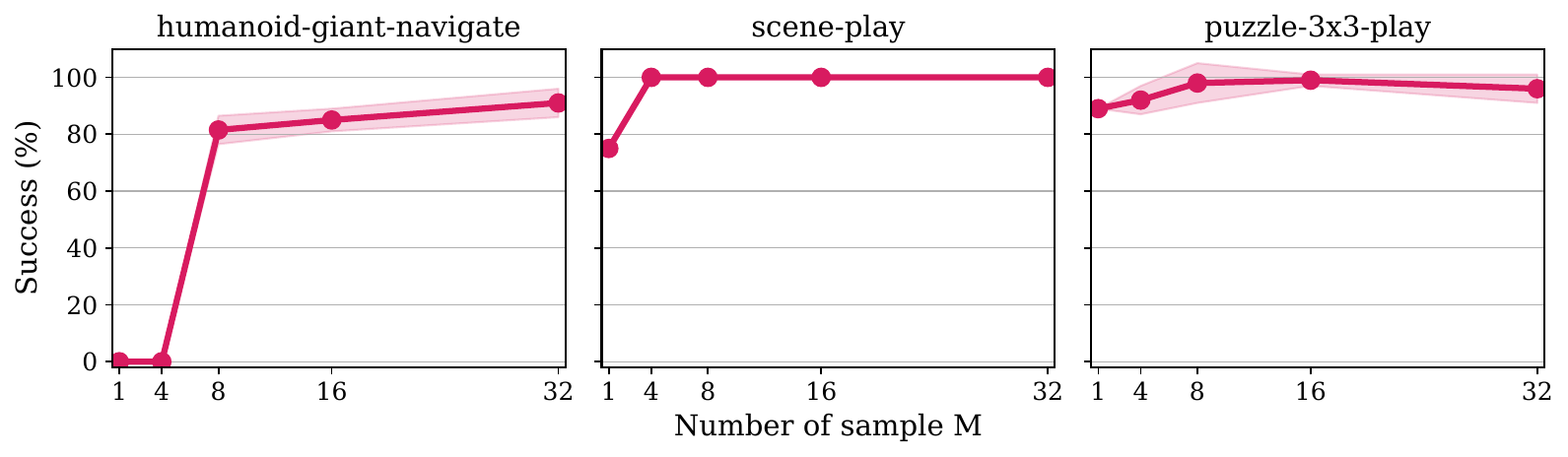}
    \caption{Additional sensitivity results on the number of rejection samples}
    \label{fig:samples}
\end{figure}

We vary the number of rejection candidates $M$. Performance generally improves with $M$ and quickly saturates, so strong performance is achieved without excessive sampling.

\subsection{Number of Flow Steps}\label{app:h3}
\begin{figure}[H]
    \centering
    \includegraphics[width=0.7\linewidth]{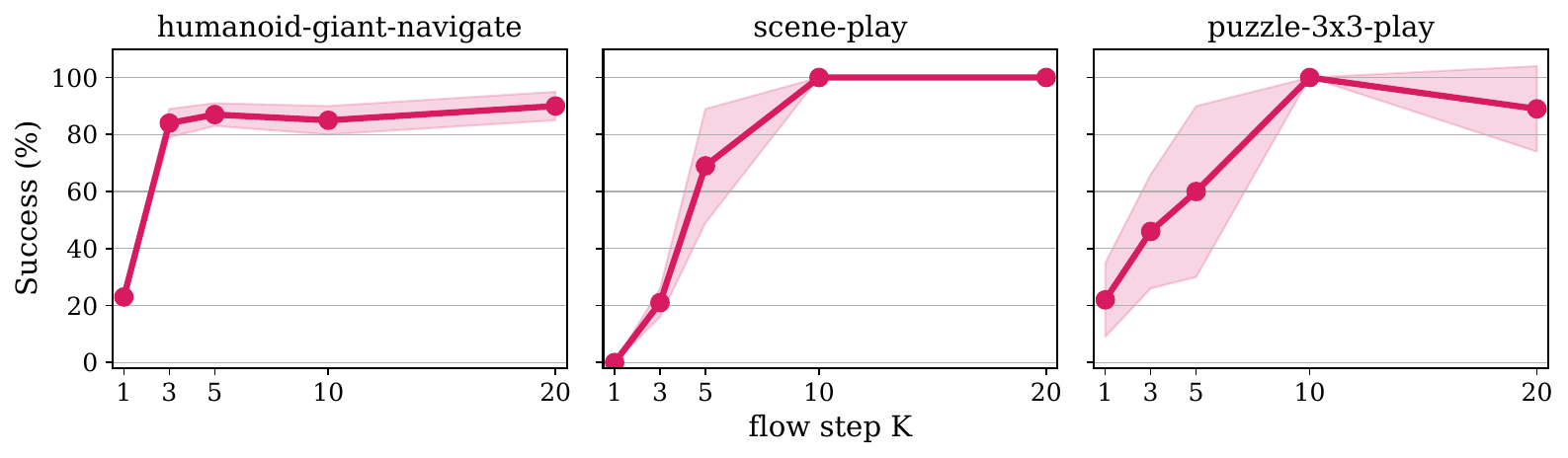}
    \caption{Additional sensitivity results on the number of flow steps.}
    \label{fig:flow_steps}
\end{figure}
We vary the number of flow discretization steps $K$ (e.g., $K\in\{1,3,5,10,20\}$).
A single step is not enough on any task. The maze tasks are already close to their best with three to five steps, whereas \texttt{scene-play} and \texttt{puzzle-3x3-play} need about ten steps, and going beyond that brings little. On \texttt{humanoidmaze-large}, Table~\ref{tab:flow_vs_diffusion} even shows a mild decline for large $K$. Precise ODE integration is therefore not required, and $K=10$ is a safe default across tasks.

\subsection{Aversion Coefficient $\delta$}\label{app:h4}
Table~\ref{tab:delta_sweep} sweeps $\delta$ on the first task of three environments. Too little aversion leaves the variance unsuppressed, and too much aversion makes the policy overly conservative on some tasks, most visibly on \texttt{antmaze-large-explore}, where the success rate collapses for $\delta\ge5$. The default $\delta=2$ is at or near the best value on all three tasks and avoids this collapse, which is why we use it as the default. The one exception is \texttt{antmaze-teleport-navigate}, where the stochastic teleporters put irreducible variance into multi-step targets and a stronger aversion ($\delta=7$, Table~\ref{tab:vanflow_hparams}) works better. The trend matches Theorem~\ref{thm:VaE_mps}: the spectral form interpolates between the plain expectation at $\delta=0$ and worst-case selection as $\delta\to\infty$.

\begin{table}[H]
\centering
\footnotesize
\caption{Success rate for different values of the aversion coefficient $\delta$ (task1 of each environment, five seeds).}
\label{tab:delta_sweep}
\renewcommand{\arraystretch}{1.1}
\setlength{\tabcolsep}{5pt}
\resizebox{\linewidth}{!}{%
\begin{tabular}{lccccccc}
\toprule
\textbf{Task} & $\delta=0$ & $1$ & $2$ (default) & $3$ & $4$ & $5$ & $10$ \\
\midrule
\texttt{antmaze-giant-navigate}  & 44.0\pmv{11.8} & 53.6\pmv{6.9} & $\mathbf{58.0\pmv{4.0}}$ & 30.0\pmv{3.3} & 29.5\pmv{6.7} & 31.6\pmv{4.1} & 30.4\pmv{5.9} \\
\texttt{antmaze-large-explore}   & 82.0\pmv{8.6}  & 89.5\pmv{3.4} & $\mathbf{93.6\pmv{3.0}}$ & 90.8\pmv{4.5} & 65.2\pmv{33.0} & 0.0\pmv{0.0} & 0.0\pmv{0.0} \\
\texttt{humanoidmaze-large-navigate} & 78.0\pmv{3.8} & 82.0\pmv{7.7} & $\mathbf{87.4\pmv{5.8}}$ & 83.2\pmv{3.2} & 81.6\pmv{6.1} & 82.6\pmv{2.7} & 84.8\pmv{2.7} \\
\bottomrule
\end{tabular}}
\end{table}

\subsection{Categorical Support}\label{app:h5}
The support is set by the rule $[V_{\min},V_{\max}]=[r_{\min}/(1-\gamma),\,r_{\max}/(1-\gamma)]$ rather than tuned per task. Table~\ref{tab:vmin_sweep} varies $V_{\min}$ with the number of atoms fixed. Underestimating the support clips the Bellman targets and breaks training, whereas overestimating it only lowers the resolution and costs little. The practical guideline is simply to make sure the support covers the return range, and to err on the wide side.

\begin{table}[H]
\centering
\footnotesize
\caption{Success rate for different lower bounds $V_{\min}$ of the categorical support ($V_{\max}=0$, $I=101$, five seeds).}
\label{tab:vmin_sweep}
\renewcommand{\arraystretch}{1.1}
\setlength{\tabcolsep}{6pt}
\begin{tabular}{lccccc}
\toprule
\textbf{Task} & $-100$ & $-170$ & $-200$ (rule) & $-230$ & $-300$ \\
\midrule
\texttt{antmaze-large-navigate} & 0.0\pmv{0.0} & 94.4\pmv{3.7}  & 94.6\pmv{4.2} & 93.2\pmv{4.7} & 96.0\pmv{1.4} \\
\texttt{puzzle-3x3-noisy}       & 0.0\pmv{0.0} & 81.1\pmv{22.5} & 98.6\pmv{2.8} & 99.6\pmv{0.8} & 94.2\pmv{11.0} \\
\bottomrule
\end{tabular}
\end{table}

\subsection{Number of Atoms}\label{app:h6}
Table~\ref{tab:atom_sweep} varies the number of atoms $I$ over roughly a tenfold range and reports the training wall-clock time next to each result. Performance is stable once the resolution is sufficient. The only exception is \texttt{scene-play} with $I=51$, where the coarse grid under-resolves the return distribution, and doubling the atom count restores the performance. Wall-clock time barely changes across the range, so the default $I=101$ provides enough resolution at no meaningful cost.

\begin{table}[H]
\centering
\footnotesize
\caption{Success rate and training wall-clock time (hours) for different numbers of atoms $I$ (five seeds).}
\label{tab:atom_sweep}
\renewcommand{\arraystretch}{1.1}
\setlength{\tabcolsep}{5pt}
\resizebox{\linewidth}{!}{%
\begin{tabular}{lcccc}
\toprule
\textbf{Task} & $I=51$ & $I=101$ (default) & $I=201$ & $I=501$ \\
\midrule
\texttt{antmaze-large-navigate}     & 91.6\pmv{4.6} (1.28h)  & 94.6\pmv{4.2} (1.26h) & 92.4\pmv{3.9} (1.23h) & 95.6\pmv{2.3} (1.31h) \\
\texttt{humanoidmaze-giant-navigate} & 79.0\pmv{9.5} (2.28h) & 85.4\pmv{5.0} (2.32h) & 83.5\pmv{6.5} (2.24h) & 79.5\pmv{9.2} (2.26h) \\
\texttt{puzzle-3x3-noisy}           & 98.4\pmv{2.3} (1.22h)  & 98.6\pmv{2.8} (1.19h) & 98.8\pmv{1.6} (1.19h) & 97.2\pmv{3.0} (1.22h) \\
\texttt{scene-play}                 & 20.5\pmv{18.6} (1.55h) & 92.4\pmv{4.5} (1.57h) & 86.8\pmv{5.9} (1.58h) & 86.8\pmv{4.3} (1.57h) \\
\bottomrule
\end{tabular}}
\end{table}

\section{Broader Impact and Limitation}\label{app:limit}

\paragraph{Broader Impact}
This work improves the reliability of offline reinforcement learning in sparse and long-horizon environments by reducing sensitivity to high-variance multi-step targets. Such improvements may benefit applications where online exploration is expensive or unsafe, including robotics, autonomous navigation~\cite{10969773,kang2025offlineramp,11080103,sangeunksae}, UAV control~\cite{eo2024offline}, and decision-support systems trained from logged data~\cite{lee2026ai,lee2025deep}.

At the same time, stronger offline RL methods could enable automated decision-making systems with greater real-world impact. If trained on biased or incomplete datasets, such systems may inherit or amplify undesirable behaviors present in the data. Careful dataset curation, evaluation, and deployment safeguards therefore remain important when applying offline RL methods in safety-critical settings.

\paragraph{Limitation}
The proposed approach introduces modest additional computation due to the use of rejection sampling and distributional aggregation. However, these operations are fully parallelizable and align well with modern accelerator hardware, making the practical overhead small in typical training settings. 

Another limitation is that the method relies on categorical distributional critics to represent return distributions. While such critics are widely used in distributional RL, extending the proposed aggregation operator to other distributional representations (e.g., quantile-based critics) remains an interesting direction for future work.

Finally, our reliability hypothesis targets return variance that comes from heterogeneous behavior in the dataset. Most of the benchmarks we use have deterministic dynamics, so this is the dominant source of variance there. In environments with strongly stochastic dynamics, part of the return variance is irreducible, and penalizing it indiscriminately can make the policy more conservative than necessary even for well-supported actions. A state-dependent $\delta$ that relaxes the aversion where the remaining variance is unavoidable, or a $\delta$ that is annealed during online fine-tuning, would be natural extensions. Both fit the framework without modification, but estimating such a coefficient reliably needs additional uncertainty machinery, and we leave this to future work.

\end{document}